\newcommand{\x}{\mathbf x}
\newcommand{\y}{\mathbf y}
\newtheorem{theorem}[]{Theorem}
\newtheorem{definition}[]{Definition}
\newtheorem{corollary}[]{Corollary}
\newtheorem{assumption}[]{Assumption}
\crefname{assumption}{Assumption}{assumption}
\newtheorem{lemma}[]{Lemma}
\newtheorem{remark}[]{Remark}
\newtheorem*{definition*}{Definition}
\DeclareMathOperator*{\argmax}{arg\,max}
\DeclareMathOperator*{\argmin}{arg\,min}
\newcommand{\tv}{\mathsf{TV}}
\newcommand{\BC}{\texttt{BC} }
\newcommand{\MM}{\texttt{MM} }
\newcommand{\Nexp}{N_{\mathrm{exp}}}
\newcommand{\Nrep}{N_{\mathrm{replay}}}
\newcommand{\Drep}{D_{\mathrm{replay}}}
\newcommand{\piBC}{\pi^{\texttt{BC}}}
\newcommand{\piRE}{\pi^{\texttt{RE}}}
\newcommand{\piEMM}{\pi^{\texttt{MM}}}
\newcommand{\mmd}{\texttt{Mimic-MD} }
\newcommand{\RE}{\texttt{RE} }
\newcommand{\dem}{\pi^E}
\newcommand{\Rlint}{\mathcal{R}_{\mathrm{lin,t}}}
\newcommand{\Rlin}{\mathcal{R}_{\mathrm{lin}}}
\newcommand{\Fmax}{F_{\mathrm{max}}}
\newcommand{\Nmax}{\mathcal{N}_{\mathrm{max}}}
\newcommand{\thetaBC}{\widehat{\theta}^{\textsf{BC}}}
\newcommand{\ent}{\mathcal{E}_{\Theta,n,\delta}}
\newcommand{\eNtH}{\mathcal{E}_{\Theta_t,\Nexp,\delta/H}}
\newcommand{\eref}[1]{(\ref{#1})}
\newcommand{\sref}[1]{Sec. \ref{#1}}
\newcommand{\figref}[1]{Fig. \ref{#1}}
\definecolor{expert}{HTML}{008000}
\definecolor{error}{HTML}{f96565}
\newcommand{\norm}[1]{\left\lVert #1 \right\rVert}
\newcommand{\tikzAngleOfLine}{\tikz@AngleOfLine}
\def\tikz@AngleOfLine(#1)(#2)#3{%
\pgfmathanglebetweenpoints{%
\pgfpointanchor{#1}{center}}{%
\pgfpointanchor{#2}{center}}
\pgfmathsetmacro{#3}{\pgfmathresult}%
}
\declaretheoremstyle[
    headfont=\normalfont\bfseries, 
    bodyfont = \normalfont\itshape]{mystyle}
\Crefname{algocf}{Algorithm}{Algorithms}
\title{ Minimax Optimal Online Imitation Learning via Replay Estimation }
\author{%
  Gokul Swamy\thanks{Equal contribution. Correspondence to \texttt{gswamy@cmu.edu} and \texttt{nived.rajaraman@berkeley.edu}.} \\
  Carnegie Mellon University\\
  \texttt{gswamy@cmu.edu} \\
  \And
  Nived Rajaraman$^*$ \\
  UC Berkeley \\
  \texttt{nived.rajaraman@berkeley.edu} \\
  \And
  Matthew Peng \\
  UC Berkeley \\
  \And
  Sanjiban Choudhury \\
  Cornell University \\
  \And
  J. Andrew Bagnell \\
  Aurora Innovation and Carnegie Mellon University \\
  \And
  Zhiwei Steven Wu \\
  Carnegie Mellon University
  \And
  Jiantao Jiao \\
  UC Berkeley \\
  \And
  Kannan Ramchandran \\
  UC Berkeley
}
\begin{document}

\maketitle

\begin{abstract}
Online imitation learning is the problem of how best to mimic expert demonstrations, given access to the environment or an accurate simulator. Prior work has shown that in the \textit{infinite} sample regime, exact moment matching achieves value equivalence to the expert policy. However, in the \textit{finite} sample regime, even if one has no optimization error, empirical variance can lead to a performance gap that scales with $H^2 / \Nexp$ for behavioral cloning and $H / \sqrt{\Nexp}$ for online moment matching, where $H$ is the horizon and $\Nexp$ is the size of the expert dataset. We introduce the technique of \textit{replay estimation} to reduce this empirical variance: by repeatedly executing cached expert actions in a stochastic simulator, we compute a \textit{smoother} expert visitation distribution estimate to match. In the presence of parametric function approximation, we prove a meta theorem reducing the performance gap of our approach to the \textit{parameter estimation error} for offline classification (i.e. learning the expert policy). In the tabular setting or with linear function approximation, our meta theorem shows that the performance gap incurred by our approach achieves the optimal $\widetilde{O} \left( \min({H^{3/2}} / {\Nexp}, {H} / {\sqrt{\Nexp}} \right))$ dependency, under significantly weaker assumptions compared to prior work. We implement multiple instantiations of our approach on several continuous control tasks and find that we are able to significantly improve policy performance across a variety of dataset sizes.
\end{abstract}

\section{Introduction}
 In \textit{online} imitation learning (IL), one is given access to \textit{(a)} a fixed set of expert demonstrations and \textit{(b)} an environment or simulator to perform rollouts in. Many online IL approaches fall under the umbrella of solving a \textit{moment matching} problem between learner and expert trajectory distributions \cite{ziebart2008maximum, ho2016generative},
 \begin{equation}
    \min_{\pi \in \Pi} \sup_{f \in \mathcal{F}} \mathbb{E}_{\pi}[f(s, a)] - \mathbb{E}_{\dem}[f(s, a)], \label{eq:mm}
\end{equation}
where $\mathbb{E}_\pi [\cdot]$ denotes the expectation over a random trajectory $\{ (s_1,a_1),\cdots,(s_H,a_H) \}$ generated by rolling out $\pi$. \citet{swamy2021moments} show that for an appropriate choice of $\mathcal{F}$, approximate solutions of \eqref{eq:mm} have a performance gap linear in the horizon, the gold standard for sequential problems. However, a key assumption in their work is that expert moments ($\mathbb{E}_{\dem}[f(s, a)]$) can be estimated arbitrarily well from the available demonstrations. The resulting bounds are therefore purely a function of \textit{optimization error}. Moving to the finite-sample regime introduces an additional concern: the \textit{statistical error} that stems from the randomness in the data-generating process. When a small set of demonstrations are used in an adversarial optimization procedure like \eref{eq:mm}, the learner may choose to take incorrect actions in order to match noisy moments estimated from the dataset, leading to policies that perform poorly at test-time. Ideally, one would solve this problem by querying the expert for more demonstrations, as in the work of \citet{ross2011reduction}. However, when we are unable to do so, we still have to grapple with the question of ``\textit{how can we smooth out a noisy empirical estimate of expert moments?}"
\begin{figure*}[t]
\centering
\includegraphics[width=0.9\textwidth]{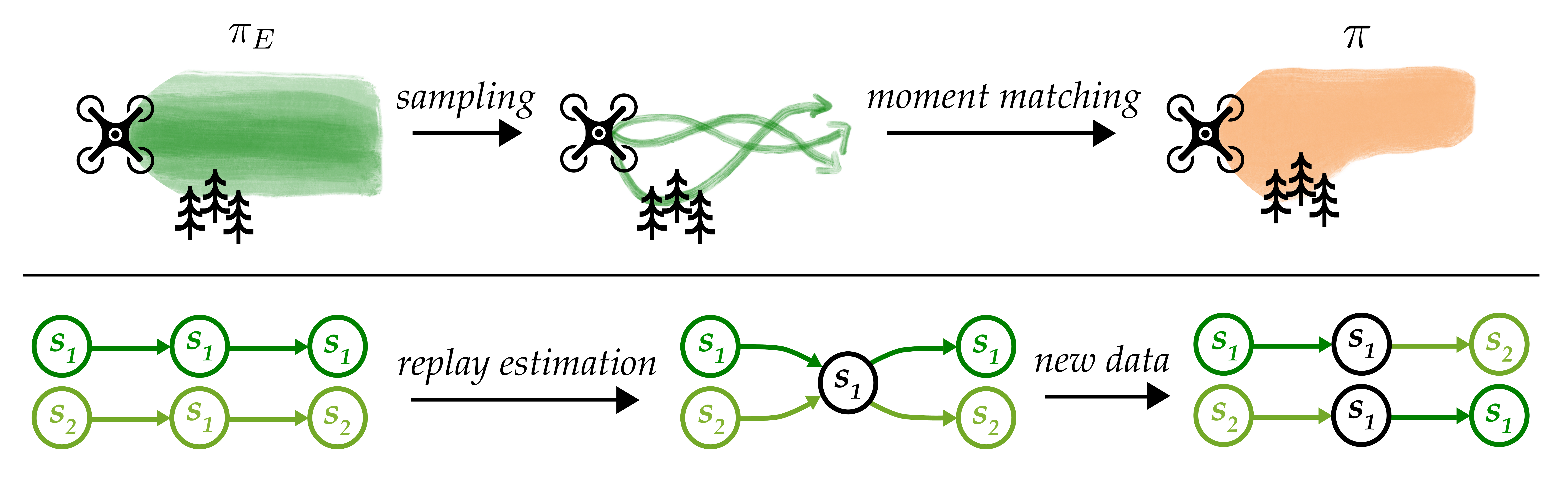}
\caption{\textit{Top}: Attempting to exactly match a finite-sample approximation of expert moments can cause a learner to reproduce chance occurrences (e.g. the relatively unlikely flight through the trees). This can lead to policies that perform poorly at test time (e.g. because the learner flies through the trees relatively often). \textit{Bottom}: Replay estimation reduces the empirical variance in expert demonstrations by repeatedly executing observed expert actions in a stochastic simulator. By generating new trajectories (e.g. $s_1 \rightarrow s_1 \rightarrow s_2$ on the right) that are consistent with expert actions, one can augment the original demonstration set and compute expert moments more accurately.
\label{fig:ffig}}
\end{figure*}

Our answer to this question is the technique of \textit{replay estimation}. In its most basic form, replay estimation consists of repeatedly executing observed expert actions within a stochastic simulator, terminating rollouts whenever one ventures out of the support of the expert demonstrations. Effectively, this approach stitches together parts of different trajectories to generate a smoothed estimate of expert moments. By using the simulator where we know the expert's actions, we can generate more diverse training data that is nevertheless consistent with the expert demonstrations. We argue that this technique is at once conceptually simple, practically feasible, and minimax optimal in several settings. Formally, we prove that in the worst case, behavioral cloning has a performance gap $\propto {H^2} / {\Nexp}$, online empirical moment matching, $\propto {H} / {\sqrt{\Nexp}}$, and our approach of replay estimation $\propto \min \{ {H^{3/2}} / {\Nexp}, {H} / {\sqrt{\Nexp}} \}$ in the tabular setting as well as with linear function approximation.
Our key insight is that \textit{\textbf{we can use a combination of simulated and empirical rollouts to optimally estimate expert moments.}} We can then plug this improved estimate into a variety of moment-matching algorithms, for strong test-time policy performance. More explicitly, our work makes the following three contributions:
\begin{enumerate}
    \item We extend replay estimation (\RE) \citet{rajaraman2020toward} beyond the tabular and deterministic setting by introducing the notion of a \textit{soft membership oracle} and \textit{prefix weights}.\vspace{-0.3em}
    \item We show how to instantiate the membership oracles for IL with parametric function approximation and prove a meta-theorem relating the imitation gap of \RE to the parameter estimation error for offline classification on the dataset (\Cref{thm:gfa}). Instantiating our main result in the case of linear function approximation, we show how to achieve the best known imitation gap of $\widetilde{O} \left( H^{3/2} d^{5/4}/\Nexp \right)$ under significantly weaker assumptions compared to prior work \cite{rajaraman2021on}. \vspace{-0.3em}
    \item We give multiple practical options for constructing performant membership oracles. We then use these approximate oracles to significantly improve the performance of online IL on several continuous control tasks across a variety of dataset sizes. We also investigate the differences between our proposed oracles.
\end{enumerate}

We sketch the benefits of replay estimation before providing theoretical and empirical evidence to support our claims.

\section{The Replay Estimator} \label{sec:RE}
We begin with a tabular vignette to illustrate our key insight in greater detail. We compare two algorithms: offline behavioral cloning (\texttt{BC}) \cite{pomerleau1989alvinn} and online moment matching (\texttt{MM}) \cite{swamy2021moments}. Throughout, we focus on learning policies from finite samples.

\textbf{Suboptimality of Empirical Moment Matching.} Consider the MDP in \figref{fig:mm_sucks}, where the expert always takes the green action. Doing so puts them in $s_1$ or $s_2$ with equal probability. Given that the expert is deterministic and there are few states, \BC could easily recover the expert's policy by learning to simply output the observed green action on both states, even when there are very few demonstrations. 

\begin{wrapfigure}{r}{0.4\textwidth}
    \begin{tikzpicture}[scale=1, transform shape]
    \node (a) [draw, very thick, circle, color=expert] at (0.0, 1) {$s_1$};
    \node (c) [draw, very thick, circle, color=expert]  at (1, 1) {$s_2$};
    \node (f) [very thick, color=expert] at (0.5, 2.25) {$\text{Unif} (\{ s_1, s_2 \})$};
    \draw [->, very thick, color=expert] (a) to[bend left] (f);
    \draw [->, very thick, color=expert] (c) to[bend right] (f);

    \node [circle, minimum size=0.5cm](g) at ([{shift=(270:0.4)}]c){};
    \coordinate (h) at (intersection 2 of c and g);
    \coordinate (i) at (intersection 1 of c and g);
    \tikzAngleOfLine(g)(i){\AngleStart}
    \tikzAngleOfLine(g)(h){\AngleEnd}
    \draw[very thick,->, color=error]%
    let \p1 = ($ (g) - (i) $), \n2 = {veclen(\x1,\y1)}
    in
        (g) ++(270:0.5) node{}
        (i) arc (\AngleStart-360:\AngleEnd:\n2);
    \end{tikzpicture}
\includegraphics[width=0.18\textwidth]{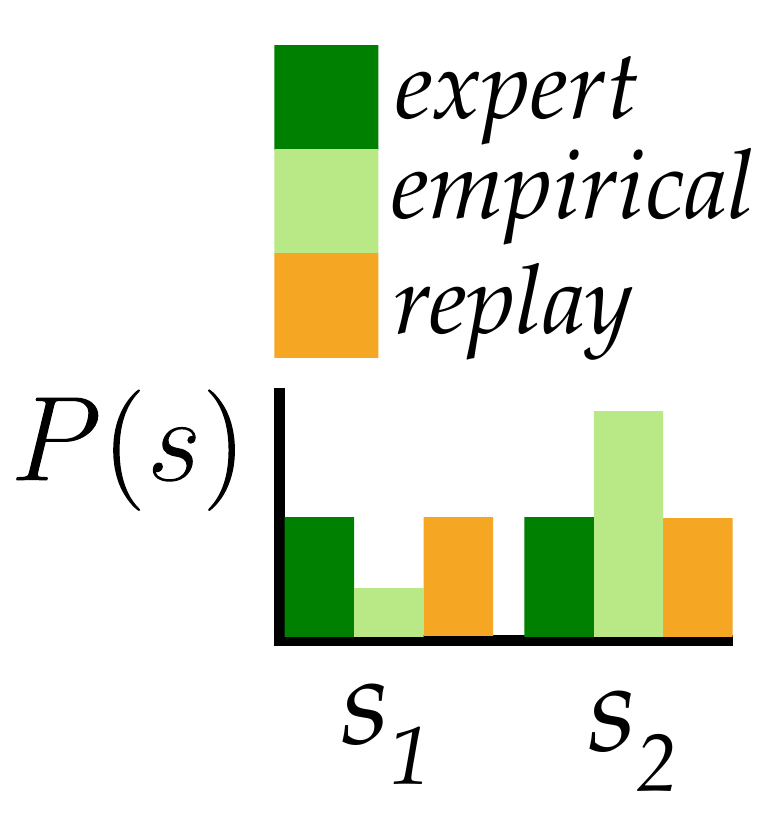}

    \caption{An MDP where the expert always takes the green action that puts them in the uniform distribution over $s_1$ and $s_2$. Because of full expert support, \BC will learn to always take this action at both states. However, if the empirical state distribution is more tilted towards $s_2$, \MM will take the incorrect red action.}
    \label{fig:mm_sucks}
\end{wrapfigure}

Now, what would happen if we tried to match moments of the expert's state-action visitation distribution for this problem? It is rather unlikely that we see \textit{exactly} equal probabilities for both states in the observed data. If by chance we see $s_2$ more than we see $s_1$, the learner might realize that the only way to match the observed state distribution (a prerequisite for matching the observed state-action distribution) is to occasionally take the red action at $s_2$. In general, this could cause the learner to spend an unnecessary amount of time in $s_2$ which may be undesirable (e.g. if $s_2$ corresponds to the tree-filled area in \figref{fig:ffig} (top)). The core issue we hope to illustrate in this example is that by treating the empirical estimate of the expert's behavior as perfectly accurate, distribution matching can force the learner to take incorrect actions to minimize training error, leading to test-time performance degradation. As we will discuss in \sref{sec:theory}, this can lead to slow statistical rates $\propto H / \sqrt{\Nexp}$.

\textbf{Suboptimality of Behavioral Cloning.} Because it does not account for the covariate shift that results from policy action choices, behavioral cloning can lead to a quadratic compounding of errors and poor test time performance \cite{ross2011reduction}. Consider, for example, the MDP in \figref{fig:bc_sucks}.

\begin{wrapfigure}{l}{0.6\textwidth}
    \centering
    \begin{tikzpicture}[scale=1, transform shape]
    \node (a) [draw, very thick, circle, color=expert] at (0.0, 0) {$s_1$};
    \node (b) [draw, very thick, circle, color=expert] at (1.5, 0) {$s_2$};
    \node (c) [draw, very thick, circle, color=expert]  at (3, 0) {$s_n$};
    \node (d) [circle]  at (2.25, 0) {$\ldots$};
    \node (e) [draw, very thick, circle, color=error]  at (1.5, -1.5) {$s_x$};
    \node (f) [very thick, color=expert] at (1.5, 1) {$\text{Unif} (\{ s_1 ,\dots, s_n \})$};
    \draw [->, very thick, color=expert] (a) to (f);
    \draw [->, very thick, color=expert] (b) to (f);
    \draw [->, very thick, color=expert] (c) to (f);
    \draw [->, very thick, color=error] (a) to (e);
    \draw [->, very thick, color=error] (b) to (e);
    \draw [->, very thick, color=error] (c) to (e);

    \node [circle, minimum size=0.5cm](g) at ([{shift=(270:0.4)}]e){};
    \coordinate (h) at (intersection 2 of e and g);
    \coordinate (i) at (intersection 1 of e and g);
    \tikzAngleOfLine(g)(i){\AngleStart}
    \tikzAngleOfLine(g)(h){\AngleEnd}
    \draw[very thick,->, color=error]%
    let \p1 = ($ (g) - (i) $), \n2 = {veclen(\x1,\y1)}
    in
        (g) ++(270:0.5) node{}
        (i) arc (\AngleStart-360:\AngleEnd:\n2);
    \end{tikzpicture}
    \includegraphics[width=0.24\textwidth]{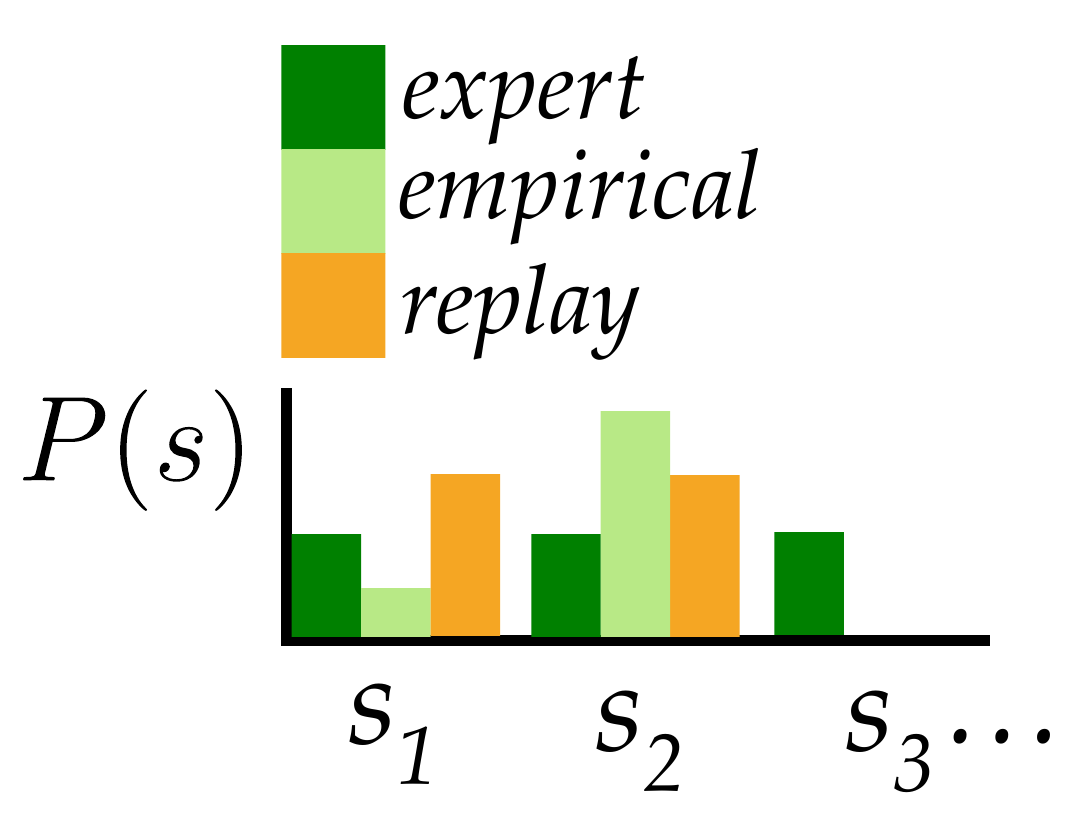}
    \caption{The expert always takes the green action, which places it in a uniform distribution over $s_1, \dots, s_n$. At states where we have demonstrations (e.g. $s_1$, $s_2$), both \BC and \MM will take the same, correct action. However, at states where we have no demonstrations (e.g. $s_3$), \MM will correctly take the green action to get back to states with demonstration support, while \BC might not.}
    \label{fig:bc_sucks}
\end{wrapfigure}

Let us assume that the expert always takes the green action, dropping them in a state in the top row with uniform probability. In a small demonstration set, we might not see expert actions at some states in the top row. At all such states, \BC will have no idea of what to do. In contrast, \MM will take the green action as doing so might send the learner back to a state with positive demonstration support. Thus for this problem, \MM will recover the optimal policy while \BC will not. As we will discuss in \sref{sec:theory}, this leads to errors $\propto H^2 / \Nexp$ in the worst case.

\textbf{Replay Estimation.} The previous two examples show us that there exist simple MDPs for which \BC or \MM will not recover the expert's policy. This begs the question: is it possible to do \textit{better than both worlds} and recover the optimal policy on both problems with a single algorithm? It turns out it is indeed possible to do so, via the technique of \textit{replay estimation}. In its simplest form, replay estimation involves playing a cached expert action whenever possible and re-starting the rollout if one ventures out of the support of the demonstrations. Then, one appends these rollouts to the demonstration set, treating them as additional training data -- while biased, they are consistent with observed expert behavior. Intuitively, repeated simulation has a \textit{smoothing} effect on the training data as doing so marginalizes out the statistical error that comes from the stochasticity of the dynamics. We can see this point more explicitly by considering the above two MDP examples: in \figref{fig:mm_sucks}, repeatedly playing the green action and appending these rollouts to the expert dataset would bring us much closer to a uniform distribution over $s_1$ and $s_2$. Similarly, in \figref{fig:bc_sucks}, replay estimation would bring us toward a uniform distribution over the states $\{ s_1,\cdots,s_n \}$ in the expert demonstrations.

We could then plug in this improved distribution estimate into the \MM procedure \eqref{eq:mm}. Notice how doing so would cause \MM to be highly likely to recover the optimal policy on both MDPs. For example, in \figref{fig:mm_sucks}, replay estimation would make the learner much less likely to play the red action in $s_2$, It turns out this fact is sufficient to establish \textit{statistical optimality} in the tabular setting and with linear function approximation, with an error rate $\propto \min({H^{3/2}} / {\Nexp}, {H} / {\sqrt{\Nexp}})$. In short, replay estimation is a practical technique for reducing some of the finite-sample variance in expert demonstrations that enables \MM to perform optimally in the finite sample regime. We now provide some intuition on how to generalize this approach to beyond the tabular setting.

\textbf{Leaving the Tabular Setting.} The prior work of \citet{rajaraman2020toward} considers the tabular setting; this characteristic makes it easy to answer the question of ``\textit{on what states do we know the expert's action?}" To enable us to answer this question more generally, we introduce the notion of a \textit{membership oracle} $\mathcal{M}: \mathcal{S} \rightarrow \{0, 1\}$. Explicitly, $\mathcal{M}(s) = 1$ for states where we know the expert's action well (e.g. states where we have lots of similar demonstrations) and $\mathcal{M}(s) = 0$ otherwise.

We can then compute expert moments by splitting on the output of the membership oracle:
\begin{align} 
    \mathbb{E}_{\dem}[f(s,a)] = \underbrace{\mathbb{E}_{\dem} \left[f(s, a) \mathbbm{1} (\mathcal{M}(s) = 1) \right]}_{(i)} + \underbrace{\mathbb{E}_{\dem} \left[ f(s, a) \mathbbm{1} (\mathcal{M}(s) = 0) \right]}_{(ii)}
\end{align}
Note that the indicators in $(i)$ and $(ii)$ are complements of each other, rendering the above sum a valid estimate of the expert moment. As we know the expert action well wherever $\mathcal{M}(s) = 1$, simulated rollouts of the \BC policy approximates $(i)$ well; on the other hand we resort to a naive empirical estimate to approximate $(ii)$, as we do not know enough about the expert's action at these states to accurately generate additional demonstrations via \BC rollouts. In general, we relax $\mathcal{M}$ to a \textit{soft membership oracle} \cite{ZADEH1965338}, in order to handle uncertainty in how well we know the expert's action at a given state. We proceed by first analyzing the statistical properties of applying \texttt{MM} to this bipartite estimator before discussing practical constructions of performant membership oracles.

\vspace{-1em}

\section{Theoretical Analysis}
\label{sec:theory}
\begin{wrapfigure}{r}{0.4\textwidth}
    \centering
    \includegraphics[width=0.375\columnwidth]{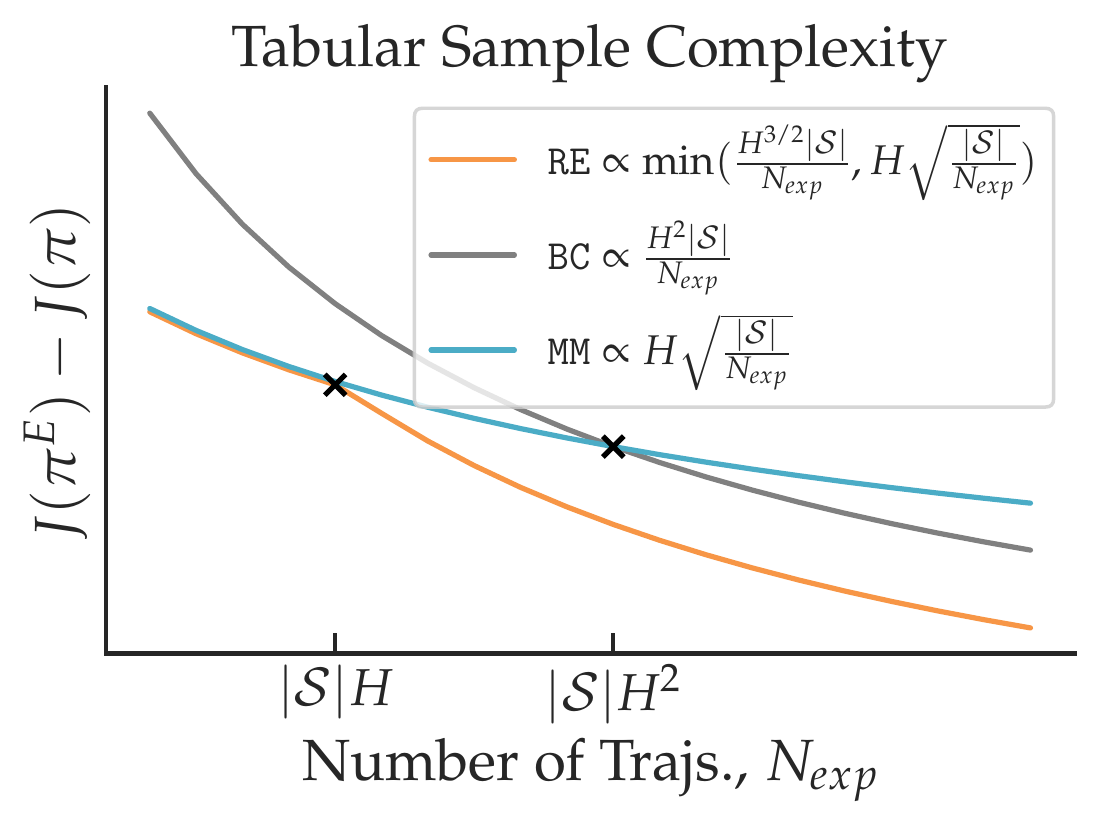}
    \caption{In the tabular setting, \RE inherits the superior low-data performance of moment-matching approaches and is able to perform better than both \MM and \BC with enough data.}
    \label{fig:samp}
\end{wrapfigure}

The proofs of all results in this section are deferred to \cref{app:proofs}. We begin by introducing some notation.

\textbf{Notation.} Let $\Delta(X)$ denote the probability simplex over set $X$ and let $\gtrsim, \lesssim, \asymp$ respectively denote greater than, lesser than and equality up to constants. We study the IL problem in the episodic MDP setting with state space $\mathcal{S}$, action space $\mathcal{A}$ and horizon $H$. 
We assume that the transition, reward function and policies can be non-stationary. The MDP transition is denoted $P = \{ \rho,P_1,\cdots,P_{H-1} \}$, where $\rho$ is the initial state distribution and $P_t : \mathcal{S} \times \mathcal{A} \to \Delta(\mathcal{S})$, while the reward function is denoted $r = \{ r_1,\cdots,r_H \}$ where $r_t : \mathcal{S} \times \mathcal{A} \to [0,1]$. In the online imitation learning setting, the learner has access to a finite dataset $D$ of $\Nexp$ trajectories (i.e. the sequences of states visited and actions played) generated by rolling out expert policy $\dem$. Importantly, the learner \textit{does not observe rewards during rollouts.} The fundamental goal of the learner is to learn a policy $\widehat{\pi}$ such that the \textit{imitation gap},
$J(\dem) - J(\widehat{\pi})$ is small. Here, $J(\pi)$ denotes the expected value of the policy $\pi$, $\mathbb{E}_{\pi}\left[ \sum_{t=1}^H r_t (s_t,a_t) \right]$. $\mathbb{E}_D [\cdot]$ denotes the empirical expectation computed using trajectories from the dataset $D$.

\textbf{Behavior cloning.} A standard approach for imitation learning is behavioral cloning (\texttt{BC}) \cite{pomerleau1989alvinn}, which trains a classifier from expert states to expert actions. More formally, for the $0$-$1$ loss $\ell$ (or a continuous proxy) \BC minimizes the empirical classification error,
\begin{align} \label{eq:bc}
    \piBC \gets \argmin_{\pi \in \Pi} \mathbb{E}_D \left[ \frac{1}{H} \sum\nolimits_{t=1}^H \ell (\pi_t (s_t) , a_t ) \right].
\end{align}
It is known from \citet{ross2011reduction} and \citet{rajaraman2020toward} that in the tabular setting, the expected imitation gap for \BC is always $\lesssim |\mathcal{S}|H^2/\Nexp$.  The fast $1 / \Nexp$ rate comes from the fact that on states where the learner observes the deterministic expert's actions, \BC simply replays them. Thus, the performance gap is proportional to the mass of unseen states, which decays as $1 / \Nexp$. The $H^2$ dependence comes from the fact that a single mistake can take the learner out of the distribution of the expert, causing it to make mistakes for the rest of the horizon. Importantly, this bound is tight -- i.e. there exists an MDP instance on which \BC incurs this imitation gap (See \Cref{sec:BC-lb}).

\begin{theorem}[Theorem~6.1 of \cite{rajaraman2020toward}] \label{theorem:bclb}
Then, there exists a tabular MDP instance such that the \BC incurs, $\mathbb{E} \left[ J(\pi^E) - J(\piBC) \right] \gtrsim \min \left\{ H, {|\mathcal{S}|H^2} / {\Nexp} \right\}$.
\end{theorem}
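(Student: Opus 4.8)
The statement is a restatement of Theorem~6.1 of \cite{rajaraman2020toward}, so the plan is to exhibit one tabular MDP instance on which behavioral cloning provably suffers, rather than to reason about all instances. The instance is a non-stationary, layered MDP with $H$ layers, roughly $|\mathcal{S}|/H$ ``live'' states per layer, and a single absorbing ``dead'' state $\star$; every live state yields reward $1$ and $\star$ yields reward $0$, so $J(\pi^E)=\Theta(H)$. The expert is deterministic: at each live state $s$ it plays a designated action $a^\star_s$, whose effect under the (stochastic) dynamics is to transition into the live states of the next layer, while every other action transitions to $\star$. This mirrors Figure~\ref{fig:bc_sucks}: on states where the expert was actually observed, $\piBC$ replays $a^\star_s$ and stays safe, but on live states never visited in the $\Nexp$ demonstrations the empirical $0$-$1$ risk in \eqref{eq:bc} is minimized by \emph{any} action. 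Randomizing over which action is correct at each state (an independent fair coin per state, revealed only through the demonstrations) then forces $\piBC$ --- under \emph{any} tie-breaking rule --- to step into $\star$ with probability $\tfrac12$ on each such state, and one recovers a claim about a fixed instance by averaging over the construction. To get the right dependence on $\Nexp$ one must weight the per-layer visitation distribution carefully (a uniform layer distribution gives exponentially small miss probabilities and hence only controls the regime $\Nexp \lesssim |\mathcal{S}|/H$); the reset-style weighting of \cite{rajaraman2020toward} makes the expected live mass in layer $t$ that is absent from the demonstrations scale as $u_t = \Theta(\min\{1,\,|\mathcal{S}|/\Nexp\})$ across the relevant range of $\Nexp$.

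Given such an instance, the plan is to track the survival probability $q_t := \Pr[\piBC \text{ is in a live state at step } t]$, where the probability is over both the random dataset $D$ and the rollout. Conditioned on $\piBC$ being live at step $t$, it lands (by construction) on a roughly uniformly random live state of layer $t$; with probability $u_t$ that state is absent from $D$, and conditioned on that it moves to $\star$ with constant probability, so $q_{t+1} \le q_t(1 - c\,u_t)$ for a constant $c>0$. Writing $\mathbb{E}[J(\pi^E)-J(\piBC)] \asymp \sum_{t=1}^{H} \Pr[\text{first entry into } \star \text{ at step } t]\cdot(H-t)$ and substituting the recursion for $q_t$, the contributions sum to $\Theta(\min\{H,\ |\mathcal{S}| H^2/\Nexp\})$: when $\Nexp \gtrsim |\mathcal{S}| H$ the learner typically survives a constant fraction of the horizon, there are $\Theta(|\mathcal{S}| H/\Nexp)$ failures in expectation, and each one, occurring at a near-uniform time $t$, forfeits $\Theta(H)$ reward, giving $\Theta(|\mathcal{S}| H^2/\Nexp)$; when $\Nexp \lesssim |\mathcal{S}| H$ the learner enters $\star$ within a constant number of steps almost surely and forfeits $\Theta(H)$, which is precisely the $H$ term of the minimum.

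The main obstacle is the joint design of (a) the per-layer weighting that makes the miss probability $u_t$ decay as $|\mathcal{S}|/\Nexp$ rather than exponentially, together with (b) the transition/reward structure that renders a single early mistake absorbing, so that a per-step error rate of order $|\mathcal{S}|/\Nexp$ compounds into a gap quadratic in $H$. A secondary technical point is controlling the dependence between the random demonstration set and the learner's own random trajectory when lower-bounding $q_t$ and the forfeited reward --- a coupling and union bound over the $H$ layers suffices --- and verifying that the $\argmin$ over $\Pi$ of the empirical $0$-$1$ loss is genuinely unconstrained on unobserved states, so that the coin-flip argument against every tie-breaking rule is legitimate. The matching upper bound $\lesssim |\mathcal{S}| H^2/\Nexp$ quoted above from \cite{ross2011reduction,rajaraman2020toward} confirms the instance is extremal.
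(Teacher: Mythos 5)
There is a genuine quantitative gap in your construction: the layered state space with only $|\mathcal{S}|/H$ live states per layer cannot produce the per\nobreakdash-step miss probability $u_t = \Theta(\min\{1, |\mathcal{S}|/\Nexp\})$ that your accounting requires. Each of the $\Nexp$ demonstrations contributes one i.i.d.\ sample from the layer-$t$ marginal, and for \emph{any} distribution $p$ supported on $k$ points the expected missing mass after $n$ i.i.d.\ samples satisfies $\sum_x p(x)(1-p(x))^n \le k/(e(n+1))$; with $k = |\mathcal{S}|/H$ and $n = \Nexp$ this caps $u_t$ at $O\left(|\mathcal{S}|/(H\Nexp)\right)$, no matter how cleverly you weight the layer. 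Feeding that into your own sum $\sum_{t} u_t\,(H-t)$ yields only $\Theta(|\mathcal{S}|H/\Nexp)$ in the large-$\Nexp$ regime --- a full factor of $H$ short of the claimed $|\mathcal{S}|H^2/\Nexp$. Your sentence asserting that the ``reset-style weighting'' rescues $u_t = \Theta(|\mathcal{S}|/\Nexp)$ is therefore internally inconsistent with the layered state budget: within a total budget of $|\mathcal{S}|$ states, a construction in which each time step sees a disjoint set of states provably cannot reach the quadratic-in-$H$ bound.

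The instance actually used in the cited proof (and sketched in \Cref{sec:BC-lb}) avoids this by \emph{reusing} the same $\Theta(|\mathcal{S}|)$ good states at every time step: the expert's good action re-initializes the state according to a single distribution $\rho$ over all $|\mathcal{S}|-1$ good states, every other action falls into the one absorbing bad state, and $\rho$ is tuned (mass roughly $1/\Nexp$ spread over $\Theta(|\mathcal{S}|)$ states) so that the missing mass of the time-$t$ classification problem is $\Theta(|\mathcal{S}|/\Nexp)$ at \emph{every} $t$, since each of the $H$ per-time-step problems is over the full $\Theta(|\mathcal{S}|)$-point support rather than a $1/H$ fraction of it. The rest of your argument --- randomizing the correct action so any tie-breaking rule errs with probability $\ge 1/2$ on unseen states, the absorbing bad state, the survival recursion, the $\sum_t u_t (H-t)$ accounting, and the $\min\{H,\cdot\}$ case split --- matches the paper's reasoning and would go through once the state-reuse fix is made. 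I would also soften ``lands on a roughly uniformly random live state'': under the tuned $\rho$ the landing distribution is deliberately non-uniform, and the recursion should be phrased directly in terms of the missing mass of $\rho$ rather than uniformity.
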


\textbf{Moment matching.}  Many standard algorithms in the IL literature fall under the empirical moment matching framework (e.g. GAIL \citet{ho2016generative}, MaxEnt IRL \citet{ziebart2008maximum}); see Table 3 of \citet{swamy2021moments} for more examples. Reward moment matching corresponds to finding a policy which best matches the state-action visitation measure of $\pi^E$, in the sense of minimizing an Integral Probability Metric (IPM) \cite{muller1997integral}.
In the finite sample setting, the \textit{empirical moment matching} learner $\piEMM$ attempts to best match the empirical state-visitation measure. Namely,
\begin{align} \label{eq:emm}
    \piEMM \in \argmin_{\pi \in \Pi} \sup_{ f \in \mathcal{F}} \mathbb{E}_{\pi} \!  \left[\frac{ \sum_{t=1}^H f_t (s_t,a_t)}{H} \right] \! {-} \mathbb{E}_{D} \! \left[\frac{ \sum_{t=1}^H f_t (s_t,a_t)}{H} \right].
\end{align}

First we show an upper bound on the imitation gap incurred by empirical moment matching.

\begin{restatable}{thm}{folklore}
\label{theorem:folklore}
Consider the empirical moment matching learner $\piEMM$ (\cref{eq:emm}), instantiated with an appropriate discriminator class $\mathcal{F}$. 
The imitation gap satisfies $ \mathbb{E} \left[ J(\pi^E) - J(\piEMM) \right] \lesssim H \sqrt{|\mathcal{S}|/\Nexp}$.
\end{restatable}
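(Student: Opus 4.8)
The plan is to instantiate the discriminator class $\mathcal{F}$ in \eqref{eq:emm} as the set of all per-timestep functions bounded in $[0,1]$ (equivalently, all reward functions). This choice does two jobs at once: \emph{(i)} it contains the true reward $r=\{r_t\}$, so the moment-matching objective directly controls the imitation gap; and \emph{(ii)} the induced integral probability metric is a (scaled) total variation distance, hence symmetric, which is what lets the optimality-of-$\piEMM$ argument close with only a constant loss. Writing $\bar f := \tfrac1H\sum_{t=1}^H f_t(s_t,a_t)$ and letting $d^\pi_t\in\Delta(\mathcal{S}\times\mathcal{A})$ denote the timestep-$t$ state-action visitation of $\pi$, this choice yields $\sup_{f\in\mathcal{F}}\big(\mathbb{E}_\mu[\bar f]-\mathbb{E}_\nu[\bar f]\big)=\tfrac{1}{2H}\sum_{t=1}^H\|\mu_t-\nu_t\|_1$ for any two (possibly empirical) state-action visitation sequences $\mu,\nu$, since the supremum decouples across $t$ and over probability vectors the positive and negative parts have equal mass.

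First I would reduce the imitation gap to the moment-matching value. Since $r_t\in[0,1]$, $\bar r$ is a feasible discriminator, so with $\epsilon_{\mathrm{stat}}:=\sup_{f\in\mathcal{F}}(\mathbb{E}_{\pi^E}[\bar f]-\mathbb{E}_D[\bar f])$ we have $J(\pi^E)-J(\piEMM)=H(\mathbb{E}_{\pi^E}[\bar r]-\mathbb{E}_D[\bar r])+H(\mathbb{E}_D[\bar r]-\mathbb{E}_{\piEMM}[\bar r])\le H\epsilon_{\mathrm{stat}}+H\sup_{f\in\mathcal{F}}(\mathbb{E}_D[\bar f]-\mathbb{E}_{\piEMM}[\bar f])$. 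By symmetry of the IPM the last supremum equals $\sup_{f\in\mathcal{F}}(\mathbb{E}_{\piEMM}[\bar f]-\mathbb{E}_D[\bar f])$, which is precisely the objective value attained by $\piEMM$ in \eqref{eq:emm}; since $\pi^E\in\Pi$ (realizability, automatic in the tabular setting where $\Pi$ is unrestricted) is a feasible choice of $\pi$, this value is at most $\epsilon_{\mathrm{stat}}$. Hence $J(\pi^E)-J(\piEMM)\le 2H\epsilon_{\mathrm{stat}}=\sum_{t=1}^H\|d^{\pi^E}_t-\hat d_t\|_1$, where $\hat d_t$ is the empirical timestep-$t$ state-action visitation of the $\Nexp$ demonstrations.

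Second, I would bound $\mathbb{E}\big[\sum_{t=1}^H\|d^{\pi^E}_t-\hat d_t\|_1\big]$. For each fixed $t$ the pairs $(s^i_t,a^i_t)_{i=1}^{\Nexp}$ are i.i.d.\ across trajectories, so $\hat d_t$ is the empirical distribution of $\Nexp$ i.i.d.\ draws from $d^{\pi^E}_t$, and the standard variance/Cauchy--Schwarz estimate gives $\mathbb{E}\|\hat d_t-d^{\pi^E}_t\|_1\le\sqrt{k_t/\Nexp}$ with $k_t$ the support size of $d^{\pi^E}_t$. Here I would use that the expert is deterministic (the standing assumption in the tabular analysis, cf.\ \Cref{sec:RE}): every demonstrated pair at time $t$ lies in the graph of $\pi^E_t$, so $k_t\le|\mathcal{S}|$. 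Summing over $t$ and using linearity of expectation yields $\mathbb{E}[J(\pi^E)-J(\piEMM)]\le H\sqrt{|\mathcal{S}|/\Nexp}$, as claimed.

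The argument is largely routine; the two points that need care are the choice of $\mathcal{F}$ — it must simultaneously realize the reward \emph{and} induce a symmetric metric so that the comparator bound costs only a factor of two — and the deterministic-expert observation that collapses the per-timestep support from $|\mathcal{S}||\mathcal{A}|$ to $|\mathcal{S}|$, without which one only obtains $H\sqrt{|\mathcal{S}||\mathcal{A}|/\Nexp}$. If $\piEMM$ is merely an $\eta$-approximate minimizer of \eqref{eq:emm}, an additive $H\eta$ optimization term appears; the stated bound presumes exact optimization, consistent with the ``no optimization error'' regime emphasized in the introduction.
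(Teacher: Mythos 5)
Your proposal is correct and follows essentially the same route as the paper: the same instantiation of $\mathcal{F}$ as bounded per-timestep functions (turning the objective into per-timestep TV matching), the same comparator argument using optimality of $\piEMM$ against the feasible $\pi^E$ to pay only twice the statistical error $\sum_{t}\tv\left(d_t^{\pi^E},d_t^{D}\right)$, and the same $\sqrt{|\mathcal{S}|/\Nexp}$ bound on the expected $L_1$ deviation of the empirical visitation measure, exploiting the deterministic expert to reduce the support from $|\mathcal{S}||\mathcal{A}|$ to $|\mathcal{S}|$. The only cosmetic difference is that you derive the $\sqrt{k/n}$ estimate directly via Cauchy--Schwarz where the paper cites Theorem~1 of \cite{han2015minimax}; the content is identical.
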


The proof of this result can be found in \Cref{sec:MM-ub}. It turns out that this guarantee is essentially tight for empirical moment matching, answering an open question from \citet{rajaraman2020toward}.

\begin{restatable}{thm}{tvlb}
\label{theorem:tvlb}
If $H \ge 4$, there is a tabular IL instance with $2$ states and actions on which with constant probability, the empirical moment matching learner (\cref{eq:emm}) incurs, $J(\pi^E) - J(\piEMM) \gtrsim H/\sqrt{\Nexp}$.
\end{restatable}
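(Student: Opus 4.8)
The plan is to construct an explicit 2-state, 2-action MDP in which the expert policy induces a state-visitation distribution that is exactly $\mathrm{Unif}(\{s_1,s_2\})$ at every timestep, but where matching the \emph{empirical} visitation distribution forces the moment-matching learner to deviate from the expert's action in a way that costs reward. The construction should mirror the vignette in Figure~\ref{fig:mm_sucks}: at $s_1$ the expert action (say $a_1$) keeps the chain mixed $\mathrm{Unif}(\{s_1,s_2\})$, while at $s_2$ the expert action $a_1$ also returns to $\mathrm{Unif}(\{s_1,s_2\})$; a second action $a_2$ at $s_2$ biases the next-state distribution (e.g.\ deterministically to $s_2$, or heavily toward $s_2$). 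The reward is chosen so that $a_1$ is optimal everywhere (so the expert is optimal and $J(\pi^E)$ is maximal), and so that spending extra time at $s_2$, or playing $a_2$, is penalized by a constant per step — this is what converts an $\Omega(1/\sqrt{\Nexp})$ action-level error into an $\Omega(H/\sqrt{\Nexp})$ value gap via error compounding over the horizon.

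First I would pin down the empirical moments. With $\Nexp$ trajectories, the empirical state frequency at a given timestep (or pooled appropriately across timesteps, depending on how $\mathcal{F}$ is defined via $\frac1H\sum_t f_t$) is a Binomial$(\Theta(\Nexp), 1/2)$-type quantity normalized, so by the Berry–Esseen theorem / anticoncentration of the binomial, with constant probability the empirical fraction of mass on $s_2$ exceeds $1/2$ by $c/\sqrt{\Nexp}$ for an absolute constant $c>0$. Condition on this event. Second I would argue that \emph{any} policy that matches these empirical moments (i.e.\ achieves small value of the inner sup in \eqref{eq:emm}) must put roughly $1/2 + c/\sqrt{\Nexp}$ marginal mass on $s_2$: since the discriminator class $\mathcal{F}$ for reward moment matching contains (indicators of) the reward-relevant features, in particular it can detect the state marginal, so $\piEMM$ is forced to replicate the skewed marginal up to lower-order optimization slack. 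Third, I would show that replicating this skewed marginal is only achievable by playing the non-expert action $a_2$ at $s_2$ with frequency $\Omega(1/\sqrt{\Nexp})$ per timestep: because under the expert action everywhere the true marginal is exactly uniform, any excess mass on $s_2$ must be "pumped in" through the $a_2$ transition, and a short calculation with the transition matrices shows the required $a_2$-frequency is $\Theta(1/\sqrt{\Nexp})$ at each step. Finally, I would convert this to a value gap: each unit of $a_2$-play (or each unit of excess time at $s_2$) costs a constant amount of reward, and these costs accumulate over the $H$ steps (the skew is present at every timestep), yielding $J(\pi^E) - J(\piEMM) \gtrsim H \cdot (1/\sqrt{\Nexp}) = H/\sqrt{\Nexp}$, on the constant-probability anticoncentration event. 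The condition $H \ge 4$ enters to ensure there is enough horizon for the compounding argument and for the chain to exhibit the mixing behavior.

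The main obstacle I anticipate is handling the interaction between the optimization slack in \eqref{eq:emm} and the statistical fluctuation: I need the empirical skew $c/\sqrt{\Nexp}$ to \emph{dominate} whatever IPM value the true expert policy itself would incur against the empirical distribution, so that the learner genuinely gains by deviating rather than just playing the expert action and eating the $O(1/\sqrt{\Nexp})$ IPM penalty. Concretely, the expert policy $\pi^E$ already has IPM value $\Theta(1/\sqrt{\Nexp})$ against the empirical measure (it doesn't match the skew either), so I must design the reward/feature geometry so that the \emph{minimizer} of the IPM is meaningfully different from $\pi^E$ — e.g.\ by making the discriminator class rich enough (containing a feature that isolates the $s_2$ marginal) that the unique near-minimizer must chase the skew, while simultaneously that feature is reward-relevant. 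A clean way to do this is to let the single relevant feature be $f(s,a) = \mathbbm{1}(s = s_2)$ (up to scaling), make the reward $r_t(s,a) = 1 - \mathbbm{1}(s=s_2)$ or similar, so that matching the $s_2$-marginal and matching the return are the \emph{same} constraint; then any moment-matcher is pinned to the skewed $s_2$-marginal, and the value gap is literally the skew times $H$. Verifying that a policy realizing the skewed marginal exists within $\Pi$ (so the $\min$ is attained near it) and that it must use $a_2$ at the claimed rate is the remaining piece of bookkeeping; this is where the explicit transition structure of the 2-state MDP does the work, and I'd keep that MDP as simple as possible (ideally the $a_2$ transition deterministic) to make the linear-algebra step a one-liner.
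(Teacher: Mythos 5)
There is a genuine gap in the step asserting that any near-minimizer of \cref{eq:emm} ``must put roughly $1/2 + c/\sqrt{\Nexp}$ marginal mass on $s_2$.'' With the discriminator class the paper actually instantiates in the tabular setting (all $1$-bounded functions, so the objective is a sum over $t$ of TV distances between \emph{state-action} marginals; see \Cref{rem:1}), and with your remixing construction in which the expert action returns the chain to $\mathrm{Unif}(\{s_1,s_2\})$ at every step, this claim is false. Playing $a_2$ at $s_2$ with probability $q_t$ at time $t$ costs about $m_t q_t$ in the action-marginal term at time $t$ (paid at a state of \emph{constant} visitation mass $m_t\approx 1/2$) plus a comparable perturbation of the $(s_2,a_1)$ term, while it buys a reduction of only about $m_t q_t$ in the time-$(t+1)$ state-marginal mismatch; and because the chain remixes, the benefit does not persist beyond one step. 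A direct computation (with per-step deviation mass $x=m_tq_t$ and empirical skew $\delta$, the time-$t$ contribution is $2\delta+x$ versus $2\delta$ for the imitating policy) shows the minimizer of the full TV objective is the expert-imitating policy itself, which simply absorbs the $O(1/\sqrt{\Nexp})$ per-step penalty. Your fallback of restricting the discriminator to the single feature $\mathbbm{1}(s=s_2)$ does force the learner to chase the skew, but it lower-bounds a deliberately impoverished instantiation of \cref{eq:emm}, not the one whose upper bound (\Cref{theorem:folklore}) the theorem is meant to show is tight.

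The paper's construction supplies exactly the two ingredients your MDP lacks, and both are essential. First, the initial distribution places only $1/\sqrt{\Nexp}$ mass on $s_2$, so the \emph{entire} cost of deviating there at $t=1$ is at most $2/\sqrt{\Nexp}$. Second, the dynamics are absorbing from $t=2$ onward, so that single cheap deviation shifts the state marginal at every subsequent step by $\Theta(1/\sqrt{\Nexp})$, and the benefit in the matching objective accumulates to $(H-1)/\sqrt{\Nexp}$; for $H\ge 4$ the benefit strictly exceeds the cost, the unique minimizer deviates fully (\Cref{lemma:9}), and the value gap under the reward $\mathbbm{1}(s=s_1)$ for $t\ge 2$ is $H/(2\sqrt{\Nexp})$. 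To salvage your route you would need to engineer the same asymmetry: the deviation must be charged on a vanishing ($\Theta(1/\sqrt{\Nexp})$) fraction of trajectories while its effect on the state marginal is rewarded on $\Omega(H)$ time steps. Your anticoncentration step and the final value accounting are fine; the failure is localized to the ``the learner genuinely gains by deviating'' step that you yourself flagged as the main obstacle but did not resolve.
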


The proof of this result is deferred to \cref{sec:MM-lb}.
The proof of this lower bound exploits the fact that the data generation process in the dataset is inherently random. Consider a slight modification of the MDP instance shown in \cref{fig:mm_sucks}, where the reward function is $0$ for $t=1$. For $t \ge 2$, the transition function is absorbing at both states; the reward function equals $1$ at the state $s_1$ for any action and is $0$ everywhere else. Then, the expert state distribution at time $2$ and every time thereon is in uniform across the two states, $\{ 1/2, 1/2\}$. However, in the dataset $D$, the learner sees a noisy realization of this distribution in the dataset of the form $\{ 1/2 - \delta, 1/2 + \delta \}$ for $|\delta| \approx \pm 1/\sqrt{\Nexp}$. Because of this noise, the empirical moment matching learner may be encouraged to \textit{deviate from the expert's observed behavior} and pick the red action at $s_2$ as this results in a better match to the empirical \textit{state} visitation measures at every point in the rest of the episode - a prerequisite to matching the empirical state-action visitation measure. The learner is willing to pick an action different from what the expert played in order to better match the \textit{inherently noisy} empirical state-action visitation distribution.

\begin{remark} \label{rem:bothbad} Theorems~\ref{theorem:bclb} and \ref{theorem:tvlb} are separate lower bound IL instances against the performance of \BC and empirical moment matching. On the uniform mixture of the two MDPs (i.e. deciding the underlying MDP based on the outcome of a fair coin), with constant probability, \textit{both} $J(\pi^E) - J(\piBC) \gtrsim |\mathcal{S}|H^2/\Nexp$ and $J(\pi^E) - J(\piEMM) \gtrsim H/\sqrt{\Nexp}$. On this mixture instance, training both \BC and empirical moment matching and choosing the better of the two is also statistically suboptimal. \end{remark}

\vspace{-0.5em}
\subsection{Replay Estimation} A natural question at this point is whether there is an algorithm that is \textit{better than both worlds}, i.e. algorithm which can outperform the worst case imitation gap of empirical moment matching and \BC. The answer to this question in the tabular setting was recently provided by \citet{rajaraman2020toward} who propose \mmd, achieving better performance than both \BC and \MM. This improvement is possible because \BC does not use any dynamics information and \MM does not leverage the knowledge of where expert actions are known. However, it is unclear how to extend this approach beyond the tabular setting as the algorithm relies on a large measure of states being visited in the demonstrations.

\begin{algorithm}[htb]
	\caption{Replay Estimation (\RE)}
	\label{alg:re}
	    \KwIn{Expert demonstrations $D$, policy class $\Pi$, moment class $\mathcal{F} = \bigoplus_{t=1}^H \mathcal{F}_t$, simulator \texttt{SIM}, \texttt{ALG} which returns a membership oracle given a dataset\;}
		
		Partition the dataset $D$ into $D_1$ and $D_2$\;
		 
		Using \texttt{ALG}, train the membership oracle $\mathcal{M}$ on $D_1$\;
		
		Train $\piBC$ using behavior cloning on $D_1$\;
        
        Roll out $\piBC$ in \texttt{SIM} $\Nrep$ times to construct a new dataset, $\Drep$\;
        
        Define prefix weights $\mathcal{P}(s_{1 \dots t-1}) = \prod\nolimits_{t'=1}^{t-1} \mathcal{M} (s_{t'}, t')$\;

        Define,
        \begin{align}
    \widehat{E} (f) &= \mathbb{E}_{\Drep} \left[ \frac{1}{H} \sum\nolimits_{t=1}^H f_t(s_t, a_t) \left( \mathcal{P}(s_{1 \dots t}) \right) \right] + \mathbb{E}_{D_2} \left[ \frac{1}{H} \sum\nolimits_{t=1}^H f_t(s_t,a_t) \left( 1 - \mathcal{P}(s_{1 \dots t}) \right) \right]. \nonumber
    \end{align}
	
	\KwOut{$\piRE$, a solution to the moment-matching problem:
\begin{align} \label{eq:pire}
    \argmin_{\pi \in \Pi} \sup_{f \in \mathcal{F}} \mathbb{E}_\pi \left[ \frac{1}{H} \sum\nolimits_{t=1}^H f_t (s_t,a_t) \right] - \widehat{E} (f)
\end{align}}
\end{algorithm}

\vspace{-0.75em}
To handle this challenge, we introduce the notion of a \textit{soft membership oracle} \cite{ZADEH1965338}, $\mathcal{M}: \mathcal{S} \times [H] \to [0, 1]$ which captures the learner's inherent uncertainty in the expert's actions at a state at each point in an episode. The soft membership oracle assigns high weight to a state if \BC is likely to closely agree with the expert policy and gives a lower weight to states where \BC is likely to be inaccurate. By this definition, if the membership oracle is consistently large at all the states visited in a trajectory, we can be confident that \textit{a trajectory generated by \BC is as though it was a rollout from the expert policy}. Formally, for any function $g$ and time $t = 1,\cdots,H$, we have the decomposition,
\begin{align} 
    \mathbb{E}_{\pi^E} [g(s_t,a_t)] &= \underbrace{\mathbb{E}_{\pi^E} \left[g(s_t, a_t) \mathcal{P} (s_{1 \cdots t}) \right]}_{(i)} + \underbrace{\mathbb{E}_{\pi^E} \left[g(s_t, a_t) \left(1 - \mathcal{P} (s_{1 \cdots t}) \right) \right]}_{(ii)} \label{eq:t}
\end{align}
where $\mathcal{P} (s_{1 \cdots t})$ is defined as the \textit{prefix weight} $\prod_{t'=1}^{t} \mathcal{M} (s_{t'}, t')$. We need to use prefix weights instead of the single-sample weights sketched in the previous section to account for the probability of \BC getting to the current state in the same manner the expert would have.
Because of the high accuracy of \BC on segments with high prefix weights, in \cref{eq:t}, $(i)$ can be approximated by replacing the expectation over $\pi^E$ by that over $\piBC$, i.e. replay estimation. On the other hand, since the prefix weight is low on the remaining trajectories in $(ii)$, we know that \BC is inaccurate, so we resort to using a simple empirical estimate to estimate this term.

While we leave the particular choice of the soft membership oracle flexible, intuitively, states at which \BC closely agrees with the expert policy should be given high weight while where those where \BC is inaccurate should be weighted lower. In \Cref{sec:praxis}, we discuss several practical approaches to designing such a soft membership oracle. We first prove a generic policy performance guarantee for the outputs of our algorithm as a function of the choice of $\mathcal{M}$.

\begin{theorem} \label{thm:main}
Consider the policy $\piRE$ returned by \Cref{alg:re}. Assume that $\pi^E \in \Pi$ and the ground truth reward function $r_t \in \mathcal{F}_t$, which is assumed to be symmetric ($f_t \in \mathcal{F}_t \iff -f_t \in \mathcal{F}_t$) and bounded (For all $f_t \in \mathcal{F}_t$, $\| f_t \|_\infty \le 1$). Choose $|D_1|, |D_2| = \Theta (\Nexp)$ and suppose $\Nrep \to \infty$. With probability $\ge 1 - 3 \delta$,
\begin{align}
    J (\dem) - J (\piRE) \lesssim & \ \mathcal{L}_1 + \mathcal{L}_2 
    + \frac{\log \left( \Fmax H/\delta \right)}{\Nexp}
\end{align}
where $\Fmax \triangleq \max_{t \in [H]} |\mathcal{F}_t|$, and,
\begin{align}
    \mathcal{L}_1 \triangleq H^2 \ \mathbb{E}_{\dem} \! \left[ \frac{\sum_{t=1}^H \mathcal{M} (s_t,t) \tv \left( \dem_t (\cdot|s_t) , \piBC_t (\cdot|s_t) \right)}{H} \right],
\end{align}
\begin{align}
    \mathcal{L}_2 \triangleq H^{3/2} \sqrt{ \frac{\log \left( \Fmax H/\delta \right)}{\Nexp} \frac{\sum_{t=1}^H \mathbb{E}_{\dem} \left[ 1 - \mathcal{M} (s_t, t) \right]}{H}}. \nonumber
\end{align}
\end{theorem}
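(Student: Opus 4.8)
The plan is to reduce the imitation gap to the estimation error of the moment functional $\widehat E$, then split that error into a \emph{replay-bias} part (controlled by $\mathcal L_1$) and a \emph{statistical} part (controlled by $\mathcal L_2$). Since $r_t\in\mathcal F_t$, write $J(\pi)=H\,\mathbb E_{\pi}[\tfrac1H\sum_{t=1}^H r_t(s_t,a_t)]$, so that $J(\dem)-J(\piRE)=H\big(\mathbb E_{\dem}[\tfrac1H\sum_t r_t]-\widehat E(r)\big)+H\big(\widehat E(r)-\mathbb E_{\piRE}[\tfrac1H\sum_t r_t]\big)$. For the second bracket, optimality of $\piRE$ in \eqref{eq:pire} together with $\dem\in\Pi$ gives $\sup_f\big(\mathbb E_{\piRE}[\tfrac1H\sum_t f_t]-\widehat E(f)\big)\le\sup_f\big(\mathbb E_{\dem}[\tfrac1H\sum_t f_t]-\widehat E(f)\big)$; using symmetry of each $\mathcal F_t$ to flip signs and $r\in\mathcal F$ to instantiate $f=r$, both brackets are bounded by $\sup_{f\in\mathcal F}\big|\mathbb E_{\dem}[\tfrac1H\sum_t f_t(s_t,a_t)]-\widehat E(f)\big|$, hence $J(\dem)-J(\piRE)\le 2H\sup_{f\in\mathcal F}\big|\mathbb E_{\dem}[\tfrac1H\sum_t f_t]-\widehat E(f)\big|$. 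Applying the prefix-weight decomposition \eqref{eq:t} to $\mathbb E_{\dem}[\tfrac1H\sum_t f_t]$ and matching against the two pieces of $\widehat E$, and using that $\Nrep\to\infty$ makes $\mathbb E_{\Drep}[\cdot]=\mathbb E_{\piBC}[\cdot]$ \emph{exactly}, the error decomposes as $(A)+(B)$ with $(A)=\tfrac1H\sum_t\big(\mathbb E_{\dem}[f_t\,\mathcal P(s_{1\cdots t})]-\mathbb E_{\piBC}[f_t\,\mathcal P(s_{1\cdots t})]\big)$ and $(B)=\tfrac1H\sum_t\big(\mathbb E_{\dem}[f_t(1-\mathcal P(s_{1\cdots t}))]-\mathbb E_{D_2}[f_t(1-\mathcal P(s_{1\cdots t}))]\big)$. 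The dataset split is what makes $(\mathcal M,\piBC)$ — hence $\mathcal P$ and the replay rollouts — independent of $D_2$.

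To bound $(A)$, introduce the prefix-weighted sub-distributions $\nu_t^{\pi}(s):=\mathbb E_{\pi}\big[\mathbbm 1\{s_t=s\}\prod_{t'<t}\mathcal M(s_{t'},t')\big]$, which are pointwise dominated by the state-visitation distribution $d_t^\pi$ and satisfy $\nu_{t+1}^{\pi}(s')=\sum_{s,a}\nu_t^{\pi}(s)\,\mathcal M(s,t)\,\pi_t(a|s)\,P_t(s'|s,a)$. Subtracting the recursions for $\dem$ and $\piBC$ and splitting into an ``action mismatch at step $t$'' piece and a ``propagated drift'' piece gives $\|\nu_{t+1}^{\dem}-\nu_{t+1}^{\piBC}\|_1\le 2\,\mathbb E_{\dem}[\mathcal M(s_t,t)\,\tv(\dem_t(\cdot|s_t),\piBC_t(\cdot|s_t))]+\|\nu_t^{\dem}-\nu_t^{\piBC}\|_1$, which unrolls (from $\nu_1^{\dem}=\nu_1^{\piBC}=\rho$) to $\|\nu_t^{\dem}-\nu_t^{\piBC}\|_1\le 2S$, where $S:=\sum_{t=1}^H\mathbb E_{\dem}[\mathcal M(s_t,t)\,\tv(\dem_t(\cdot|s_t),\piBC_t(\cdot|s_t))]$. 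Each summand of $(A)$ is in turn at most $2\,\mathbb E_{\dem}[\mathcal M(s_t,t)\tv(\dem_t,\piBC_t)]+\|\nu_t^{\dem}-\nu_t^{\piBC}\|_1$ (using $\|f_t\|_\infty\le 1$ and passing from $\nu^{\dem}$ to $d^{\dem}$), so $|(A)|\le\tfrac1H\sum_t\big(2\,\mathbb E_{\dem}[\mathcal M(s_t,t)\tv(\dem_t,\piBC_t)]+2S\big)\lesssim S$ and therefore $2H\,|(A)|\lesssim HS=\mathcal L_1$. The step requiring the most care — and the one I expect to be the main obstacle — is not losing an extra factor of $H$ here: the drift $\|\nu_t^{\dem}-\nu_t^{\piBC}\|_1$ already aggregates contributions from \emph{all} earlier steps, and it is precisely because averaging the bound $2S$ over $t\in[H]$ returns $O(S)$ rather than $O(HS)$ that the final rate is $H^2$- rather than $H^3$-scale.

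For the statistical term $(B)$, note that the prefix weights $\mathcal P(s_{1\cdots t})$ do not depend on $f$, so the product structure $\mathcal F=\bigoplus_t\mathcal F_t$ and symmetry let me pull the supremum inside the sum: $\sup_f|(B)|\le\tfrac1H\sum_t\sup_{f_t\in\mathcal F_t}\big|\mathbb E_{\dem}[f_t(s_t,a_t)(1-\mathcal P_t)]-\mathbb E_{D_2}[f_t(s_t,a_t)(1-\mathcal P_t)]\big|$, so only $\Fmax H$ scalar deviations must be controlled, not $\Fmax^{H}$. Each such deviation involves the $[-1,1]$-bounded variable $f_t(s_t,a_t)(1-\mathcal P_t)$, whose variance is at most $\mathbb E_{\dem}[(1-\mathcal P_t)^2]\le\mathbb E_{\dem}[1-\mathcal P_t]=:v_t$; and since $1-\prod_{t'\le t}\mathcal M(s_{t'},t')\le\sum_{t'\le t}(1-\mathcal M(s_{t'},t'))$ we get $\sum_t v_t\le H\sum_t\mathbb E_{\dem}[1-\mathcal M(s_t,t)]$. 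Applying Bernstein's inequality to each of the $\Fmax H$ deviations at level $\delta/(\Fmax H)$, summing over $t$, and using Cauchy--Schwarz ($\sum_t\sqrt{v_t}\le\sqrt{H\sum_t v_t}$) yields $2H\sup_f|(B)|\lesssim H\sqrt{\tfrac{\log(\Fmax H/\delta)}{\Nexp}\cdot\tfrac{\sum_t\mathbb E_{\dem}[1-\mathcal M(s_t,t)]}{H}}+\tfrac{\log(\Fmax H/\delta)}{\Nexp}=\mathcal L_2+\tfrac{\log(\Fmax H/\delta)}{\Nexp}$, the last term being the Bernstein lower-order contribution. Combining the bounds on $(A)$ and $(B)$ and a union bound over the constant number of failure events (all carried by $D_2$, since $\Drep$ is noiseless in the limit and we condition on $D_1$) gives $J(\dem)-J(\piRE)\lesssim\mathcal L_1+\mathcal L_2+\tfrac{\log(\Fmax H/\delta)}{\Nexp}$, as claimed.
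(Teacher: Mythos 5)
Your proposal is correct and follows the paper's own architecture almost exactly: the same reduction of the imitation gap to $2H\sup_{f\in\mathcal F}\bigl|\mathbb E_{\dem}[\tfrac1H\sum_t f_t]-\widehat E(f)\bigr|$ via realizability, optimality of $\piRE$, and symmetry of $\mathcal F$; the same three-way split into a replay-bias term, a replay-sampling term (killed by $\Nrep\to\infty$), and an empirical term on $D_2$; and the same Bernstein-plus-union-bound treatment of the empirical term, including the variance bound $\mathbb E[(1-\mathcal P_t)^2]\le\mathbb E[1-\mathcal P_t]$, the inequality $1-\prod_i x_i\le\sum_i(1-x_i)$, and Cauchy--Schwarz over $t$ (this is the paper's Lemma~\ref{lemma:B2} verbatim; the residual $H\log(\Fmax H/\delta)/\Nexp$ versus $\log(\Fmax H/\delta)/\Nexp$ bookkeeping slip you make is the same one the paper makes between its Lemma~\ref{lemma:B2} and the theorem statement, so it is not a gap relative to the paper). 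The one genuine methodological difference is in the bias term: the paper's Lemma~\ref{lemma:B1} telescopes over hybrid policies $\pi^{(h)}$ (playing $\dem$ up to step $h$ and $\piBC$ thereafter) and controls each increment via the uncertainty-weighted visitation measure $d^{\mathcal M}$ and look-forward reward $\rho^{\mathcal M}$, whereas you run a forward recursion on the prefix-weighted sub-distributions $\nu_t^\pi$ and unroll the resulting one-step drift inequality. These are dual formulations of the same error-propagation argument and yield the identical bound $H\sum_t\mathbb E_{\dem}[\mathcal M(s_t,t)\,\tv(\dem_t(\cdot|s_t),\piBC_t(\cdot|s_t))]=\mathcal L_1$; your version makes the ``why only $H^2$ and not $H^3$'' point slightly more transparent (the drift $\|\nu_t^{\dem}-\nu_t^{\piBC}\|_1$ is uniformly $O(S)$ over $t$), while the paper's hybrid-policy version generalizes more directly to the weighted-reward bookkeeping it reuses elsewhere. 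No gaps.
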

We discuss a proof of this result in \Cref{sec:RE-ub} and include bounds when $\Nrep$ is finite.

\begin{remark} \label{rem:inffamily}
Note that \Cref{thm:main} can be extended to infinite function families using the standard technique of replacing $|\mathcal{F}_t|$ by, $\mathcal{N} (\mathcal{F}_t, \xi, \| \cdot \|_\infty)$ the $\xi$ log-covering number (metric entropy) of $\mathcal{F}_t$ in the $L_\infty$ norm, for $\xi = \frac{1}{\Nexp H}$. Likewise, we may appropriately replace $\Fmax$ by $\Nmax \triangleq \max_{t \in [H]} \mathcal{N} (\mathcal{F}_t, \frac{1}{\Nexp H}, \| \cdot \|_\infty)$. For ease of exposition here, we stick to the case where $\mathcal{F}_t$ is finite.
\end{remark}

The term $\mathcal{L}_1$ measures how accurate \BC is on states from expert trajectories where $\mathcal{M}(s_t, t)$ is large. Intuitively, if we set $\mathcal{M}(s_t, t) = 1$ on states where \BC is accurate and $\mathcal{M}(s_t, t) = 0$ elsewhere, we would expect this term to be small. $\mathcal{L}_2$ can be thought of a measure of \BC's coverage: it tells us how much of the expert's visitation distribution we believe \BC to be inaccurate on. If \BC has good coverage (i.e. $1 - \mathcal{M}(s_t, t)$ is small on expert trajectories), we expect this term to be small.

Prima facie, one might think that because $\mathcal{L}_1$ resembles the imitation gap of \BC and $\mathcal{L}_2$ resembles that of \MM, \RE can only perform as well as the best of \BC ($\propto H^2 / \Nexp$) and \MM ($\propto H / \sqrt{\Nexp}$) on a given instance. However, with a careful choice of $\mathcal{M}$, one can achieve ``better than both worlds'' statistical rates. In particular, since \RE is a generalization of \mmd of \cite{rajaraman2020toward}, in the tabular setting, an appropriately initialized version of \RE achieves the optimal imitation gap of $\min \left\{ \frac{|\mathcal{S}| H^{3/2}}{\Nexp} , H \sqrt{\frac{|\mathcal{S}|}{\Nexp}} \right\} \log \left( \frac{|\mathcal{S}| H}{\delta} \right)$ and strictly improves over both \BC and \MM.

We now show how to instantiate the membership oracle for parametric function approximation and provide a statistical guarantee under a particular margin assumption.

\section{Parametric Function Approximation: Reduction to Offline Classification}

While \BC can be thought of as a reduction of IL to offline classification, the algorithm does not take into account the knowledge of the transition of the MDP. This is reflected in the quadratic dependency in the horizon, also known as error compounding \cite{ross2011reduction}. This problem is certainly encountered in practice \cite{pomerleau1989alvinn}, and \cite{rajaraman2020toward, swamy2021moments} show that this issue is a problem for \textit{any} offline algorithm (i.e. which doesn't interact with the MDP, or know its transition). This begs the question how can we enable algorithms such as \BC which reduce IL to offline classification to benefit from interaction with the environment.

In this section, under the known transition model, we study a novel reduction of IL with parametric function approximation to \textit{parameter estimation in offline classification}, which we define formally. We provide a provable guarantee, assuming the learner has access to a classification oracle and the underlying function class admits a Lipschitz parameterization.

\begin{definition}[IL with function-approximation] \label{A1-gfa}
In this setting, for each $t \in [H]$, there is a parameter class $\Theta_t \subseteq \mathbb{B}_2^d$, the unit $L_2$ ball in $d$ dimensions, and an associated function class $\{ f_{\theta_t} : \theta_t \in \Theta_t \}$. For each $t \in [H]$ there exists an unknown $\theta^E_t \in \Theta_t$ such that $\forall s \in \mathcal{S}$,
\begin{align}
    \pi^E_t (s) = \argmax_{a \in \mathcal{A}} f_{\theta^E_t} (s,a).
\end{align}
\end{definition}

\begin{definition}[Lipschitz parameterization]
A function class $\mathcal{G} = \{ g_{\theta} : \theta \in \Theta \}$ where $g_{\theta} (\cdot) : \mathcal{X} \to \mathbb{R}$ is said to satisfy $L$-Lipschitz parameterization if, $\| g_{\theta} (\cdot) - g_{\theta'} (\cdot) \|_\infty \le L \| \theta - \theta' \|_2$. In other words, for each $x \in \mathcal{X}$, $g_{\theta} (x)$ is an $L$-Lipschitz function in $\theta$, in the $L_2$ norm.
\end{definition}

The Lipschitz parameterization condition can be interpreted as saying that a small change to the underlying classifier does not all of a sudden change the label of a large mass of points. In particular, points which are classified with a large enough ``margin'' continue to stay in the same class even if the underlying classifier/parameter is perturbed.

\begin{assumption} \label{A:lip:original}
For each $t$, the class $\{ f_{\theta_t} : \theta_t \in \Theta_t \}$ is $L$-Lipschitz in its parameterization, $\theta_t \in \Theta_t$.
\end{assumption}

Recall that \BC is a reduction from IL to offline classification, and assumes that the learner has access to an oracle which, given samples from an underlying ground truth classifier, returns a classifier with small generalization error. In particular, treating the expert dataset at each time $t$ as a collection of $\Nexp$ samples from a classifier $\pi_t (\cdot|\cdot)$ mapping states to actions, the \BC learner uses the oracle to train $H$ classifers on these datasets. \cite{ross2011reduction} show that the imitation gap of the policy induced by these classifiers is bounded by the average generalization error of the $H$ learned classifiers, up to a factor of $H^2$.

In order to extend \RE (which uses the transition of the MDP) to deal with parametric function approximation, and show error guarantees which surpass that achieved by \BC, we assume that the learner has access to a slightly stronger offline classification oracle, which, given access to a dataset of classification examples, \textit{approximately returns the underlying ground truth parameter}. More formally,

\begin{assumption}[Offline classification oracle] \label{A:classorc}
We assume that the learner has access to a multi-class classification oracle, which given $n$ examples of the form, $(s^i, a^i)$ where $s^i \overset{\text{i.i.d.}}{\sim} \mathcal{D}$ and $a^i = \argmax_{a \in \mathcal{A}} f_{\theta^*} (s^i,a)$, returns a $\hat{\theta} \in \Theta$ such that, with probability $\ge 1 - \delta$, $\| \hat{\theta} - \theta^* \|_2 \le \mathcal{E}_{\Theta,n,\delta}$.
\end{assumption}
We assume that this classification oracle is used by \RE to train the \BC policy in Line 3 of \Cref{alg:re}.

A careful reader might note that \Cref{A:classorc} asks for a slightly stronger requirement than just finding a classifier with small generalization error (which need not be close to the ground truth $\theta^*$). The generalization error, i.e. $\mathbb{Pr}_{s \sim \mathcal{D}} \left[ \argmax_{a \in \mathcal{A}} f_{\theta^*} (s,a) \ne \argmax_{a \in \mathcal{A}} f_{\hat{\theta}} (s,a) \right]$ in the notation of \Cref{A:classorc}, was previously studied in \cite{DBLP:journals/corr/DanielySBS13} for multi-class classification. The authors show that up to log-factors in the number of classes (i.e. the number of actions), the \textit{Natarajan dimension} characterizes the generalization error of the best learner, which scales as $\Theta(1/n)$ given $n$ classification examples. For example, in linear classification, which we study later in \Cref{sec:RE-linear-ub}, under some mild regularity assumptions on the input distribution $\mathcal{D}$, we show that the generalization guarantee carries over to approximately learning the parameter $\theta$. In particular, we show that under said regularity conditions, for linear classification (i.e. $f_\theta = \langle \theta, \cdot \rangle$), $\mathcal{E}_{\Theta,n,\delta} \lesssim \frac{d + \log(1/\delta)}{n}$.

Recall that we assume that the learner trains a sequence of $H$ classifiers on the state-actions observed in the expert dataset at each time $t$, using the offline classification oracle in \Cref{A:classorc}. We denote the resulting offline classifiers as $\thetaBC = (\thetaBC_1,\cdots,\thetaBC_H)$.

We are now ready to define the membership oracle under which we study \RE below,
\begin{align} \label{eq:memdef}
    &\mathcal{M} (s,t) {=} \begin{cases}
    +1 \quad \!\! & \text{if } \exists a \in \mathcal{A} \text{ such that, } \forall a' \in \mathcal{A},\ f_{\thetaBC_t} (s,a) {-} f_{\thetaBC_t} (s,a') \ge 2 L \eNtH \\
    0 &\text{otherwise.}
    \end{cases}
\end{align}
The intuitive interpretation of $\mathcal{M}$ is that, state which are classified by \BC as some action with a significant margin are assigned as $+1$, and the remaining states are assigned as $0$ by the membership oracle.

Finally, we impose an assumption on the IL instances we study. We assume that the classification problems solved by \BC at each $t \in [H]$ satisfy a margin condition.

\begin{assumption}[Weak margin condition] \label{A:wm}
For $t \in [H]$ and $\theta \in \Theta_t$, define $a^\theta_s = \argmax_{a \in \mathcal{A}} f_{\theta} (s,a)$ as the classifier output on the state $s$. The weak margin condition with parameter $\mu > 0$ assumes that for each $t$, there is no classifier $\theta \in \Theta_t$ such that for a large mass of states, $f_{\theta} (s_t,a^\theta_{s_t}) - \max_{a \ne a^\theta_{s_t} } f_{\theta} (s_t,a)$, i.e. the ``margin'' from the nearest classification boundary, is small. Formally, the weak-margin condition with parameter $\mu$ states that for any $\eta \le 1/\mu$,
\begin{align} \label{eq:wmc}
    \forall \theta \in \Theta_t,\ \ \mathrm{Pr}_{\dem} \left( f_{\theta} (s_t, a^\theta_{s_t}) - \max_{a \ne a^\theta_{s_t} } f_{\theta} (s_t,a) \ge \eta \right) \ge e^{- \mu \eta}.
\end{align}

\end{assumption}

The weak margin condition only assumes that there is at least an exponentially small (in $\eta$) mass of states with margin at least $\eta$. A smaller $\mu$ indicates a larger mass away from any decision boundary.

\begin{remark}
It suffices to assume that for each $t$, \cref{eq:wmc} is only true for the singular choice $\theta = \thetaBC_t \in \Theta_t$, for our main guarantee (\Cref{thm:gfa}) to hold.
\end{remark}

Under the previously discussed assumptions, we provide a strong guarantee for \RE which uses the classification oracle in \Cref{A:classorc} to define \BC, and the membership oracle as defined in \cref{eq:memdef}.

\begin{theorem} \label{thm:gfa}
For IL with parametric function approximation, under Assumptions~\ref{A:lip:original} to \ref{A:wm}, appropriately instatiating \RE ensures that with probability $\ge 1 - 4\delta$,
\begin{align} \label{eq:gfa}
    J(\dem) - J(\piRE) \lesssim H^{3/2} \sqrt{ \frac{\mu  L \log \left( F_{\max} H/\delta \right)}{\Nexp} \frac{\sum_{t=1}^H \eNtH}{H}} + \frac{\log \left( F_{\max} H /\delta \right)}{\Nexp}.
\end{align}
where $\Fmax = \max_{t \in [H]} |\mathcal{F}_t|$ is as defined in \Cref{thm:main}.

Note that we impose the same assumptions on the policy and discriminator classes employed by \RE as in \Cref{thm:main}. Namely, $(i)$ $\pi^E \in \Pi$, and for each $t\in[H]$, $(ii)$ the ground truth reward function $r_t \in \mathcal{F}_t$, $(iii)$ $\mathcal{F}_t$ is symmetric, i.e. $f_t \in \mathcal{F}_t \iff -f_t \in \mathcal{F}_t$, and $(iv)$ $\mathcal{F}_t$ is $1$-bounded, i.e. for all $f_t \in \mathcal{F}_t$, $\| f_t \|_\infty \le 1$.
\end{theorem}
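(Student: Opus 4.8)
The plan is to instantiate the meta-theorem \Cref{thm:main} with the specific membership oracle of \cref{eq:memdef} and then bound its two error terms $\mathcal{L}_1$ and $\mathcal{L}_2$ using Assumptions~\ref{A:lip:original}--\ref{A:wm}. The assumptions on $\Pi$ and $\mathcal{F}$ required by \Cref{thm:main} are exactly those listed in the statement of \Cref{thm:gfa}, so \Cref{thm:main} applies and gives, with probability $\ge 1-3\delta$, that $J(\dem)-J(\piRE)\lesssim \mathcal{L}_1+\mathcal{L}_2+\log(\Fmax H/\delta)/\Nexp$. It therefore remains to introduce a further ``good event'' of probability $\ge 1-\delta$ on which $\mathcal{L}_1=0$ and $\mathcal{L}_2\lesssim H^{3/2}\sqrt{(\mu L\log(\Fmax H/\delta)/\Nexp)(\sum_t\eNtH/H)}$; a union bound then produces the claimed $1-4\delta$. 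The good event is obtained by invoking the offline classification oracle (\Cref{A:classorc}) at confidence $\delta/H$ on each of the $H$ datasets used in Line~3 of \Cref{alg:re} and union bounding over $t\in[H]$: on this event $\|\thetaBC_t-\theta^E_t\|_2\le\eNtH$ for all $t$, and hence by the Lipschitz parameterization (\Cref{A:lip:original}) $\|f_{\thetaBC_t}(\cdot)-f_{\theta^E_t}(\cdot)\|_\infty\le L\eNtH$ for all $t$.

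For $\mathcal{L}_1$: fix $t$ and a state $s$ in the support of $\dem$ at time $t$ with $\mathcal{M}(s,t)=1$. By \cref{eq:memdef} there is an action $a$ with $f_{\thetaBC_t}(s,a)-f_{\thetaBC_t}(s,a')\ge 2L\eNtH$ for every $a'$, so $\piBC_t(s)=a$. Transferring each of the two functions by at most $L\eNtH$ in sup-norm yields $f_{\theta^E_t}(s,a)-f_{\theta^E_t}(s,a')\ge 2L\eNtH-2L\eNtH\ge 0$ for all $a'$, so $a$ also maximizes $f_{\theta^E_t}(s,\cdot)$ and $\pi^E_t(s)=a=\piBC_t(s)$. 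Thus $\tv(\dem_t(\cdot|s),\piBC_t(\cdot|s))=0$ on precisely the set of states where the factor $\mathcal{M}(s,t)$ is nonzero in the definition of $\mathcal{L}_1$, so $\mathcal{L}_1=0$.

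For $\mathcal{L}_2$: note $1-\mathcal{M}(s_t,t)=1$ exactly when $\thetaBC_t$ classifies $s_t$ with margin $f_{\thetaBC_t}(s_t,a^{\thetaBC_t}_{s_t})-\max_{a\ne a^{\thetaBC_t}_{s_t}}f_{\thetaBC_t}(s_t,a)<2L\eNtH$. Applying the weak margin condition (\Cref{A:wm}, which by the remark following it need only hold for $\theta=\thetaBC_t$) with $\eta=2L\eNtH$ gives, when $2L\eNtH\le 1/\mu$, $\Pr_{\dem}(\text{margin}\ge\eta)\ge e^{-\mu\eta}\ge 1-\mu\eta$, hence $\mathbb{E}_{\dem}[1-\mathcal{M}(s_t,t)]\le\mu\eta=2\mu L\eNtH$; when $2L\eNtH>1/\mu$ the same bound is trivial since $1-\mathcal{M}\le 1<2\mu L\eNtH$. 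Summing over $t$ and substituting $\sum_t\mathbb{E}_{\dem}[1-\mathcal{M}(s_t,t)]\le 2\mu L\sum_t\eNtH$ into the expression for $\mathcal{L}_2$ gives the first term of \cref{eq:gfa} up to an absorbed constant; the residual $\log(\Fmax H/\delta)/\Nexp$ carries over unchanged. Intersecting the two events gives probability $\ge 1-4\delta$ and completes the proof.

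The one non-routine step is the second paragraph: translating the statement $\mathcal{M}(s,t)=1$ — a margin condition on the \emph{learned} parameter $\thetaBC_t$ — into agreement of $\piBC_t$ with $\pi^E_t$, which mixes $\thetaBC_t$ with the \emph{unknown} $\theta^E_t$. This is exactly where the factor $2$ in \cref{eq:memdef} is calibrated so that the $L\eNtH$ perturbation budgets on each side are absorbed; some care is also needed with tie-breaking so that the transferred margin of $\theta^E_t$ yields the same greedy action (a strict version of the oracle bound, or a fixed tie-break consistent with \Cref{A1-gfa}, suffices). The remaining work — splitting the $\delta$-budget over the $H$ oracle calls and handling the two cases of the range constraint $\eta\le 1/\mu$ in \Cref{A:wm} — is bookkeeping.
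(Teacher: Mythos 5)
Your proposal is correct and follows essentially the same route as the paper's proof in \Cref{sec:RE-lipschitz-ub}: invoke \Cref{thm:main}, condition on the event that the classification oracle (called at confidence $\delta/H$ and union-bounded over $t$) yields $\|\thetaBC_t - \theta^E_t\|_2 \le \eNtH$ for all $t$, deduce via Lipschitzness that $\mathcal{M}(s,t)=1$ forces $\piBC_t(s)=\dem_t(s)$ so that $\mathcal{L}_1=0$, and bound $\mathbb{E}_{\dem}[1-\mathcal{M}(s_t,t)] \lesssim \mu L \eNtH$ via the weak margin condition at $\eta = 2L\eNtH$. Your explicit handling of the edge case $2L\eNtH > 1/\mu$ and of tie-breaking in the argmax is slightly more careful than the paper's write-up but does not change the argument.
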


As discussed in \Cref{rem:inffamily}, this result can be extended to infinite discriminator families by replacing $|\mathcal{F}_t|$ by the appropriate log-covering number of $\mathcal{F}_t$ in $L_\infty$ norm.

The intuition behind the result is as follows. By \Cref{A:classorc}, the learner is able to approximately learn $\thetaBC_t \approx \theta^*_t$ at each time $t$. Since the discriminator functions are Lipschitz (\Cref{A:lip:original}) and there are not too many states classified with small margin (\Cref{A:wm}), this means that the states classified with large margin by $\thetaBC_t$ are correctly classified by $\theta^E_t$, while the states which are close to a decision boundary (induced by \BC) may be misclassified by \BC. Therefore, we may set the membership oracle as $+1$ on states classified by \BC with a large margin, and $0$ on states classified with small margin. In particular, the membership oracle considered in \cref{eq:memdef} ensures that on states at which $\mathcal{M} (s,t) > 0$, $\dem_t (\cdot|s) = \piBC_t (\cdot|s)$. Likewise, the states at which $\mathcal{M} (s,t) = 0$ correspond to the states which are classified by \BC with a small margin (i.e. are close to a decision boundary), the probability mass of which is bounded by the weak margin condition, \Cref{A:wm}. All in all, in the language of \Cref{thm:main}, these results ensure that $\mathcal{L}_1 = 0$ and $\mathcal{L}_2$ is significantly smaller than $H^{3/2} \sqrt{\log (\Fmax H /\delta) / \Nexp}$ which essentially results in the proof of \Cref{thm:gfa}.

In order to interpret \Cref{thm:gfa}, we draw the connection back with \BC and \MM. As discussed earlier, \cite{ross2011reduction} prove the best known general statistical guarantee for \BC in terms of the generalization error of the underlying classifiers. In particular, with probability $\ge 1 - \delta$,
\begin{align}
    J(\dem) - J(\piBC) \le \texttt{Gap} (\piBC) \triangleq H^2 \frac{\sum_{t=1}^H \eNtH^{\text{class}}}{H}
\end{align}
Here, $\ent^{\text{class}}$ denotes the best achievable $0$-$1$ generalization error for multi-class classification: in the notation of \Cref{A:classorc}, there exists a learner $\hat{\theta}$, such that on any classification instance $\mathbb{E}_{s \sim \mathcal{D}} [ \max_{a \in \mathcal{A}} f_{\hat{\theta}}(s,a) \ne \max_{a \in \mathcal{A}} f_{\theta^*} (s,a) ] \le \ent^{\text{class}}$ with probability $\ge 1 - \delta$. On the other hand, the best general statistical guarantee for \MM is,
\begin{align}
&J(\dem) - J(\piEMM) \le \texttt{Gap} (\piEMM) \triangleq H \sqrt{\frac{\log \left( F_{\max} H/\delta \right)}{\Nexp}}
\end{align}
(note that we can extend to infinite classes using the covering number argument in \Cref{rem:inffamily}).

Now, whenever the statistical error for parameter estimation matches with the best statistical error for generalization in offline classification, namely, $\ent \asymp \ent^{\text{class}}$ up to problem dependent constants, the guarantee in \Cref{thm:gfa} can be reinterpreted as,
\begin{align} \label{eq:REinterp}
    J(\dem) - J(\piRE) \lesssim \texttt{Gap} (\piEMM) \sqrt{\frac{\texttt{Gap} (\piBC)}{H}}.
\end{align}
The interpretation of this result is that, whenever \BC admits a non-trivial gap on the imitation gap, namely $\texttt{Gap} (\piBC) \ll H$, the performance gap in \cref{eq:REinterp} is $\ll \texttt{Gap} (\piEMM)$. Furthermore, from \cite{DBLP:journals/corr/DanielySBS13}, for multi-class classification, $\ent^{\text{class}} \lesssim \frac{\left( \mathfrak{n}_{\Theta} + \log (1/\delta) \right) \log (n) \log |\mathcal{A}|}{n}$ where $\mathfrak{n}_{\Theta}$ denotes the Natarajan dimension of the function class $\{ f_{\theta} : \theta \in \Theta \}$ and $\mathcal{A}$ denotes the set of labels (which here, are the set of actions). Therefore, from \cref{eq:REinterp}, we get the guarantee,
\begin{align} \label{eq:RE-finalub}
    J(\dem) - J(\piRE) \le \widetilde{O} \left( \frac{H^{3/2}}{\Nexp} \left( \log (\Fmax) \times \frac{\sum_{t=1}^H \mathfrak{n}_{\Theta_t}}{H} \right)^{1/2} \right),
\end{align}
whenever the underlying classification problem allows $\ent^{\text{class}} \asymp \ent$ up to problem dependent constants. Note that the polylogarthmic factors in \cref{eq:RE-finalub} are in $|\mathcal{A}|$, $\Nexp$ and $1/\delta$. Under these conditions, we essentially recover a performance guarantee for \RE which scales as $H^{3/2}/\Nexp$. This improves on the quadratic $H$ dependence incurred by \BC, and is optimal in the worst case, even in the tabular setting with just $3$ states, as shown in \cite{rajaraman2021provably}. This guarantee also suggests a natural measure of complexity for IL - the average Natarajan dimension, $\frac{\sum_{t=1}^H \mathfrak{n}_{\Theta_t}}{H}$ multiplied by the maximum log-covering number of the discriminator (or reward) class, $\log (\Fmax)$.

From this discussion, an important problem stands out:
    \textit{For what kind of classification problems is $\ent \asymp \ent^{\text{class}}$?}
While we do not answer this question in its full generality, focusing on the special case of linear classification and provide a set of sufficient conditions under which $\ent \asymp \ent^{\text{class}}$ up to problem dependent constants. In conjunction with \cref{eq:RE-finalub}, this results in a novel guarantee for IL with linear function approximation, under significantly weaker conditions compared to prior work. We defer the discussion of these results to \Cref{sec:RE-linear-ub}.

Our guarantees for \RE in the presence of parametric function approximation and linear function approximation give us reasons to expect \RE to improve the performance of \MM. We now turn our attention to confirming this holds true in practice.

\vspace{-0.25em}
\section{Practical Algorithm}
\vspace{-0.25em}
\label{sec:praxis}
When considering the implementation of \RE (Alg. \ref{alg:re}) in practice, two main questions arise:
\begin{enumerate}
    \item How does one construct a membership oracle in practice, especially when action spaces may be continuous? 
    \item How does one design good solvers for the moment matching problem in \cref{eq:pire}?
\end{enumerate}
We now provide potential answers to both of these questions.

\textbf{Membership Oracle. } Recall from the interpretation of \Cref{thm:main} that the membership oracle $\mathcal{M}$ intuitively should capture how uncertain \BC is about the expert's action at a state. For continuous action spaces, ideally one would assign the membership oracle at that state based on an appropriate notion of distance between the action played by \BC and that played by the expert. For example, for a sigmoid function $\sigma$ and constants $\mu, \beta$,
\begin{equation} \label{eq:memexp}
    \mathcal{M}_{\texttt{EXP}}(s, t) = \sigma \left (\frac{\mu -\norm{\piBC(s) - \dem(s)}_2}{\beta}  \right),
\end{equation}
However, in the non-interactive setting where the demonstrator cannot be queried at states, we can only approximate this quantity. The first approximation we propose is inspired by Random Network Distillation (RND) \citep{burda2018exploration}, used by \citet{wang2019random} to estimate the support of the expert policy. We instead propose to use RND as a measure of the epistemic uncertainty of \BC about expert actions. That is,
\begin{equation} \label{eq:memrnd}
    \mathcal{M}_{\texttt{RND}}(s, t) = \sigma \left (\frac{\mu -\norm{\piBC(s) - \widehat{\piBC}(s)}_2}{\beta}  \right),
\end{equation}
where $\widehat{\piBC}$ is a network trained to imitate the output of the classifier $\piBC$ on the states observed in the expert dataset. To train $\widehat{\piBC}$, we evaluate $\piBC$ on states observed in the expert dataset, and plug this new dataset into the standard \BC pipeline.

We can also utilize other uncertainty measures like the disagreement of an ensemble to measure epistemic uncertainty, which has previously shown success on various simulated sequential decision making tasks \citep{pathak2019selfsupervised}. Past work by \citet{brantley2019disagreement} proposes to regularize the standard \BC (classification) error, by the variance of an ensemble of independently trained \BC learners at the states visited by the learner's policy. This encourages the learner to mimic \BC on the states where the action predicted by all the policies in the ensemble are similar, and avoid states where they are different (i.e. the variance at these states is high).
In contrast, we define,
\begin{equation} \label{eq:memvar}
    \mathcal{M}_{\texttt{VAR}}(s, t) = \sigma \left (\frac{\mu - \text{Var}(\pi^{\texttt{BC} (1)} (s), \dots, \pi^{\texttt{BC} (k)} (s))}{\beta}  \right),
\end{equation}
where $\{ \pi^{\texttt{BC} (1)}, \cdots, \pi^{\texttt{BC} (k)} \}$ are \BC policies trained with different initializations, which produces sufficient diversity when using deep networks as function approximators. 

Lastly, we can also use the maximum difference across the ensemble as a measure of uncertainty:
\begin{equation} \label{eq:memmax}
    \mathcal{M}_{\texttt{MAX}}(s, t) = \sigma \left (\frac{\mu - \max_{i, j \in [k]}\norm{\piBC_i(s) - \piBC_j (s)}_2}{\beta}  \right),
\end{equation}
as suggested by the work of \citet{kidambi2021morel}. We compare these four choices below. For computing prefix weights, we use the average of distances up till the current timestep instead of the sum of distances that would follow directly from the idealized algorithm presented in \sref{sec:theory}. This modification serves to improve the numerical stability of our method.

\begin{figure*}[t]
\centering
\includegraphics[width=0.28\textwidth]{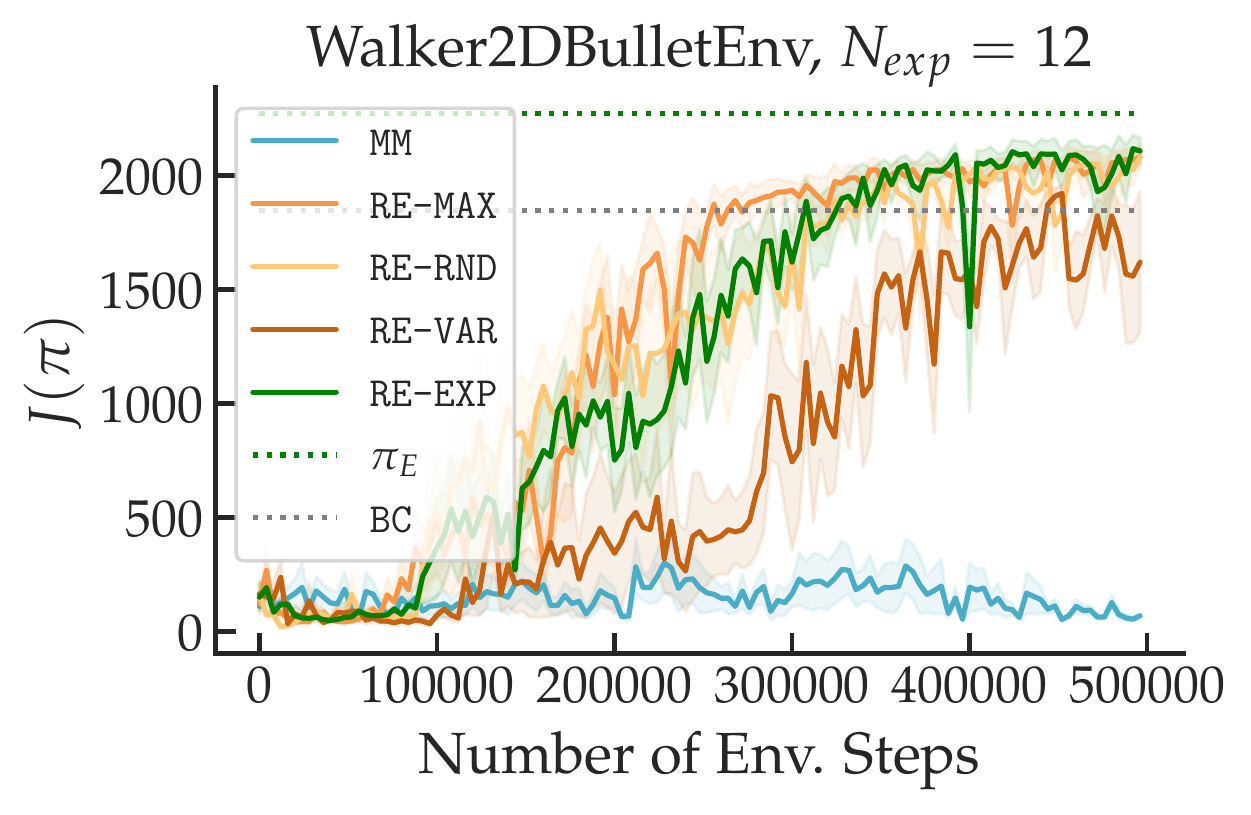}
\includegraphics[width=0.28\textwidth]{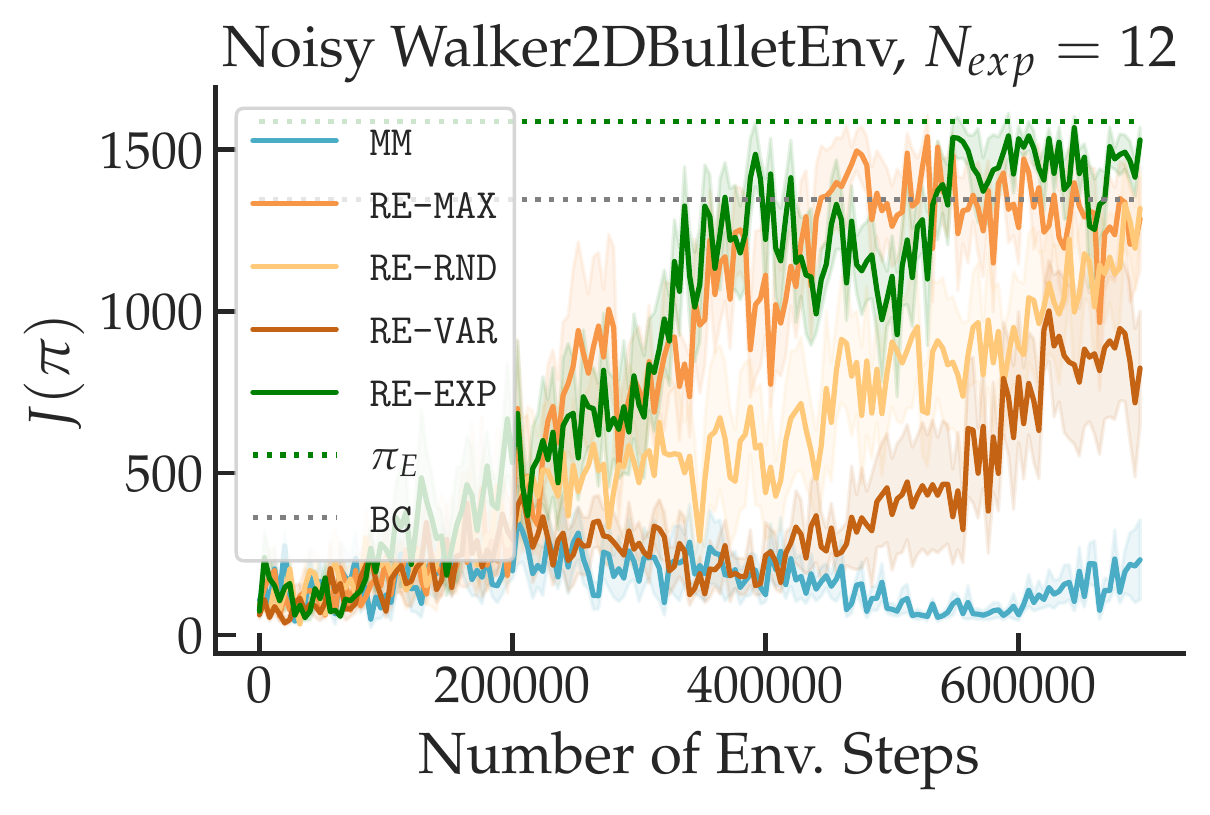}
\includegraphics[width=0.21\textwidth]{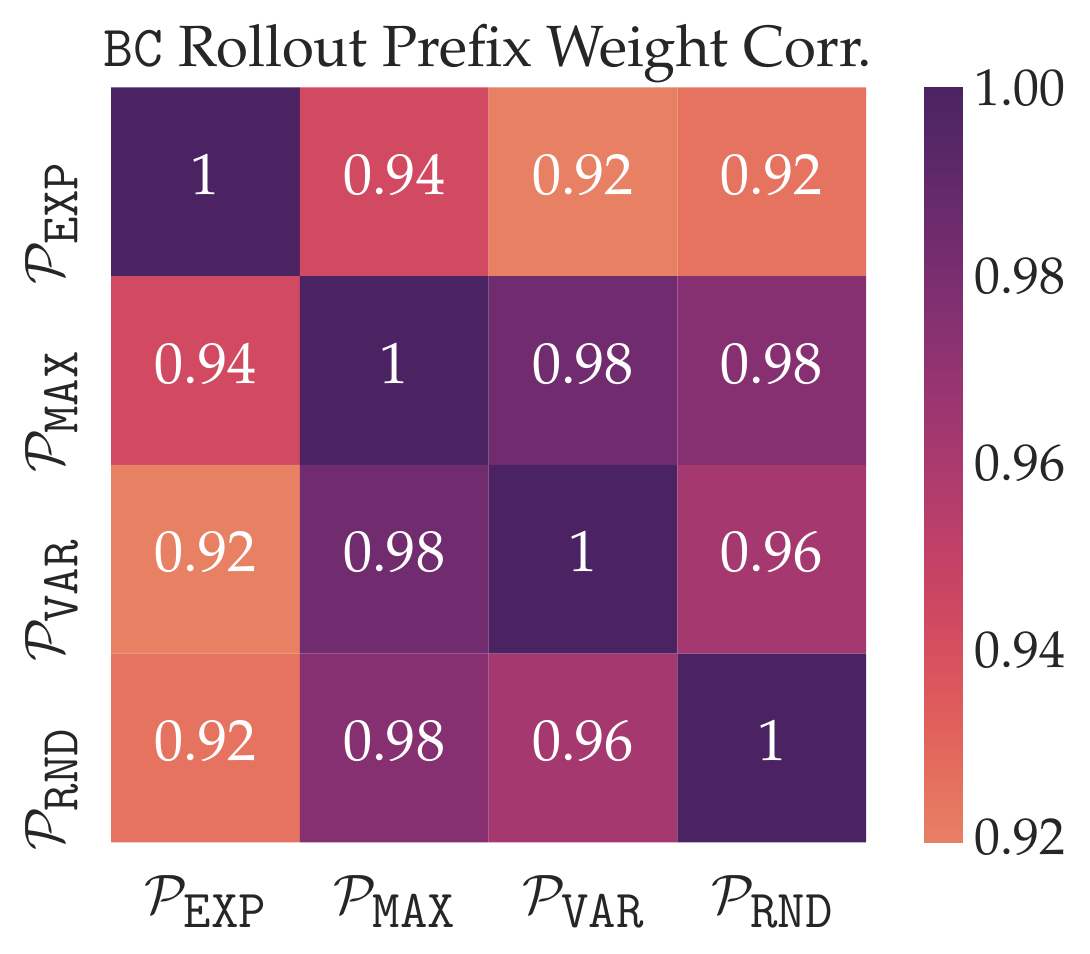}
\caption{\textbf{Left:} All variants of \RE are able to nearly match expert performance while \MM struggles to make any progress. \textbf{Center:} We add i.i.d. noise to the environment to make the control problem more challenging. \RE is still able to match expert performance, unlike \MM. \textbf{Right:} We compute correlations between the idealized prefix weights of $\mathcal{M}_{\texttt{EXP}}$ and the other oracles and see $\mathcal{M}_{\texttt{MAX}}$ correlate most. \label{fig:res}}
\end{figure*}
\begin{figure*}[t]
\centering
\includegraphics[width=0.245\textwidth]{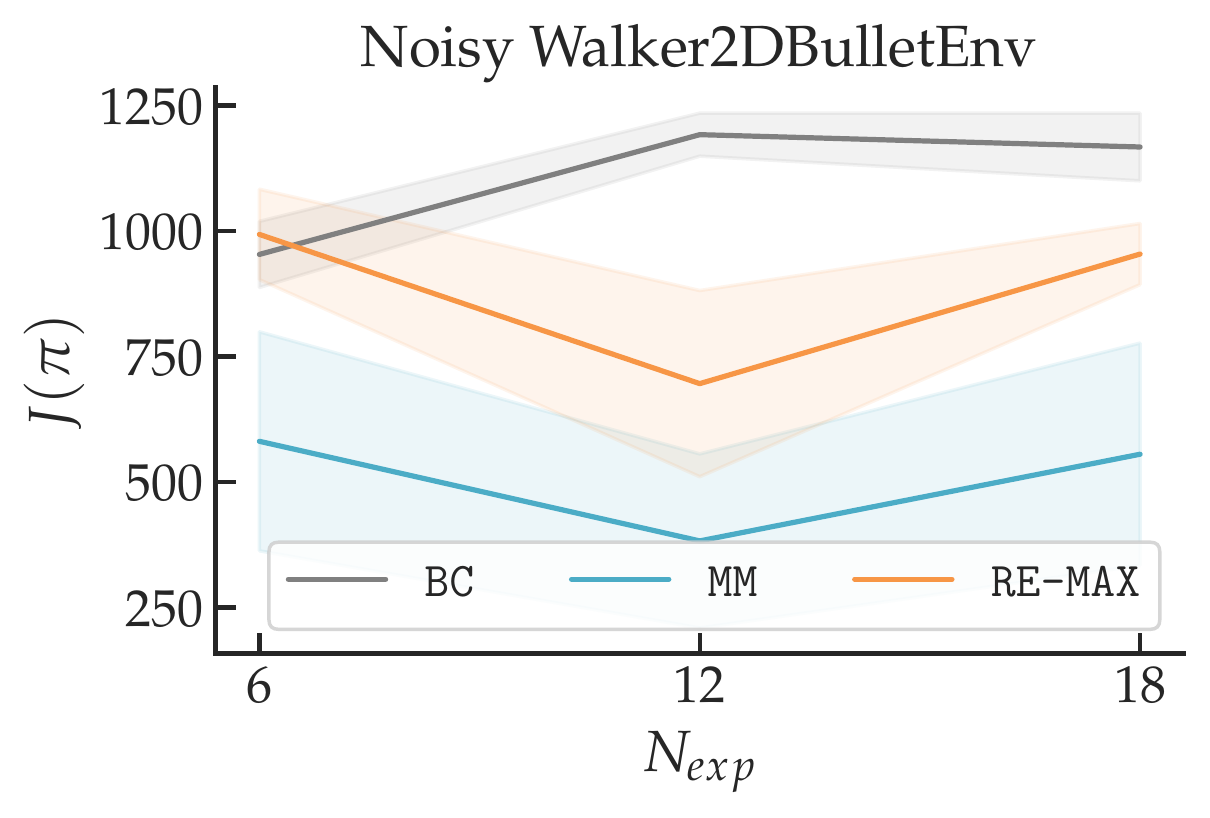}
\includegraphics[width=0.245\textwidth]{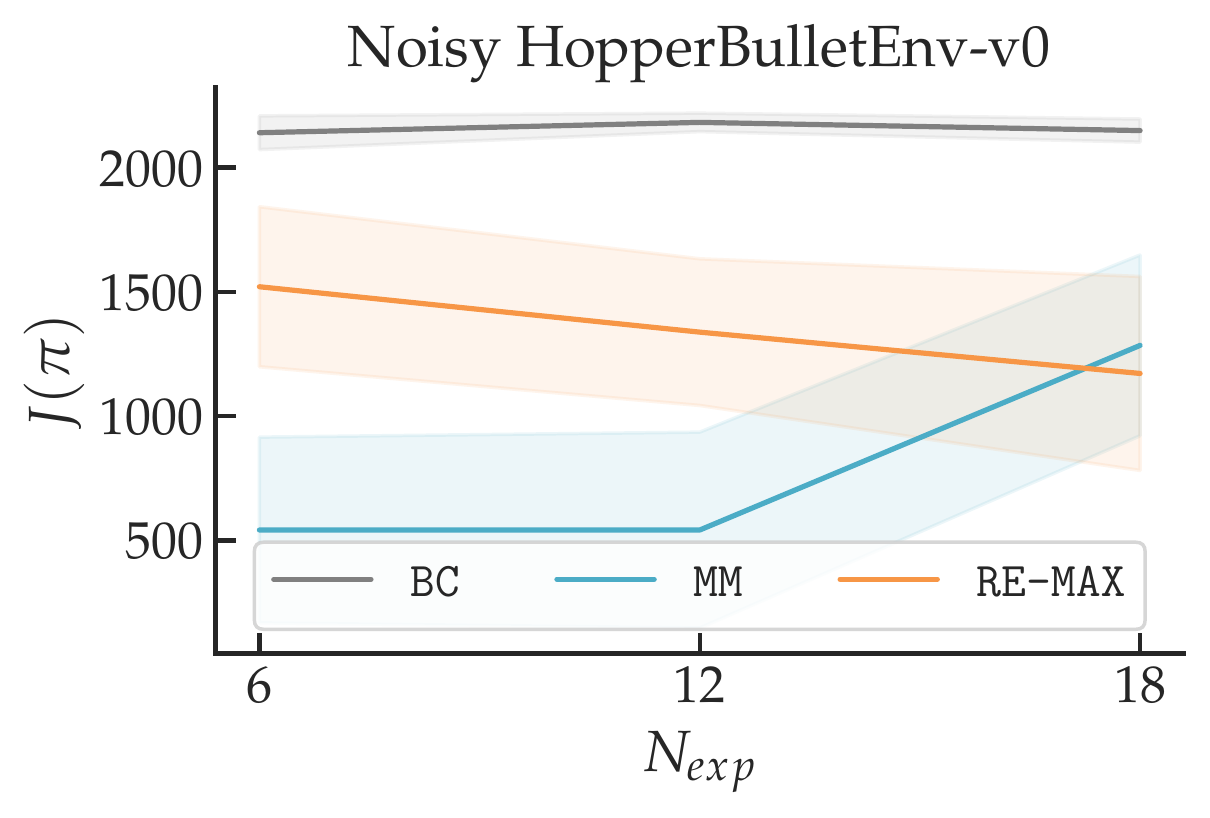}
\includegraphics[width=0.245\textwidth]{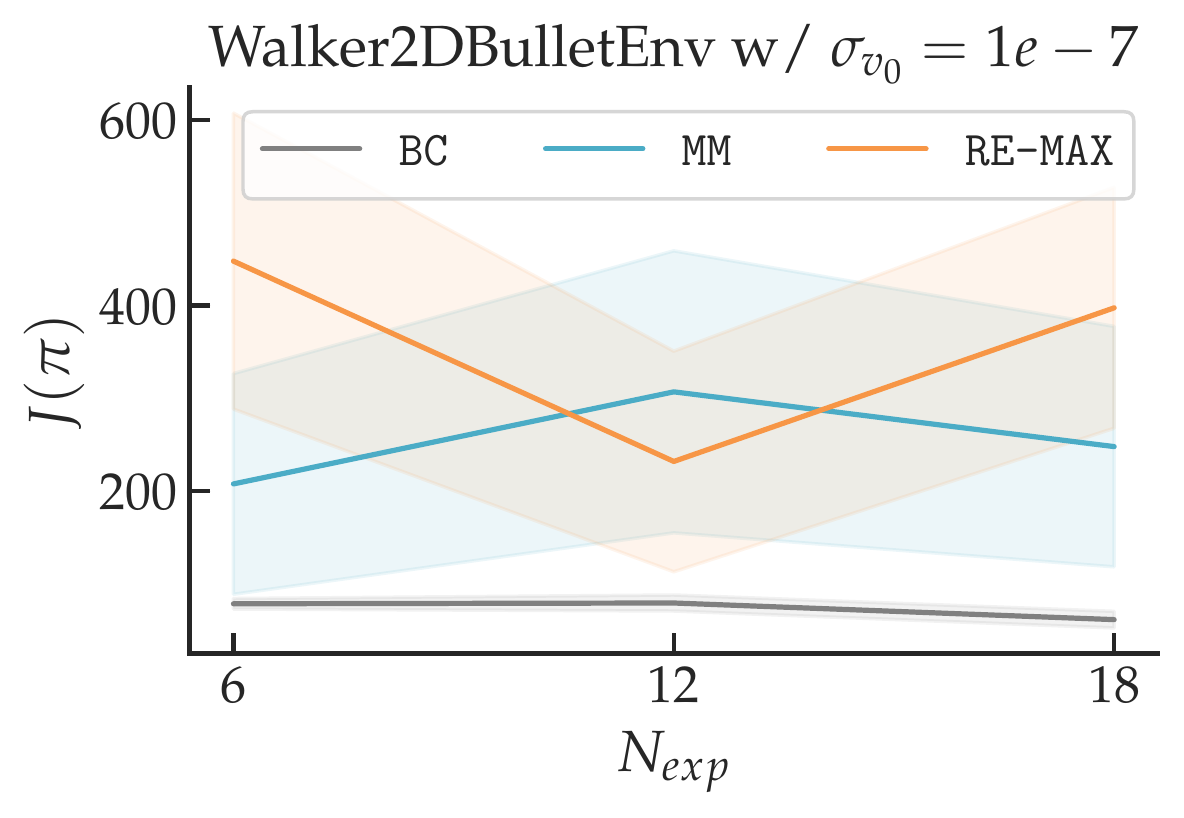}
\includegraphics[width=0.245\textwidth]{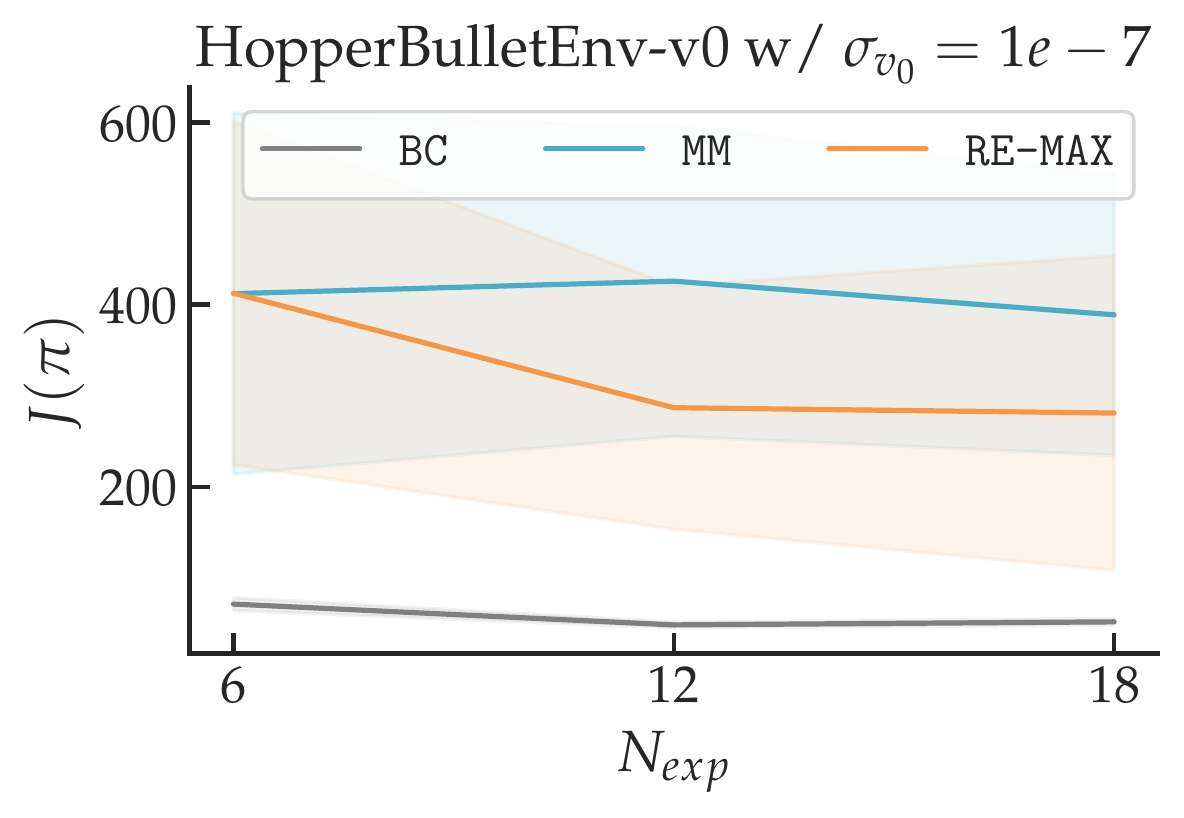}
\caption{We see \texttt{RE} with $\mathcal{M}_{\texttt{MAX}}$ improve the performance of \MM on the Noisy Walker2DBulletEnv and HopperBulletEnv tasks. We see \texttt{RE} (and \MM) out-perform \BC on the initial-state-perturbed Walker2DBulletEnv and HopperBulletEnv tasks \label{fig:moreres}.}
\end{figure*}

\textbf{Moment Matching. } 

We implement approximate Nash equilibrium computation of \eqref{eq:pire} by running a no-regret learner against a best-response counterpart \citep{swamy2021moments}. Our approach is related to the GAIL algorithm of \citet{ho2016generative} which we improve in $4$ ways: \textit{(i)} we use a general Integral Probability Metric \cite{muller1997integral} instead of the Jensen-Shannon Divergence used in the original paper which improves the representation power of the the discriminator,
\textit{(ii)} we add in gradient penalties to the discriminator, which improves convergence rates \cite{gulrajani2017improved}, \textit{(iii)} we solve the entropy-regularized forward problem via Soft-Actor Critic \cite{haarnoja2018soft} as the policy optimizer, for improved sample efficiency, and \textit{(iv)} we use optimistic mirror descent instead of gradient descent as our optimization algorithm for both players, giving us faster convergence to Nash equilibria, both in theory \cite{syrgkanis2015fast} and in practice \cite{daskalakis2017training}. Together, these changes lead to an implementation which \textit{significantly out-performs the original}, giving us a strong baseline to compare against. We include an ablation to confirm this fact in \Cref{app:more_res}. %
We emphasize that the \RE technique can be used to improve \textit{any} online moment matching algorithm and that the above description is merely the approach we chose for this paper.

\vspace{-0.75em}

\section{Experimental Results}
\vspace{-0.5em}
We now quantify the empirical benefits of \RE on several continuous control tasks from the the PyBullet suite \cite{coumans2019}. All the task we consider have long horizons ($H \approx 1000$) and we use relatively few demonstrations. ($\Nexp \leq 20$). We set $\Nrep$ as 100 \BC rollouts (Line 4 of \Cref{alg:re}). We test all four membership oracles from the previous section ($\mathcal{M}_{\texttt{EXP}}$ as an idealized target, $\mathcal{M}_{\texttt{RND}}$, $\mathcal{M}_{\texttt{VAR}}$, and $\mathcal{M}_{\texttt{MAX}}$ as practical solutions). In \figref{fig:res} (left), we see that with only twelve trajectories, \RE is able to reliably match expert performance for all oracles considered, while \MM is not. The environment considered in this experiment is nearly deterministic, indicating that \RE can help even when the environment is not stochastic. We hypothesize that the randomness in the initial state is sufficient for replay estimation to generate a significant improved estimate of the state-action visitation measure. This improvement is especially interesting considering both of the examples we study in \sref{sec:theory} were heavily stochastic. We see a similar result in \figref{fig:res} (center), where we add in i.i.d. noise to the environment dynamics at each timestep. This makes the problem significantly more challenging than the standard version of the Walker task. \RE is still able to match expert performance, with $\mathcal{M}_{\texttt{MAX}}$ working notably well. The correlation plot in \figref{fig:res} (right) shows us $\mathcal{M}_{\texttt{MAX}}$ appears to be best correlated with the idealized prefix weights, $\mathcal{M}_{\texttt{EXP}}$ under the state distribution induced by \BC. Because of its superior performance, we use $\mathcal{M}_{\texttt{MAX}}$ for the rest of our experiments. In the left half of \figref{fig:moreres}, we see \RE improve the performance of \MM. In the right half, we see \RE out-perform \BC in responding to an \textit{extremely tiny} amount of noise added to the initial velocity of the agent (similar to the experiments of \citet{reddy2019sqil})  -- we defer more details to \Cref{app:more_res}). These results indicate that \RE can out-perform \MM and \BC, agreeing with our theory. We release our code at \textbf{\texttt{\url{https://github.com/gkswamy98/replay_est}}}. \footnote{
After the initial publication of this paper, we continued to tune our baseline and fix bugs in our method's implementation. We then updated the result plots for both of the Noisy environments. Compared to their original performances, \texttt{MM} scores higher and \texttt{RE} scores lower. However, the \texttt{RE} still significantly out-performs \texttt{MM}.}

\section{Discussion and open problems}

In this paper we propose and analyze a new meta-algorithm for imitation learning called Replay Estimation (\RE). The key technique behind \RE is to improve the quality of moment estimates by rolling out artificial trajectories by using the knowledge of the expert policy at states. This approach can be implemented on top of any online moment-matching algorithm (e.g. \cite{ziebart2008maximum}, \cite{ho2016generative}, \cite{swamy2021moments}) to improve performance. We prove a general meta theorem bounding the imitation gap of the policies produced by the algorithm in terms of the parameter estimation error in offline classification. This results in a guarantee for IL with the optimal dependence on the horizon, $H$, and number of expert rollouts available, $\Nexp$, under the assumption that the parameter estimation guarantee matches the generalization guarantee for the underlying offline classification problem. Under these conditions, the analysis also suggests a natural measure of complexity for IL depending on the average Natarajan dimension and log-covering number (metric entropy) of the discriminator class. It is a significant open question to extend the analysis of \RE to depend on less stringent classification oracles, and only require constructing learners with bounded generalization error, as required by \BC. This is motivated by a briefly discussion on the drawbacks of the offline classification oracle we consider in \Cref{A:classorc}.
\paragraph{Drawbacks of the offline classification oracle (\Cref{A:classorc}).} Note that the offline classification oracle we consider in \Cref{A:classorc} requires a learner with bounded parameter estimation error, compared to one with bounded generalization error which is typically studied in practice \citep{DBLP:journals/corr/DanielySBS13}. While under certain conditions, both measures of error have the same asymptotic scaling in $n$ and the Natarajan dimension $\mathfrak{n}_{\Theta}$, in general they can be different. For instance, consider the case of linear (binary) classification under very poor coverage: denoting the true classifier as $\theta^* \in \mathbb{R}^d$, and with the input distribution as $s \sim \mathrm{Unif} ( \{e_1,-e_1\} )$ where $e_1 = (1,0,\cdots,0) \in \mathbb{R}^d$. Then, given $n = \Omega( \log(1/\delta) )$ samples, there exists a learner which with probability $1-\delta$ learns a classifier with $0$ generalization error.
However, regardless of the number of samples $n$, no learner can guarantee to learn $\theta^*$, since the remaining $d-1$ coordinates of $\theta^*$ aside from $\theta_1^*$ do not affect the labels of the inputs, which are what are observed by a learner. Thus, under poor coverage, the parameter $\theta^*$ cannot be learned consistently even though there exists a trivial classifier with bounded generalization error.
\paragraph{Designing practical algorithms which do not require parameter tuning.} The membership oracle resulting from the analysis in theory (\cref{eq:memdef}) as well as those implemented in practice (\cref{eq:memexp,eq:memrnd,eq:memvar,eq:memmax}) require tuning either the margin threshold $2 L \eNtH$, or the scale parameters $\mu$ and $\beta$. An interesting next direction would be to develop an algorithm which uses data dependent scales (e.g. \cite{haarnoja2018soft}) to reduce the effort required to fine-tuning these parameters.

\clearpage
\bibliography{arxiv-version}
\bibliographystyle{icml2022}

\newpage

\newpage
\appendix
\onecolumn

\newpage

\section{Proofs}
\label{app:proofs}

\subsection{Notation}

In this appendix, we use the notation $d_t^\pi (\cdot,\cdot)$ to indicate the state-action visitation measure induced by the policy $\pi$ at time $t$. We overload the notation $d_t^\pi (\cdot)$ to denote the state-visitation measure induced by the policy $\pi$ at time $t$. Likewise, the notations $d_t^D (\cdot,\cdot)$ and $d_t^D (\cdot)$ indicate the empirical visitation measures in the dataset $D$. For a function $g : \mathcal{X} \to \mathbb{R}$, the norm $\| g \|_\infty \triangleq \sup_{x \in \mathcal{X}} | g(x)|$.

Before discussing the proofs of the results, we also explain the instantiation of the function class in the tabular setting below.

\begin{remark} \label{rem:1}
In the tabular setting, we instantiate the discriminator class as $\mathcal{F}_t = \{ f_t : \| f_t \|_\infty \le 1 \}$ for each $t$, as the set of all $1$-bounded functions, and the policy class $\Pi$ as the set of all policies. \cref{eq:emm} corresponds to finding a policy which best matches the empirical state-action visitation measure observed in the dataset $D$ in \textit{total variation (TV) distance} (see \Cref{sec:MM-ub} for a proof).
\end{remark}

\subsection{Imitation gap upper bound on empirical moment matching (\Cref{theorem:folklore})} \label{sec:MM-ub}

Below we restate \Cref{theorem:folklore} and provide a proof of this result. The key observation is that since the learner $\piEMM$ best matches the empirical distribution in the dataset, which is in turn close to the population visitation measure induced by $\pi^E$, we can expect the visitation measure induced by $\pi^E$ and $\piEMM$ to be close. This in turns implies that both policies will collect a similar value under any reward function. Precisely characterizing the rates at which these distributions converge to one another results in the final bound.

\folklore*
\begin{proof}
Recall that the learner $\piEMM$ is the solution to the following optimization problem,
\begin{align}
    \argmin_\pi \sup_{ f \in \mathcal{F}} \left\{ \mathbb{E}_{\pi} \left[\frac{ \sum_{t=1}^H f_t (s_t,a_t)}{H} \right] - \mathbb{E}_{D} \left[\frac{ \sum_{t=1}^H f_t (s_t,a_t)}{H} \right] \right\}
\end{align}
Exchanging the summation and maximization operators and recalling from \Cref{rem:1} that in the tabular setting, the discriminator class $\mathcal{F}$ is instantiated as the set of all $1$-bounded functions $\bigoplus_{t=1}^H \{f_t : \| f_t \|_\infty \le 1 \}$, $\piEMM$ is a solution to
\begin{equation} \label{eq:altobj}
    \argmin_\pi \frac{1}{H} \sum_{t=1}^H \Big( \sup_{f : \| f \|_\infty \le 1} \mathbb{E}_{\pi} \left[ f_t (s_t,a_t) \right] - \mathbb{E}_{D} \left[ f_t (s_t,a_t) \right] \Big) = \argmin_\pi \frac{1}{H} \sum_{t=1}^H \tv \left( d_t^\pi , d_t^D \right)
\end{equation}
where the equation follows by the variational definition of the total variation distance, and where $d_t^\pi$ is the state-action visitation measure induced by $\dem$ and $d_t^D$ is the empirical state-action visitation measure in the dataset $D$. 
The imitation gap of this policy can be upper bounded by,
\begin{align}
    J (\pi^E) - J(\piEMM) &= \mathbb{E}_{\pi^E} \left[ \sum_{t=1}^H r_t (s_t,a_t) \right] - \mathbb{E}_{\piEMM} \left[ \sum_{t=1}^H r_t (s_t,a_t) \right] \\
    &\overset{(i)}{\le} \sum_{t=1}^H \sup_{r_t : \| r_t \|_\infty \le 1} \Big( \mathbb{E}_{\pi^E} \left[  r_t (s_t,a_t) \right] - \mathbb{E}_{\piEMM} \left[ r_t (s_t,a_t) \right] \Big) \\
    &\overset{(ii)}{=} \sum_{t=1}^H \tv \left( d_t^{\pi^E} (\cdot,\cdot) , d_t^{\piEMM} (\cdot,\cdot) \right)
\end{align}
where $(i)$ maximizes over the reward function which is assumed to lie in the interval $[0,1]$ pointwise. $(ii)$ again follows from the variational definition of total variation distance. This goes to show that in the tabular setting, \MM is equivalent to finding the policy which best matches (in TV-distance) the empirical state-action distribution observed in the dataset.

By an application of triangle inequality,
\begin{align}
    J (\pi^E) - J(\piEMM) &\le \sum_{t=1}^H \tv \left( d_t^{\pi^E} (\cdot,\cdot), d_t^D (\cdot,\cdot) \right) + \tv \left( d_t^D (\cdot,\cdot), d_t^{\piEMM} (\cdot,\cdot) \right) \\
    &\le 2\sum_{t=1}^H \tv \left( d_t^{\pi^E} (\cdot,\cdot), d_t^D (\cdot,\cdot) \right) \label{eq:final-decomp}
\end{align}
where $(i)$ follows from \cref{eq:altobj} which shows that $\piEMM$ is the policy which best approximates the empirical state-action visitation measure in total variation distance, and therefore $\tv \left( d_t^{\piEMM} (\cdot,\cdot), d_t^D (\cdot,\cdot) \right) \le \tv \left( d_t^{\pi^E} (\cdot,\cdot), d_t^D (\cdot,\cdot) \right)$. The final element is to identify the rate of convergence of the empirical visitation measure $d_t^D$, to the population $d_t^{\pi^E}$ in total variation distance. This result is known from Theorem~1 of \cite{han2015minimax}, which shows that $\mathbb{E} \left[ \tv \left( d_t^{\pi^E} (\cdot,\cdot), d_t^D (\cdot,\cdot) \right) \right] \lesssim \sqrt{\frac{|\mathcal{S}|}{\Nexp}}$ noting that $d_t^{\pi^E}$ is a distribution with support size $|\mathcal{S}|$ since $\dem$ is deterministic. Putting it together with \cref{eq:final-decomp} after taking expectations on both sides gives,
\begin{align}
    J (\pi^E) - J(\piEMM) &\lesssim \sum_{t=1}^H \sqrt{\frac{|\mathcal{S}|}{\Nexp}} = H \sqrt{\frac{|\mathcal{S}|}{\Nexp}}.
\end{align}
This completes the proof of the result.
\end{proof}

\subsection{Lower bounding the Imitation gap of Behavior Cloning} \label{sec:BC-lb}

Since \BC is an offline algorithm - namely, the learner does not interact with the MDP, any lower bound against offline algorithms applies for behavior cloning as well. The lower bound instance in \cite{rajaraman2020toward} is one such example. One state in the MDP is labelled as the ``bad'' state, $b$, which is absorbing and offers no reward. The remaining states each have a single ``good'' action which re-initializes the policy in a particular distribution $\rho$ on the set of good states, and offering a reward of $1$. The other actions at these states are ``bad'' and transition the learner to the bad state $b$ with probability $1$.

The key idea in the lower bound is that any offline algorithm does not know (i) which action the expert would have chosen at states unvisited in the dataset, and (ii) which action does what at these states. At best, the learner can correctly guess the good actions at a state with probability $1/|\mathcal{A}| \le 1/2$. So, at each state unvisited in the dataset, the learner has a constant probability of getting stuck at the bad state in the MDP and collecting no reward then on. On the other hand, the expert would never choose bad actions at states and collects the maximum possible reward. By carefully counting the probability mass on the unvisited states, the expected imitation gap of any offline IL algorithm (such as \BC) can be lower bounded by $\Omega (|\mathcal{S}| H^2/\Nexp)$ on these instances. Thus we have the following theorem,

\begin{theorem}[Theorem~6.1 of \cite{rajaraman2020toward}]
Consider any learner $\widehat{\pi}$ which carries out an offline imitation learning algorithm (such as behavior cloning). Then, there exists an MDP instance such that the expected imitation gap is lower bounded by,
\begin{align}
    \mathbb{E} \left[ J(\pi^E) - J(\widehat{\pi}) \right] \gtrsim \min \left\{ H, \frac{|\mathcal{S}|H^2}{\Nexp} \right\}.
\end{align}
\end{theorem}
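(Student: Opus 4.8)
This is a minimax lower bound against an \emph{arbitrary} offline learner (behavior cloning being one instance), so I would route through Yao's principle: construct a distribution $\mathcal{P}$ over (MDP, expert) pairs on a common state space of size $|\mathcal{S}|$, lower bound the expected imitation gap of the Bayes-optimal learner for $\mathcal{P}$, and then note that for any fixed $\widehat{\pi}$ there is an instance in the support on which $\widehat{\pi}$ does at least as badly. So it suffices to exhibit $\mathcal{P}$ with $\E_{\mathcal{P}}[J(\pi^E)-J(\widehat{\pi})]\gtrsim\min\{H,\ |\mathcal{S}|H^2/\Nexp\}$ for every $\widehat{\pi}$.

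The hard family is a ``heavy-tailed reset'' MDP. Set $k=\min\{|\mathcal{S}|-2,\ \lfloor c\Nexp/H\rfloor\}$ for a small absolute constant $c$. The states are an absorbing bad state $b$ (every action self-loops, reward $0$), a single heavy state $h$, and $k$ light states $\ell_1,\dots,\ell_k$. The reset distribution $\rho$ puts mass $\approx 1/\Nexp$ on each light state and the remaining $\approx 1-k/\Nexp$ on $h$. The expert and transitions are non-stationary: independently for every $t\in[H]$ and every non-bad state $s$, sample a good action $a^\star_{s,t}\sim\mathrm{Unif}(\mathcal{A})$; at $(s,t)$ playing $a^\star_{s,t}$ yields reward $1$ and resets the state according to $\rho$, while every other action yields reward $0$ and transits deterministically to $b$. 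The expert always plays $a^\star_{s,t}$, so $J(\pi^E)=H$ and $d_t^{\pi^E}=\rho$ for all $t$. Randomizing the good action per $(s,t)$ is the point: it prevents the learner from pooling demonstrations across timesteps, so the effective sample size at each $t$ is only $\Nexp$.

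I would then carry out three estimates. (i) \emph{Coverage.} The time-$t$ slice of the dataset is $\Nexp$ i.i.d.\ draws from $\rho$, so a light state (mass $\approx 1/\Nexp$) is absent from it with probability $(1-1/\Nexp)^{\Nexp}\approx e^{-1}$, whence the expected $\rho$-mass $u_t$ of states unseen at time $t$ satisfies $\E[u_t]\asymp k/\Nexp$. (ii) \emph{Forced guessing.} On any state unseen at time $t$, the posterior on $a^\star_{s,t}$ remains uniform on $\mathcal{A}$, so whatever $\widehat{\pi}$ plays equals $a^\star_{s,t}$ with probability $\le 1/|\mathcal{A}|\le 1/2$; hence, conditioned on being out of $b$ at the start of step $t$, the learner falls into $b$ during step $t$ with probability at least $\tfrac12 u_t$ — here I use that every surviving transition resets to $\rho$, so conditioned on survival the learner's state at time $t$ is a fresh $\rho$-draw, independent of which light states are missing from the time-$t$ slice. (iii) \emph{Compounding.} Writing $p_{t+1}$ for the probability the learner is still out of $b$ at step $t+1$, these give $1-p_{t+1}\ge\tfrac14\sum_{s\le t}\E[u_s]\gtrsim tk/\Nexp$, using a first-order expansion that is valid because $\sum_{s\le H}u_s\le Hk/\Nexp\le 1/2$ by the choice of $k$. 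Summing over $t$, $\E[J(\pi^E)-J(\widehat{\pi})]=\sum_{t=1}^{H}(1-p_{t+1})\gtrsim\frac{k}{\Nexp}\sum_{t=1}^{H}t\asymp\frac{kH^2}{\Nexp}=\min\{\,|\mathcal{S}|H^2/\Nexp,\ H\,\}$, and Yao's principle converts this into a fixed hard instance.

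I expect the main obstacle to be the bookkeeping in steps (ii)--(iii): one has to argue cleanly that, conditioned on ``not yet absorbed'', the learner's state distribution at each time is exactly $\rho$ and independent of the dataset's coverage pattern, since this independence is precisely what makes the per-step death probabilities multiply and the $\Theta(1/\Nexp)$-per-state coverage deficit accumulate into a gap quadratic in $H$. A secondary subtlety — and the reason for the non-stationary construction — is ruling out a learner that effectively enlarges its sample size by sharing examples across timesteps. Finally, the truncation $k\le\lfloor c\Nexp/H\rfloor$ is what produces the $\min$ with $H$: once $|\mathcal{S}|\gtrsim\Nexp/H$ the per-step survival probability can no longer be held $\Omega(1)$ over the whole horizon, so the construction simply declines to use all of the available states.
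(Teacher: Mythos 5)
Your proposal is correct and follows essentially the same route as the paper, which only sketches this result by citing the construction of \cite{rajaraman2020toward}: an absorbing zero-reward ``bad'' state, a single good action per state that resets according to a distribution $\rho$ and pays reward $1$, all other actions leading to the bad state, and a counting argument over the $\rho$-mass of states unvisited in the dataset where the learner is forced to guess. Your write-up simply supplies the details the paper delegates to the citation (the explicit choice of $\rho$ with $\Theta(1/\Nexp)$-mass light states, the per-$(s,t)$ randomization of the good action to prevent pooling across timesteps, and the Yao/averaging step), all of which match the cited construction.
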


\subsection{Lower bounding the imitation gap of Empirical Moment Matching} \label{sec:MM-lb}

In this section, we show that in the tabular setting, empirical moment matching is suboptimal compared for imitation learning in the worst-case. The main result we prove in this section is,

\tvlb*

\begin{remark}
It is known from \cite{rajaraman2020toward} that the \texttt{Mimic-MD} algorithm achieves an imitation gap of $\min \left\{ H, \frac{\mathcal{S}| H^{3/2}}{\Nexp}, H\sqrt{\frac{|\mathcal{S}|}{\Nexp}} \right\}$. This is always better than the worst case error bound incurred by TV distribution matching from \Cref{theorem:tvlb}. In fact when $\Nexp \gtrsim \sqrt{H}$ the bound $\frac{H^{3/2}}{\Nexp}$ is significantly better than $\frac{H}{\sqrt{\Nexp}}$ which decays as $1/\sqrt{\Nexp}$. This is illustrated in \Cref{fig:samp} .
\end{remark}

First note that the learner $\piEMM$ carries out empirical moment matching (\cref{eq:emm}), with the discriminator class $\mathcal{F}$ as initialized in \Cref{rem:1}. As shown in \cref{eq:altobj}, the empirical moment matching learner can be redefined as the solution to a distribution matching problem,
\begin{equation} \label{eq:altobj-restate}
    \argmin_\pi \frac{1}{H} \sum_{t=1}^H \tv \left( d_t^\pi (\cdot,\cdot) , d_t^D (\cdot,\cdot) \right)
\end{equation}

Consider an MDP instance with $2$ states and $2$ actions with a non-stationary transition and reward structure as described in Figure~\cref{fig:mm-lowerbound}. State $1$ effectively has a single action (i.e. two actions, $a_1$ and $a_2$ with both inducing the same next-state distribution and reward). One of the actions at state $2$ induces the uniform distribution over next states. The other action deterministically keeps the learner at state $2$. The reward function is $0$ at $t=1$, and the action $a_2$ at state $2$ is the only one which offers a reward of $1$. The initial state distribution is highly skewed toward the state $s=1$ and places approximately $1/\sqrt{\Nexp}$ mass on $s=2$ and the remaining on $s=1$.

\begin{figure}[h!]
\centering
\begin{tikzpicture}[shorten >=1pt,node distance=1.25cm,on grid,auto,good/.style={circle, draw=expert, very thick, minimum size=9mm, inner sep=0pt},bad/.style={circle, draw=black, fill=red!30, thick, minimum size=9mm, inner sep=0pt}]
\tikzset{every edge/.style={very thick, draw=error!90!black}}    \tikzset{every loop/.style={min distance=9mm,in=86,out=125, looseness=8, very thick, draw=red!90!black}}
    
\node[good, color=expert]     (s_1)                   {$s_1$};
\node           (dist1) [right = 2cm of s_1]  {$\mathrm{Unif} (\mathcal{S})$};
\node[good, color=expert]      (s_2)     [right = 4cm of s_1]  {$s_2$};

\path[->,>=stealth]
(s_1) edge [draw=expert] node {} (dist1)
(s_2)
      edge [draw=expert] node {} (dist1);
    \node [circle, minimum size=0.75cm](d) at ([{shift=(60:0.4)}]s_2){};
    \coordinate (e) at (intersection 2 of s_2 and d);
    \coordinate (f) at (intersection 1 of s_2 and d);
    \tikzAngleOfLine(d)(f){\AngleStart}
    \tikzAngleOfLine(d)(e){\AngleEnd}
    \draw[very thick,->, color=error]%
    let \p1 = ($ (d) - (f) $), \n2 = {veclen(\x1,\y1)}
    in
        (d) ++(60:0.4) node{}
        (f) arc (\AngleStart-360:\AngleEnd:\n2);
\end{tikzpicture}
\caption{MDP instance which shows that $L_1$ distribution matching is suboptimal. Here the transition structure is illustrated for $t=1$. Both states have one action which reinitializes in the uniform distribution. State $2$ has an additional action which keeps the state the same. The reward function is $0$ for $t=1$. For $t \ge 2$ the transition function is absorbing at both states; the reward function equals $1$ at the state $s=1$ for any action and is $0$ everywhere else.}

\label{fig:mm-lowerbound}
\end{figure}
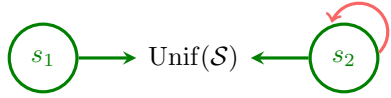

\textbf{MDP transition:} 
The state $2$ is the only one with two actions. Action $a_1$ induces the uniform distribution over states, while action $a_2$ transitions the learner to state $2$ with probability $1$. Namely,
\begin{align}
    P_1 (\cdot | s=1,a) &= \mathrm{Unif} (\mathcal{S}) \text{ for all } a \in \mathcal{A}\\
    P_1 (\cdot | s=2,a_1) &= \mathrm{Unif} (\mathcal{S}) \\
    P_1 (\cdot | s=2,a_2) &= \delta_2
\end{align}
From time $t=2$ onward, the actions are all absorbing. Namely, for all $t \ge 2$, $s \in \mathcal{S}$ and $a \in \mathcal{A}$,
\begin{align}
    P_t (\cdot|s,a) = \delta_s.
\end{align}

\textbf{Initial state distribution:} The initial state distribution $\rho = \left( 1-\frac{1}{\sqrt{\Nexp}}, \frac{1}{\sqrt{\Nexp}} \right)$.

\textbf{MDP reward function:}
The reward function of the MDP encourages the learner to stay at the state $s = 1$ from time $t=2$ onward. Namely,
\begin{align}
    &r_t (s,a) = \begin{cases}
    1, \qquad& \text{if } t \ge 2 \text{ and } s = 1 \\
    0, &\text{otherwise.}
    \end{cases} 
    \label{eq:reward}
\end{align}

\textbf{Expert policy:} At both states in the MDP, the expert picks the action $a_1$ to play, which induces the uniform distribution over actions at the next state. Namely, for each $t \in [H]$ and $s \in \mathcal{S}$,
\begin{align}
    \pi^E_t (\cdot|s) = \delta_{a_1}
\end{align}

The intuition behind the lower bound is as follows. The only action which affects the value of a policy is the choice made at $s=2$ at time $t=1$. At all other states, we may assume that there is effectively only a single action. 

By the absorbing nature of states for $t \ge 2$, it turns out that if the observed empirical distribution in the dataset at time $2$ is skewed toward state $2$ (which is possible because of the inherent randomness in the data generation process), the learner's behavior at time $1$ may be to \textit{ignore} the expert's action observed at state $s=2$, and instead pick the action $a_2$ which moves the learner to the state $s=2$ deterministically. The learner is willing to choose a different action because the loss function \cref{eq:altobj-restate} encourages the state-action distribution at time $t=2$ also to be well matched with what is observed in the dataset. Even if it comes at the cost of picking an action different from what the expert plays. By exploiting this fact, we are able to show that the error incurred by a learner which solves \cref{eq:altobj-restate} in this simple $2$ state example must be $\Omega (H/\sqrt{\Nexp})$.

Formally, we define $3$ events,
\begin{enumerate}
    \item[(i)] $\mathcal{E}_1$: All states in the MDP are visited in the dataset at each time $t=1,2,\cdots,H$.
    \item[(ii)] $\mathcal{E}_2$: State $2$ is visited at most $\sqrt{\Nexp}$ times at time $1$ in the dataset $D$. In other words, $d_1^D (s=2) = \delta'$ where $\delta' \le \frac{1}{\sqrt{\Nexp}}$.
    \item[(iii)] $\mathcal{E}_3$: At time $2$ in the dataset $D$, the empirical distribution over states is of the form $\left( \frac{1}{2} - \delta, \frac{1}{2} + \delta \right)$ for some $\delta \ge \frac{2}{\sqrt{\Nexp}}$.
\end{enumerate}

\begin{lemma} \label{lemma:7}
Jointly, the events $\mathcal{E}_1,\mathcal{E}_2$ and $\mathcal{E}_3$ occur with at least constant probability.
\begin{align}
    \mathrm{Pr} (\mathcal{E}_1 \cap \mathcal{E}_2 \cap \mathcal{E}_3) \ge C,
\end{align}
for some constant $C > 0$.
\end{lemma}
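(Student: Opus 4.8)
The plan is to exploit an independence structure in the data-generating process so that the joint probability factors into a product of two pieces, each of which is a positive constant. The key observation is that the expert plays $a_1$ at \emph{both} states, and $a_1$ induces the transition $P_1(\cdot \mid s, a_1) = \mathrm{Unif}(\mathcal{S})$ from either state at time $1$. Hence, writing $s_1^i$ and $s_2^i$ for the states at times $1$ and $2$ in the $i$-th demonstration, each $s_2^i$ is $\mathrm{Unif}(\mathcal{S})$ and independent of $s_1^i$, so the tuples $(s_1^1,\dots,s_1^{\Nexp})$ and $(s_2^1,\dots,s_2^{\Nexp})$ are mutually independent. Moreover, the MDP is absorbing from $t=2$ onward, so the state at every time $t \ge 2$ of a demonstration equals $s_2^i$, and every empirical measure $d_t^D$ with $t \ge 2$ is a function of $(s_2^1,\dots,s_2^{\Nexp})$ only. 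Consequently $\mathcal{E}_1 = A \cap B$, where $A = \{\text{both states appear among the } s_1^i\}$ depends only on $(s_1^1,\dots,s_1^{\Nexp})$ and $B = \{\text{both states appear among the } s_2^i\}$ depends only on $(s_2^1,\dots,s_2^{\Nexp})$ (for $t \ge 2$ the requirement in $\mathcal{E}_1$ is identical to the one at $t=2$, by absorption); $\mathcal{E}_2$ depends only on $(s_1^1,\dots,s_1^{\Nexp})$, and $\mathcal{E}_3$ only on $(s_2^1,\dots,s_2^{\Nexp})$. Therefore
\begin{align}
    \mathrm{Pr}(\mathcal{E}_1 \cap \mathcal{E}_2 \cap \mathcal{E}_3) = \mathrm{Pr}(A \cap \mathcal{E}_2) \cdot \mathrm{Pr}(B \cap \mathcal{E}_3),
\end{align}
and it suffices to bound each factor below by an absolute positive constant.

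For the first factor, let $M \sim \mathrm{Bin}(\Nexp, 1/\sqrt{\Nexp})$ count the demonstrations that start in state $2$, so $\mathcal{E}_2 = \{M \le \sqrt{\Nexp}\}$. Since $\E[M] = \sqrt{\Nexp}$ and the mean is (essentially) a median of a binomial — or, equivalently, by the central limit theorem, as $\sqrt{\Nexp} \to \infty$ — we have $\mathrm{Pr}(M \le \sqrt{\Nexp}) \ge \tfrac13$ for $\Nexp$ large; also $M \le \sqrt{\Nexp} < \Nexp$ automatically forces state $1$ to appear at time $1$, and $\mathrm{Pr}(M = 0) = (1 - 1/\sqrt{\Nexp})^{\Nexp} \le e^{-\sqrt{\Nexp}} \to 0$, so $\mathrm{Pr}(A \cap \mathcal{E}_2) \ge \mathrm{Pr}(1 \le M \le \sqrt{\Nexp}) \ge \tfrac14$. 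For the second factor, let $N \sim \mathrm{Bin}(\Nexp, 1/2)$ count the demonstrations in state $2$ at time $2$, so $\mathcal{E}_3 = \{N \ge \Nexp/2 + 2\sqrt{\Nexp}\}$, a deviation of four standard deviations above the mean. A standard binomial anti-concentration estimate — e.g. summing the pmf lower bound from Stirling's formula over the $\Theta(\sqrt{\Nexp})$ integers in $[\Nexp/2 + 2\sqrt{\Nexp},\, \Nexp/2 + 3\sqrt{\Nexp}]$, or Berry--Esseen — gives $\mathrm{Pr}(N \ge \Nexp/2 + 2\sqrt{\Nexp}) \ge c_0$ for an absolute constant $c_0 > 0$ and all $\Nexp$ large. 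Since $\mathcal{E}_3$ already guarantees state $2$ appears at time $2$ and $\mathrm{Pr}(N = \Nexp) = 2^{-\Nexp}$ is negligible, $\mathrm{Pr}(B \cap \mathcal{E}_3) \ge c_0/2$. Multiplying the two estimates proves the lemma with $C = c_0/8$.

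The conceptual heart is the independence observation: a naive union bound is doomed, because $\mathcal{E}_3$ is a one-sided several-standard-deviation fluctuation whose probability is a small constant well below $1$ (and $\mathrm{Pr}(\mathcal{E}_2)$ is roughly $1/2$), so $1 - \mathrm{Pr}(\mathcal{E}_1^c) - \mathrm{Pr}(\mathcal{E}_2^c) - \mathrm{Pr}(\mathcal{E}_3^c) < 0$. Once the factorization is set up, the only delicate estimate is the \emph{lower} bound on $\mathrm{Pr}(\mathcal{E}_3)$: the usual concentration inequalities control this probability from above rather than below, so an anti-concentration argument is required, and the conclusion should be understood to hold for $\Nexp$ above an absolute constant — harmless, since the surrounding moment-matching lower bound is already a large-sample statement.
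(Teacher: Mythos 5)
Your proof is correct and follows essentially the same route as the paper's: independence of the time-$1$ and time-$2$ states under the expert (since $a_1$ leads to $\mathrm{Unif}(\mathcal{S})$ from both states), a binomial-median bound for $\mathcal{E}_2$, an anti-concentration estimate for the one-sided $4\sigma$ event $\mathcal{E}_3$, and absorbing $\mathcal{E}_1$ as a near-certain event. The only cosmetic difference is that you fold $\mathcal{E}_1$ into an exact product factorization, whereas the paper bounds $\mathrm{Pr}(\mathcal{E}_2)\mathrm{Pr}(\mathcal{E}_3)$ and then subtracts $\mathrm{Pr}(\mathcal{E}_1^c)$ by a union bound; your version is slightly more explicit about why a naive union bound over all three events would fail and about the anti-concentration step the paper leaves informal.
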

\begin{proof}
By the absorbing nature of states for $t \ge 2$, it suffices for both states of the MDP to be visited in the dataset at time $t=1,2$. At time $t=2$, the marginal state distribution under $\pi^E$ is the uniform distribution. By binomial concentration, both states are observed in the dataset at time $t=2$ with probability $\ge 1 - e^{-C_1 \Nexp}$ for some constant $C_1 > 0$. On the other hand, at time $t=1$, the marginal state distribution is $\rho = \left( 1 - \frac{1}{\sqrt{\Nexp}}, \frac{1}{\sqrt{\Nexp}} \right)$. Yet again, by binomial concentration, both states are observed with probability $\ge 1 - e^{- C_2 \sqrt{\Nexp}}$ for some constant $C_2 > 0$. By union bounding,
\begin{equation}
    \mathrm{Pr} (\mathcal{E}_1) \ge  1 - e^{-C_1 \Nexp} - e^{-C_2 \sqrt{\Nexp}}.
\end{equation}

Next we study $\mathcal{E}_2$ and $\mathcal{E}_3$ together. First of all, note that the state observed at $t=1$ and $t=2$ in a rollout of the expert policy are independent. This is because at both states at $t=1$, the next state distribution under $\pi^E$ is uniform. Because of this fact, $\mathcal{E}_2$ and $\mathcal{E}_3$ are independent. Next we individually bound the probability of the two events.

$\mathcal{E}_2$: The number of times $s=2$ is the initial state in trajectories the dataset $D$ is distributed as a binomial random variable with distribution $\mathrm{Bin} (\Nexp,q)$ with $q = \rho(s=2) = \frac{1}{\sqrt{\Nexp}}$. A median of a binomial random variable is $\Nexp q = \sqrt{\Nexp}$ (in fact any number in the interval $[ \lfloor \Nexp q \rfloor, \lceil \Nexp q \rceil ]$ is a median). Therefore, the probability that $s=2$ is visited $\le \sqrt{\Nexp}$ times in the dataset at time $1$ is at least $1/2$. In summary,
\begin{equation}
    \mathrm{Pr} (\mathcal{E}_2) \ge \frac{1}{2}
\end{equation}

$\mathcal{E}_3$: The marginal distribution over states at time $2$ in the dataset is uniform. Therefore, we expect the states $1$ and $2$ to be visited roughly $\Nexp/2$ times each in the dataset, but with a random variation of $\approx \sqrt{\Nexp}$ around this average. In other words, the empirical distribution fluctuates as $\left( \frac{1}{2} - \delta, \frac{1}{2} + \delta \right)$ with $\delta \ge \frac{2}{\sqrt{\Nexp}}$ with constant probability.

By the independence of $\mathcal{E}_2$ and $\mathcal{E}_3$ and union bounding to account for $\mathcal{E}_1$, the statement of the lemma follows.
\end{proof}

\begin{lemma} \label{lemma:8}
For each $t \ge 2$,
\begin{equation} \label{eq:1231200}
    \tv (d_t^\pi (\cdot,\cdot), d_t^D (\cdot,\cdot)) \ge \tv (d_2^\pi (\cdot), d_2^D (\cdot)).
\end{equation}
The RHS is the TV distance between the state-visitation measure at time $t=2$ under $\pi$ and that empirically observed in the dataset $D$. Conditioned on the events $\mathcal{E}_1$, $\mathcal{E}_2$ and $\mathcal{E}_3$ occuring, equality is met in \cref{eq:1231200} if any only if $\pi_t (\cdot|s) = \pi^E_t (\cdot|s)$ for all states $s \in \mathcal{S}$.
\end{lemma}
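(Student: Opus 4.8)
The plan is to derive the inequality from the fact that total variation distance is non-increasing under marginalization, and then to extract the equality condition by examining when the underlying triangle inequalities are tight, exploiting the absorbing structure of the MDP for $t \ge 2$ together with the determinism of $\pi^E$.

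Two structural observations drive everything. First, since $P_t(\cdot|s,a) = \delta_s$ for every $t \ge 2$ and every $(s,a)$, the state occupied at any time $t \ge 2$ along a trajectory -- whether rolled out by $\pi$ or sampled from $D$ -- coincides with the state occupied at time $2$; hence the state-visitation measures satisfy $d_t^\pi(\cdot) = d_2^\pi(\cdot)$ and $d_t^D(\cdot) = d_2^D(\cdot)$ for all $t \ge 2$. Second, because $\pi^E_t(\cdot|s) = \delta_{a_1}$, every transition recorded in $D$ uses action $a_1$, so $d_t^D(s,a) = d_2^D(s)\,\mathbbm{1}(a = a_1)$, while $d_t^\pi(s,a) = d_t^\pi(s)\,\pi_t(a|s) = d_2^\pi(s)\,\pi_t(a|s)$. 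Given these, marginalizing the joint measures over the action coordinate returns the state measures, and TV cannot increase under this push-forward, which yields
\[
\tv\!\left( d_t^\pi(\cdot,\cdot), d_t^D(\cdot,\cdot) \right) \ge \tv\!\left( d_t^\pi(\cdot), d_t^D(\cdot) \right) = \tv\!\left( d_2^\pi(\cdot), d_2^D(\cdot) \right),
\]
with the final equality from the first observation. Spelled out, this is just $\sum_{s,a} |d_t^\pi(s,a) - d_t^D(s,a)| \ge \sum_s \big|\sum_a (d_t^\pi(s,a) - d_t^D(s,a))\big|$, i.e.\ the triangle inequality applied once per state, and it needs no conditioning on $\mathcal{E}_1, \mathcal{E}_2, \mathcal{E}_3$.

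For the equality characterization, I would track when each per-state triangle inequality is tight: equality above holds iff, for every $s$, the signed differences $\{ d_t^\pi(s,a) - d_t^D(s,a) \}_{a \in \mathcal{A}}$ share a common sign (no cancellation across actions). By the explicit forms, the $a_2$-term at state $s$ is $d_2^\pi(s)\,\pi_t(a_2|s) \ge 0$ and the $a_1$-term is $d_2^\pi(s)\,\pi_t(a_1|s) - d_2^D(s)$, so tightness at $s$ is equivalent to ``$\pi_t(a_2|s) = 0$'' or ``$d_2^\pi(s)\,\pi_t(a_1|s) \ge d_2^D(s)$''. The ``$\Leftarrow$'' direction is then immediate and event-free: if $\pi_t(\cdot|s) = \delta_{a_1} = \pi^E_t(\cdot|s)$ everywhere, all $a_2$-terms vanish and the two $\ell_1$ sums literally coincide. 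For ``$\Rightarrow$'' I would feed in the quantitative content of $\mathcal{E}_3$ (and $\mathcal{E}_1$, which guarantees both states carry positive dataset mass so the $\tv(d_2^\pi, d_2^D)$ comparison is over all of $\mathcal{S}$): conditioned on $\mathcal{E}_3$, $d_2^D = (\tfrac12 - \delta, \tfrac12 + \delta)$ with $\delta \ge 2/\sqrt{\Nexp}$, whereas computing $d_2^\pi$ from $\rho$ and the time-$1$ transition shows $d_2^\pi(s)$ can fluctuate by only $O(1/\sqrt{\Nexp})$ about $\tfrac12$; because $\delta$ dominates that fluctuation, at the state over-represented in $d_2^D$ the bound $d_2^\pi(s)\,\pi_t(a_1|s) \ge d_2^D(s)$ fails outright, forcing $\pi_t(a_2|s) = 0$ there, and an analogous accounting at the remaining state is meant to pin down $\pi_t(\cdot|s) = \pi^E_t(\cdot|s)$ everywhere.

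The main obstacle is exactly this last step: a careful state-by-state analysis of which action-marginal discrepancies can cancel, in which one must play the separation $\delta \gtrsim 1/\sqrt{\Nexp}$ from $\mathcal{E}_3$ against the precise $O(1/\sqrt{\Nexp})$ window available to $d_2^\pi$ in order to eliminate every cancellation pattern except the one corresponding to $\pi_t = \pi^E_t$. The structural reductions and the marginalization inequality itself are routine bookkeeping.
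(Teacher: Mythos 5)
Your argument follows the same route as the paper's: the inequality is the per-state triangle inequality $\sum_{a}|d_t^\pi(s,a)-d_t^D(s,a)|\ge\bigl|\sum_{a}(d_t^\pi(s,a)-d_t^D(s,a))\bigr|$ (marginalizing out the action), combined with the absorbing dynamics to identify $d_t^{\pi}(\cdot)$ with $d_2^{\pi}(\cdot)$ and the determinism of $\dem$ to write $d_t^D(s,a)=d_2^D(s)\mathbbm{1}(a=a_1)$. That part, the ``if'' direction, and your tightness criterion (at each $s$: either $\pi_t(a_2|s)=0$ or $d_2^\pi(s)\pi_t(a_1|s)\ge d_2^D(s)$) are all correct, and the criterion is in fact stated more carefully than in the paper, whose proof asserts that equality forces $\pi_t(a^*|s)=1$ merely because $d_2^D(s)>0$ under $\mathcal{E}_1$ --- an implication that does not follow from the triangle inequality alone.

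The genuine gap is precisely the step you defer as ``analogous accounting at the remaining state,'' and it does not close. Under $\mathcal{E}_3$ the dataset over-represents $s_2$: $d_2^D(s_2)=\tfrac12+\delta$ with $\delta\ge 2/\sqrt{\Nexp}$, while a direct computation gives $d_2^\pi(s_2)=\tfrac12+\tfrac{1-p}{2\sqrt{\Nexp}}\le\tfrac12+\tfrac{1}{2\sqrt{\Nexp}}$ for every $\pi$ (where $p=\pi_1(a_1|s_2)$), so at $s_2$ the condition $d_2^\pi(s_2)\pi_t(a_1|s_2)\ge d_2^D(s_2)$ indeed fails and $\pi_t(a_2|s_2)=0$ is forced --- this part of your plan works. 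But at the under-represented state the inequality points the other way: $d_2^\pi(s_1)\ge\tfrac12-\tfrac{1}{2\sqrt{\Nexp}}>\tfrac12-\delta=d_2^D(s_1)$, so $d_2^\pi(s_1)\pi_t(a_1|s_1)\ge d_2^D(s_1)$ holds for a whole interval of values $\pi_t(a_1|s_1)<1$, and equality in the lemma does \emph{not} force $\pi_t(\cdot|s_1)=\dem_t(\cdot|s_1)$. So the ``only if'' direction, as stated for all states, cannot be recovered by this tightness analysis (the paper's own one-line justification has the same defect). The lapse is harmless for the downstream lower bound, which only needs the reduction of the $t\ge2$ terms to $(H-1)\,\tv(d_2^\pi(\cdot),d_2^D(\cdot))$ and the fact that actions at $t\ge2$ are absorbing and reward-irrelevant; but a proof of the lemma exactly as written would have to either restrict the equality claim to the over-represented state or invoke the interchangeability of the two actions at $s_1$.
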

\begin{proof}
For any state $s \in \mathcal{S}$ and $t \ge 2$, observe that,
\begin{align}
    &\sum_{a \in \mathcal{A}} \left| d_t^\pi (s,a) - d_t^D (s,a) \right| \\
    &= d_t^\pi (s) (1 - \pi_t (a^*|s)) + \left| d_t^\pi (s) \pi_t (a^*|s) - d_t^D (s,a^*) \right|, \text{ where } a^* = \pi^E_t (s),\\
    &\overset{(i)}{=} d_2^\pi (s) (1 - \pi_t (a^*|s)) + \left| d_2^\pi (s) \pi_t (a^*|s) - d_2^D (s,a^*) \right|\\
    &\overset{(ii)}{\ge} \left| d_2^\pi (s) - d_2^D (s) \right|,
\end{align}
where $(i)$ follows by the fact that the states of the MDP are absorbing under $\pi$ for $t \ge 2$. $(ii)$ follows by triangle inequality and using the fact that $\pi^E$ is deterministic, so $d_t^D (s,a^*) = d_t^D (s)$. Equality is met only if $\pi_t (a^*|s) = 1$ (since $d_t^D (s,a^*) > 0$ conditioned on $\mathcal{E}_1$).
\end{proof}

The above lemma asserts the behavior of $\piEMM$ in \cref{eq:altobj-restate} for $t \ge 2$. Namely, conditioned on the event $\mathcal{E}_1$ which happens with very high probability, all states are visited in the MDP and therefore, $\piEMM_t (\cdot|s) = \pi^E_t (\cdot|s)$ for each state $s \in \mathcal{S}$ and time $t \ge 2$.

The only thing left to study is the \MM learner's behavior at $t=1$. We wish to show that with constant probability, the learner may choose to deviate from the expert policy in order to better match empirical state-action visitation measures. Conditioned on $\mathcal{E}_1$, the learner's policy at time $t=1$ can be computed by solving the following optimization problem,
\begin{align}
    \tv (d_1^\pi (\cdot,\cdot), d_1^D (\cdot,\cdot)) + (H-1) \textsf{TV} (d_2^\pi (\cdot), d_2^D (\cdot)).
\end{align}
This follows directly by simplifying the learner's objective using \Cref{lemma:8}.

Now, conditioned on the event $\mathcal{E}_1$, at time $t=1$, the learner policy only needs to be optimized at the state $s=2$. At the state $s=1$, we may assume that the learner picks the expert's action $\pi^E_1 (s=1)$. To this end, suppose the learner picks the action $a_1$ with probability $p$ and the action $a_2$ with probability $1-p$. 
\begin{align}
    \textsf{TV} ( d_1^\pi (\cdot,\cdot), d_1^D (\cdot,\cdot)) &= \sum_{a \in \mathcal{A}} \left| d_1^{\pi^E} (s=2,a) - d_1^D (s=2,a) \right|\\
    &= \left| \rho(2) p - \delta' \right| + \left| \rho(2) (1-p) - 0 \right| \\
    &= \left| \frac{p}{\sqrt{\Nexp}} - \delta' \right| + \frac{1-p}{\sqrt{\Nexp}}. \label{eq:12312}
\end{align}
which follows by plugging in $\rho(2) = \frac{1}{\sqrt{\Nexp}}$. And,
\begin{align}
    \textsf{TV} ( d_2^\pi (\cdot), d_2^D (\cdot)) &= \left| \left( \frac{1}{2} - \delta \right) - \frac{\rho(1)}{2} - \rho(2) \frac{p}{2} \right| + \left| \left( \frac{1}{2} + \delta \right) - \frac{\rho(1)}{2} - \rho(2) \left( (1-p) + \frac{p}{2} \right) \right|.
\end{align}
Plugging in $\rho (2) = \frac{1}{\sqrt{\Nexp}}$ and $\rho(1) = 1 - \frac{1}{\sqrt{\Nexp}}$, we get,
\begin{align} \label{eq:12313}
    \textsf{TV} ( d_2^\pi (\cdot), d_2^D (\cdot)) &= \left| \frac{1}{2\sqrt{\Nexp}} - \delta - \frac{p}{2\sqrt{\Nexp}} \right| + \left| \frac{p}{2\sqrt{\Nexp}} - \frac{1}{2\sqrt{\Nexp}} + \delta \right|.
\end{align}
Summing up \cref{eq:12312,eq:12313}, $p$ minimizes,
\begin{equation} \label{eq:minimizer}
    \underbrace{\left| \frac{p}{\sqrt{\Nexp}} - \delta' \right| + \frac{1-p}{\sqrt{\Nexp}}}_{(i)} + \underbrace{(H-1) \left( \left| \frac{p}{2\sqrt{\Nexp}} + \delta - \frac{1}{2\sqrt{\Nexp}} \right| + \left| \frac{1}{2\sqrt{\Nexp}} - \delta - \frac{p}{2\sqrt{\Nexp}} \right| \right)}_{(ii)}.
\end{equation}
Intuitively, term $(i)$ captures the error incurred by the learner in the loss \cref{eq:altobj-restate} by deviating from $\pi^E$ at the first time step. Term $(ii)$ captures the decrease in the error at every subsequent time step because of the same deviation, since the learner is able to better match the state distribution at future time steps. In the next lemma we show that under events that hold with at least constant probability, the empirical moment matching learner chooses to play the wrong action at time $t=1$ at the state $s=2$.

\begin{lemma} \label{lemma:9}
Conditioned on the events $\mathcal{E}_2$ and $\mathcal{E}_3$, for $H \ge 4$, the unique minimizer of \cref{eq:minimizer} for $p \in [0,1]$ is $p = 0$.
\end{lemma}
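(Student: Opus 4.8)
The plan is to treat \eqref{eq:minimizer} as a one-dimensional piecewise-linear minimization over $p \in [0,1]$ and to show the objective is strictly increasing, so that the minimum is attained uniquely at $p=0$. The first step is to simplify term $(ii)$. The two absolute values appearing in it, $\bigl|\tfrac{p}{2\sqrt{\Nexp}} + \delta - \tfrac{1}{2\sqrt{\Nexp}}\bigr|$ and $\bigl|\tfrac{1}{2\sqrt{\Nexp}} - \delta - \tfrac{p}{2\sqrt{\Nexp}}\bigr|$, are negatives of one another, so term $(ii)$ equals $(H-1)\bigl|\tfrac{p}{\sqrt{\Nexp}} + 2\delta - \tfrac{1}{\sqrt{\Nexp}}\bigr|$. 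Conditioned on $\mathcal{E}_3$ we have $\delta \ge \tfrac{2}{\sqrt{\Nexp}}$, hence $\tfrac{p}{\sqrt{\Nexp}} + 2\delta - \tfrac{1}{\sqrt{\Nexp}} \ge \tfrac{p}{\sqrt{\Nexp}} + \tfrac{3}{\sqrt{\Nexp}} > 0$ for every $p \in [0,1]$. Therefore term $(ii) = (H-1)\bigl(\tfrac{p}{\sqrt{\Nexp}} + 2\delta - \tfrac{1}{\sqrt{\Nexp}}\bigr)$ is affine and strictly increasing in $p$ with slope $(H-1)/\sqrt{\Nexp}$.

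Next I would analyze term $(i) = \bigl|\tfrac{p}{\sqrt{\Nexp}} - \delta'\bigr| + \tfrac{1-p}{\sqrt{\Nexp}}$, which has a single kink at $p_0 := \delta' \sqrt{\Nexp}$. Conditioned on $\mathcal{E}_2$ we have $\delta' \le \tfrac{1}{\sqrt{\Nexp}}$, so $p_0 \in [0,1]$. On $[0,p_0]$, term $(i) = \delta' + \tfrac{1}{\sqrt{\Nexp}} - \tfrac{2p}{\sqrt{\Nexp}}$ is affine with slope $-2/\sqrt{\Nexp}$, and on $[p_0,1]$, term $(i) = \tfrac{1}{\sqrt{\Nexp}} - \delta'$ is constant. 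Adding the two terms: the objective \eqref{eq:minimizer} has slope $\tfrac{H-1}{\sqrt{\Nexp}} - \tfrac{2}{\sqrt{\Nexp}} = \tfrac{H-3}{\sqrt{\Nexp}}$ on $[0,p_0]$, which is strictly positive once $H \ge 4$, and slope $\tfrac{H-1}{\sqrt{\Nexp}} > 0$ on $[p_0,1]$. Being continuous and piecewise affine with a strictly positive slope on each of the two pieces, the objective is strictly increasing on all of $[0,1]$, so its unique minimizer is $p=0$. The degenerate case $\delta' = 0$ needs only a one-line remark: then $p_0 = 0$, the first piece is vacuous, and the objective is strictly increasing on $[0,1]$ with slope $(H-1)/\sqrt{\Nexp}$, so the conclusion is unchanged.

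I do not expect a genuine obstacle here — the argument is elementary piecewise-linear optimization. The one step worth stating carefully is the collapse of term $(ii)$ into a single affine function; this is precisely where the absorbing structure of the MDP for $t \ge 2$ (which forces the two absolute values to be negatives of each other) and the bound $\delta \ge 2/\sqrt{\Nexp}$ from $\mathcal{E}_3$ (which fixes the sign) are used. The hypothesis $H \ge 4$ then enters only to ensure the net slope $\tfrac{H-3}{\sqrt{\Nexp}}$ on the ``deviating'' branch $[0,p_0]$ is positive, i.e.\ that shifting any probability mass from the expert's action $a_1$ onto $a_2$ strictly increases the matching error.
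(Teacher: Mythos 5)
Your proof is correct and follows essentially the same route as the paper's: you show term $(ii)$ is affine increasing with slope $(H-1)/\sqrt{\Nexp}$ (using $\delta \ge 2/\sqrt{\Nexp}$ to drop the absolute values) and that term $(i)$ decreases at rate at most $2/\sqrt{\Nexp}$, so the sum is strictly increasing for $H \ge 4$. Your treatment is somewhat more careful than the paper's, since you make the kink at $p_0 = \delta'\sqrt{\Nexp}$ and the slope on each piece explicit rather than just comparing worst-case rates, but the underlying argument is identical.
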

\begin{proof}
The first term of \cref{eq:minimizer} is $\left| \frac{p}{\sqrt{\Nexp}} - \delta' \right| + \frac{1-p}{\sqrt{\Nexp}}$, the error from not picking the expert's action at state $1$ at time $1$ decreases at most linearly with a slope of $\frac{2}{\sqrt{\Nexp}}$.

Conditioned on the event $\mathcal{E}_3$, $\delta \ge \frac{2}{\sqrt{\Nexp}}$. Therefore, $\left| \frac{p}{2\sqrt{\Nexp}} + \delta - \frac{1}{2\sqrt{\Nexp}} \right| = \frac{p}{2\sqrt{\Nexp}} + \delta - \frac{1}{2\sqrt{\Nexp}}$. Therefore, the decrease in error at future steps by deviating from $\pi^E$ at the time $t=1$, term $(ii)$ in \cref{eq:minimizer} is,
\begin{align}
    2(H-1) \left(\frac{p}{2\sqrt{\Nexp}} + \delta - \frac{1}{2\sqrt{\Nexp}} \right)
\end{align}
which is an increasing function of $p$ with slope $\frac{H-1}{\sqrt{\Nexp}}$. For $H \ge 4$ and the argument from the previous paragraph, this implies that term $(ii)$ increases more rapidly in $p$ than the rate at which term $(i)$ decreases. Therefore, the minimizer must be $p = 0$.
\end{proof}

Thus, we conclude from \Cref{lemma:8,lemma:9} that conditioned on the events $\mathcal{E}_1,\mathcal{E}_2$ and $\mathcal{E}_3$, the learner $\piEMM$ perfectly mimics $\pi^E$ at each time $t \ge 2$, but deviates from the action played by $\pi^E$ at the state $s=1$ at time $t=1$.

Finally, we bound the difference in value between $\pi^E$ and $\piEMM$ induced because of this deviation under the reward \cref{eq:reward}.

\begin{lemma}
Under the events $\mathcal{E}_1,\mathcal{E}_2$ and $\mathcal{E}_3$, under the reward \cref{eq:reward}, the empirical moment matching learner $\piEMM$ incurs imitation gap,
\begin{equation}
    J(\pi^E) - J(\piEMM) = \frac{H}{2\sqrt{\Nexp}}.
\end{equation}
\end{lemma}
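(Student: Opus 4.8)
The plan is to compute $J(\pi^E)$ and $J(\piEMM)$ in closed form under the conditioning events and subtract, since the behavior of $\piEMM$ has already been completely pinned down. Conditioned on $\mathcal{E}_1 \cap \mathcal{E}_2 \cap \mathcal{E}_3$, \Cref{lemma:8} gives $\piEMM_t(\cdot|s) = \pi^E_t(\cdot|s)$ for every $s$ and every $t \ge 2$, and \Cref{lemma:9} gives that at $t=1$ the learner plays $a_2$ at state $s=2$ (the unique minimizer $p=0$ of \cref{eq:minimizer}); at $s=1$ at $t=1$ the two actions are equivalent, so we may take $\piEMM$ to agree with $\pi^E$ there as well. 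Hence $\piEMM$ differs from $\pi^E$ only in that the $\rho(s{=}2) = 1/\sqrt{\Nexp}$ fraction of trajectories starting at $s=2$ is kept deterministically at $s=2$ rather than being sent to $\mathrm{Unif}(\mathcal{S})$.

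Next I would propagate the initial distribution through the dynamics to get the state-visitation measure at $t=2$, which the absorbing dynamics then freeze for all $t \ge 2$. Under $\pi^E$, every trajectory (from either initial state) is mapped to $\mathrm{Unif}(\mathcal{S})$ at $t=2$, so $d_t^{\pi^E}(s{=}1) = \tfrac12$ for all $t \ge 2$. Under $\piEMM$, the $\rho(s{=}1) = 1 - 1/\sqrt{\Nexp}$ fraction starting at $s=1$ is mapped to $\mathrm{Unif}(\mathcal{S})$ and contributes $\tfrac12(1 - 1/\sqrt{\Nexp})$ to state $1$, while the $\rho(s{=}2)$ fraction stays at $s=2$ and contributes nothing; thus $d_t^{\piEMM}(s{=}1) = \tfrac12(1 - 1/\sqrt{\Nexp})$ for all $t \ge 2$.

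Finally I would plug these into the reward of \cref{eq:reward}, which is $0$ at $t=1$ and equals $\mathbbm{1}[s=1]$ for $t \ge 2$, to get $J(\pi^E) = \sum_{t=2}^H d_t^{\pi^E}(s{=}1) = (H-1)/2$ and $J(\piEMM) = \sum_{t=2}^H d_t^{\piEMM}(s{=}1) = (H-1)(1 - 1/\sqrt{\Nexp})/2$, whence
\[
J(\pi^E) - J(\piEMM) = \frac{H-1}{2\sqrt{\Nexp}},
\]
which is $H/(2\sqrt{\Nexp})$ up to the harmless difference between $H$ and $H-1$ (equivalently, it equals $H/(2\sqrt{\Nexp})$ exactly under the convention in which the absorbing reward is also collected on the final reset step). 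Together with \Cref{lemma:7}, which guarantees $\mathcal{E}_1 \cap \mathcal{E}_2 \cap \mathcal{E}_3$ with constant probability, this establishes the claimed $\Omega(H/\sqrt{\Nexp})$ gap with constant probability and hence \Cref{theorem:tvlb}.

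There is no genuinely hard step remaining — the real work was done in \Cref{lemma:7,lemma:8,lemma:9}. The only point that needs care is making sure the skewed initial mass is tracked exactly: one must confirm that it is precisely the $1/\sqrt{\Nexp}$ probability at $s=2$ (and nothing more or less) that gets trapped, that this trapped mass is the \emph{entire} source of the value gap, and that the $t=1$ versus $t \ge 2$ split of the reward is handled consistently (this split is what produces the $H$ versus $H-1$ factor).
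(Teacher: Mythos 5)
Your argument is essentially identical to the paper's: both pin down $\piRE$'s --- sorry, $\piEMM$'s --- behavior via Lemmas~\ref{lemma:8} and \ref{lemma:9}, propagate the initial distribution to get the absorbed state-visitation measures $(\tfrac12,\tfrac12)$ versus $\bigl(\tfrac{1-1/\sqrt{\Nexp}}{2},\tfrac{1+1/\sqrt{\Nexp}}{2}\bigr)$ for $t\ge 2$, and sum the reward on state $1$. In fact you are slightly more careful than the paper, which sums over all $H$ steps despite the reward being $0$ at $t=1$ and so states the gap as exactly $H/(2\sqrt{\Nexp})$ rather than $(H-1)/(2\sqrt{\Nexp})$; this is immaterial for the $\Omega(H/\sqrt{\Nexp})$ conclusion.
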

\begin{proof}
Recall that under the events $\mathcal{E}_1,\mathcal{E}_2$ and $\mathcal{E}_3$, the learner $\piEMM$ is identical to $\pi^E$ except at the state $s=2$ where they perfectly deviate from each other. The state distribution induced by $\pi^E$ at each time $t \ge 2$ is the uniform distribution over states $\left( \frac{1}{2},\frac{1}{2} \right)$. On the other hand, for $t \ge 2$, the state distribution induced by $\piEMM$ at each time $t \ge 2$ is $\left( \rho(1) \frac{1}{2}, \rho(1) \frac{1}{2} + \rho(2) \right) = \left( \frac{1 - 1/\sqrt{\Nexp}}{2}, \frac{1 + 1/\sqrt{\Nexp}}{2} \right)$. Since the reward function is $1$ on state $1$, the difference in value between the expert and learner policy is,
\begin{equation}
    J(\pi^E) - J(\piEMM) = \frac{H}{2} - H \left( \frac{1-1/\sqrt{\Nexp}}{2} \right) = \frac{H}{2\sqrt{\Nexp}}.
\end{equation}
This completes the proof.
\end{proof}

Since $\mathcal{E}_1$, $\mathcal{E}_2$ and $\mathcal{E}_3$ jointly occur with constant probability by \Cref{lemma:7}, this completes the proof of \Cref{theorem:tvlb}.

\section{Imitation gap of \RE: Proof of \Cref{thm:main}} \label{sec:RE-ub}

In this section, we discuss a proof of a more general version of \Cref{thm:main}, where $\Nrep$ can be finite.
We prove the following result,

\begin{theorem} \label{thm:main-general}
Consider the policy $\piRE$ returned by \Cref{alg:re}. Assume that $\pi^E \in \Pi$ and the ground truth reward function $r_t \in \mathcal{F}_t$, which is assumed to be symmetric ($f_t \in \mathcal{F}_t \iff -f_t \in \mathcal{F}_t$) and bounded (For all $f_t \in \mathcal{F}_t$, $\| f_t \|_\infty \le 1$). Choose $|D_1|, |D_2| = \Theta (\Nexp)$ and suppose $\Nrep \to \infty$. With probability $\ge 1 - 3 \delta$,
\begin{align}
    J (\dem) - J (\piRE) \lesssim & \ \mathcal{L}_1 + \mathcal{L}_2 + \mathcal{L}_3 + \frac{\log \left( \Fmax H/\delta \right)}{\Nexp}
\end{align}
where $\Fmax \triangleq \max_{t \in [H]} |\mathcal{F}_t|$, and,
\begin{align}
    \mathcal{L}_1 \triangleq H^2 \ \mathbb{E}_{\dem} \! \left[ \frac{\sum_{t=1}^H \mathcal{M} (s_t,t) \tv \left( \dem_t (\cdot|s_t) , \piBC_t (\cdot|s_t) \right)}{H} \right],
\end{align}
\begin{align}
    \mathcal{L}_2 \triangleq H^{3/2} \sqrt{ \frac{\log \left( \Fmax H/\delta \right)}{\Nexp} \frac{\sum_{t=1}^H \mathbb{E}_{\dem} \left[ 1 - \mathcal{M} (s_t, t) \right]}{H}},
\end{align}
And,
\begin{align}
    \mathcal{L}_3 \triangleq H \sqrt{\frac{\log (\Fmax H /\delta)}{\Nrep}} + \frac{H \log (\Fmax H /\delta)}{\Nrep}.
\end{align}
\end{theorem}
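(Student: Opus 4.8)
The plan is to pass from the imitation gap to a uniform moment-estimation error, and then control that error with a bias/variance split that produces exactly $\mathcal{L}_1$ (the membership-weighted behavior-cloning error), $\mathcal{L}_2$ (concentration of the $D_2$ half) and $\mathcal{L}_3$ (concentration of the $\Drep$ half). Write $\bar f \triangleq \frac{1}{H}\sum_{t=1}^H f_t$ and $\epsilon \triangleq \sup_{f \in \bigoplus_t \mathcal{F}_t}\lvert \mathbb{E}_{\pi^E}[\bar f] - \widehat{E}(f)\rvert$. Since $\piRE$ exactly minimizes $\pi \mapsto \sup_f (\mathbb{E}_\pi[\bar f] - \widehat{E}(f))$ over $\Pi \ni \pi^E$, since $\mathcal{F}_t$ is symmetric so that $r, -r \in \bigoplus_t \mathcal{F}_t$ (using $r_t \in \mathcal{F}_t$), and since $r_t \in [0,1]$, one gets $J(\pi^E) - J(\piRE) = H\big(\mathbb{E}_{\pi^E}[\bar r] - \mathbb{E}_{\piRE}[\bar r]\big) \le 2H\epsilon$: applying optimality at both $r$ and $-r$ together with the defining bound for $\epsilon$ handles the two sides of $\lvert \mathbb{E}_{\pi^E}[\bar r] - \mathbb{E}_{\piRE}[\bar r]\rvert \le \lvert \mathbb{E}_{\pi^E}[\bar r] - \widehat{E}(r)\rvert + \lvert \widehat{E}(r) - \mathbb{E}_{\piRE}[\bar r]\rvert$. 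Because $\mathbb{E}_\pi[\bar f]$ and $\widehat{E}(f)$ are both linear in $f$, $\epsilon$ moreover decouples across $t$ ($\epsilon \le \frac{1}{H}\sum_t \sup_{f_t \in \mathcal{F}_t}\lvert\cdots\rvert$), which is what yields $\log(\Fmax H/\delta)$ rather than $H\log\Fmax$ in the union bounds.

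Next, using the prefix-weight identity \eqref{eq:t} I would write $\mathbb{E}_{\pi^E}[\bar f] = \frac{1}{H}\sum_t (i)_t + \frac{1}{H}\sum_t (ii)_t$ and match $(i)$ against the $\Drep$-term $A(f)$ of $\widehat{E}$ and $(ii)$ against the $D_2$-term $B(f)$, so $\epsilon \le \sup_f\lvert A(f) - \frac{1}{H}\sum_t(i)_t\rvert + \sup_f\lvert B(f) - \frac{1}{H}\sum_t(ii)_t\rvert$; conditioning on $D_1$ fixes $\piBC$ and $\mathcal{M}$, and $\Drep$, $D_2$ are then independent. For the replay term, triangulate through the population replay moment $\mathbb{E}_{\piBC}[\frac{1}{H}\sum_t f_t\mathcal{P}(s_{1\dots t})]$. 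The sampling gap $\lvert A(f) - \mathbb{E}_{\piBC}[\cdots]\rvert$ is an average over $\Nrep$ i.i.d.\ $\piBC$-rollouts of a $[0,1]$-valued per-step quantity, so a per-timestep Bernstein/Hoeffding bound with a union bound over $t \in [H]$ and $f_t \in \mathcal{F}_t$, summed and multiplied by the $2H$ from Step~1, contributes $\mathcal{L}_3$ (which vanishes as $\Nrep \to \infty$, recovering \Cref{thm:main}). The bias $\lvert \mathbb{E}_{\piBC}[f_t\mathcal{P}(s_{1\dots t})] - \mathbb{E}_{\pi^E}[f_t\mathcal{P}(s_{1\dots t})]\rvert$ is the heart of the proof: writing $\mathcal{P}(s_{1\dots t}) = \mathcal{P}(s_{1\dots t-1})\mathcal{M}(s_t,t)$ and introducing the $\mathcal{P}$-weighted state sub-measure $w_t^\pi(s) \triangleq \mathbb{E}_\pi[\mathbbm{1}(s_t=s)\,\mathcal{P}(s_{1\dots t-1})]$, these obey $w_{t+1}^\pi(s') = \sum_{s,a} w_t^\pi(s)\,\mathcal{M}(s,t)\,\pi_t(a\mid s)\,P_t(s'\mid s,a)$. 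Differencing $\piBC$ against $\pi^E$, the ``transport'' term contracts in $\ell_1$ (because $\mathcal{M}\le 1$ and the kernels are stochastic) while the ``branching'' term adds exactly $2\,\mathbb{E}_{\pi^E}[\mathcal{M}(s_t,t)\,\tv(\piBC_t(\cdot\mid s_t),\pi^E_t(\cdot\mid s_t))]$ — the factor $\mathcal{M}(s_t,t)$ appearing precisely because a disagreement at step $t$ is only carried forward after multiplication by $\mathcal{M}(s_t,t)$. Telescoping from $w_1^{\piBC}=w_1^{\pi^E}$, summing $\frac{1}{H}\sum_t$ (costing only one further factor of $H$), and multiplying by $2H$ gives $\lesssim \mathcal{L}_1$.

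For the empirical term, $B(f)$ is an \emph{exact} empirical mean over the $\Theta(\Nexp)$ i.i.d.\ expert trajectories in $D_2$ whose population mean is $\frac{1}{H}\sum_t(ii)_t$, so only concentration is needed. The crucial point is a Bernstein bound using that the per-step summand $f_t(s_t,a_t)(1-\mathcal{P}(s_{1\dots t}))$ has second moment at most $\mathbb{E}_{\pi^E}[1-\mathcal{P}(s_{1\dots t})] \le \sum_{t'\le t}\mathbb{E}_{\pi^E}[1-\mathcal{M}(s_{t'},t')]$ (from $(1-\mathcal{P})^2 \le 1-\mathcal{P}$ and $1 - \prod_{t'\le t}\mathcal{M}(s_{t'},t') \le \sum_{t'\le t}(1-\mathcal{M}(s_{t'},t'))$). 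Summing $\frac{1}{H}\sum_t$ and applying Jensen to the $\sqrt{\cdot}$-terms collapses the variance proxy to $\le H\cdot\frac{1}{H}\sum_t\mathbb{E}_{\pi^E}[1-\mathcal{M}(s_t,t)]$, so after the $2H$ prefactor the leading piece is $\mathcal{L}_2$, with the Bernstein lower-order term contributing a $\mathrm{poly}(H)\log(\Fmax H/\delta)/\Nexp$ remainder that is absorbed into the additive term in the statement. A union bound over the three good events (the $\Drep$ concentration event, the $D_2$ concentration event, and the event controlling the residual additive term) yields probability $\ge 1-3\delta$; passing to infinite $\mathcal{F}_t$ via $L_\infty$-covering numbers is routine as in \Cref{rem:inffamily}.

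The step I expect to be the real obstacle is the replay-bias estimate: setting up the $\mathcal{P}$-weighted occupancy recursion so that each step's disagreement cost is weighted by $\mathcal{M}(s_t,t)$ rather than counted in full. This is exactly what makes $\mathcal{L}_1$ the \emph{membership-weighted} behavior-cloning error (vanishing on states where $\mathcal{M}=0$), and is the structural reason \RE can simultaneously beat \BC and \MM; the variance-reduction bookkeeping for $\mathcal{L}_2$ — and deciding precisely which quantities to union-bound over so the log factor stays at $\log(\Fmax H/\delta)$ — is the other place where care is needed.
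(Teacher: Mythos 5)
Your proposal is correct and follows essentially the same route as the paper: reduce the imitation gap to $2\sup_f\lvert J_f(\dem)-\widehat{E}(f)\rvert$ via optimality, symmetry and reward realizability; split via the prefix weights into a replay half and an empirical half (conditioning on $D_1$); bound the replay bias by the membership-weighted TV between $\piBC_t$ and $\dem_t$; and control both concentration terms by Bernstein with the variance proxy $\mathrm{Var}\le\mathbb{E}$ and $1-\prod_{t'}\mathcal{M}(s_{t'},t')\le\sum_{t'}(1-\mathcal{M}(s_{t'},t'))$, union-bounding per $(t,f_t)$ so the log factor stays at $\log(\Fmax H/\delta)$. The only point of divergence is in one lemma: for the replay bias the paper telescopes over hybrid policies $\pi^{(h)}$ (expert up to time $h$, \BC thereafter) in the style of the supervised-learning reduction, whereas you run an $\ell_1$-contraction recursion on the $\mathcal{P}$-weighted occupancy measure --- two equivalent derivations of the same $\mathcal{M}(s_t,t)$-weighted per-step disagreement bound.
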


Recall that the learner carrying out replay estimation returns the policy which minimizes the loss $\sup_{f \in \mathcal{F}} J_f (\pi) - \widehat{E} (f)$ over policies $\pi$, where $J_f (\pi) \triangleq \mathbb{E}_\pi \left[ \frac{1}{H} \sum_{t=1}^H f_t (s_t,a_t) \right]$ and abbreviating the notation $f = (f_1,\cdots,f_H)$. Note that,
\begin{align}
    J (\dem) - J (\piRE) &\overset{(i)}{\le} \sup_{f \in \mathcal{F}} J_f (\dem) - J_f (\piRE) \\
    &\le \sup_{f \in \mathcal{F}} \left| J_f (\dem) - \widehat{E} (f) \right| + \sup_{f \in \mathcal{F}} \left| \widehat{E} (f) - J_f (\piRE) \right| \\
    &\overset{(ii)}{\le} 2 \sup_{f \in \mathcal{F}} \left| J_f (\dem) - \widehat{E} (f) \right|. \label{eq:MD}
\end{align}
where $(i)$ uses the realizability assumption that the ground truth reward lies in $\mathcal{F}$, and $(ii)$ uses the fact that $\piRE$ is a minimizer of \cref{eq:pire} and the fact that $\mathcal{F}$ is symmetric (this implies that $\sup_{f \in \mathcal{F}} J_f (\dem) - \widehat{E} (f) = \sup_{f \in \mathcal{F}} - J_f (\dem) + \widehat{E} (f) = \sup_{f \in \mathcal{F}} \left| J_f (\dem) - \widehat{E} (f) \right|$).

Note that $\widehat{E} (f)$ can be decomposed into a sum of two parts,
\begin{align}
    &\widehat{E}^{(1)} (f) = \mathbb{E}_{\Drep} \left[ \frac{1}{H} \sum_{t=1}^H f_t(s_t,a_t) \left( 1 - \mathcal{P}(s_{1 \dots t-1}) \right) \right], \text{ and,}\\
    &\widehat{E}^{(2)} (f) = \mathbb{E}_{D_2} \left[ \frac{1}{H} \sum_{t=1}^H f_t(s_t,a_t) \left( 1 - \mathcal{P}(s_{1 \dots t-1}) \right) \right]
\end{align}
Likewise, we can decompose $J_f (\dem)$ into two terms,
\begin{align}
    J_f^{(1)} (\dem) &\triangleq \mathbb{E}_{\dem} \left[ \sum_{t=1}^H f_t (s_t,a_t) \mathcal{P}(s_{1 \dots t-1}) \right], \text{ and}\\
    J_f^{(2)} (\dem) &\triangleq \mathbb{E}_{\dem} \left[  \sum_{t=1}^H f_t (s_t,a_t) \left(1 - \mathcal{P}(s_{1 \dots t-1}) \right) \right]
\end{align}
Then, from \cref{eq:MD},
\begin{align}
    J (\dem) - J (\piRE) &\le 2 \sup_{f \in \mathcal{F}} \left| J_f (\dem) - \widehat{E} (f) \right| \\
    &\le 2 \underbrace{ \sup_{f \in \mathcal{F}} \left| J_f^{(1)} (\dem) - \mathbb{E} \left[ \widehat{E}^{(1)} (f) \middle| D_1 \right] \right|}_{\text{(I)}} + 2\underbrace{\sup_{f \in \mathcal{F}} \left| \mathbb{E} \left[ \widehat{E}^{(1)} (f) \middle| D_1 \right] - \widehat{E}^{(1)} (f) \right|}_{\text{(II)}} \nonumber\\
    &\hspace{16em}+ 2 \underbrace{ \sup_{f \in \mathcal{F}}  \left| J_f^{(2)} (\dem) - \widehat{E}^{(2)} (f) \right|}_{\text{(III)}}. \label{eq:final}
\end{align}
where the last line follows by triangle inequality. We bound each of these terms in the next 3 lemmas, starting with $\text{(I)}$.

\begin{lemma} \label{lemma:B1}
\begin{align}
    \sup_{f \in \mathcal{F}} \left| J_f^{(1)} (\dem) - \mathbb{E} \left[ \widehat{E}^{(1)} (f) \middle| D_1 \right] \right| &\le H \sum_{h=1}^H \mathbb{E}_{\dem} \left[ \mathcal{M} (s_h,h) \textsf{TV} \left( \dem_h (\cdot|s_h) , \piBC_h (\cdot|s_h) \right) \right] \label{eq:17}
\end{align}
\end{lemma}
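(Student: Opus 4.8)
The plan is to recognize the left-hand side as a performance-difference computation between $\dem$ and $\piBC$ on a bounded, history-dependent ``reward'', and then telescope over a sequence of hybrid policies so that the per-step substitution error is automatically gated by the membership oracle. First, since $\Drep$ consists of i.i.d.\ rollouts of $\piBC$ in the simulator and $\Nrep \to \infty$, conditioning on $D_1$ (which fixes both $\piBC$ and $\mathcal{M}$) gives $\mathbb{E}[\widehat{E}^{(1)}(f) \mid D_1] = \mathbb{E}_{\piBC}\big[\sum_{t=1}^H f_t(s_t,a_t)\,\mathcal{P}(s_{1\dots t})\big]$, the population version of the replay estimate. Writing $G_f(\tau) \triangleq \sum_{t=1}^H f_t(s_t,a_t)\,\mathcal{P}(s_{1\dots t})$ for a trajectory $\tau = (s_1,a_1,\dots,s_H,a_H)$, the quantity to bound is $\sup_{f \in \mathcal{F}} \big| \mathbb{E}_{\dem}[G_f] - \mathbb{E}_{\piBC}[G_f] \big|$; the bound we derive will be uniform in $f$, so the supremum is harmless. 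Note that $G_f$ is \emph{not} a sum of Markovian per-step rewards, since the prefix weight $\mathcal{P}(s_{1\dots t}) = \prod_{t'=1}^t \mathcal{M}(s_{t'},t')$ depends on the whole history visited so far, so one works with history-dependent value-to-go functions; this is a routine generalization of the performance-difference lemma.

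Second, I would introduce hybrid policies: for $h = 0,1,\dots,H$, let $\pi^{(h)}$ play $\dem$ at times $1,\dots,h$ and $\piBC$ at times $h+1,\dots,H$, so that $\pi^{(0)} = \piBC$, $\pi^{(H)} = \dem$, and $\mathbb{E}_{\dem}[G_f] - \mathbb{E}_{\piBC}[G_f] = \sum_{h=1}^H \big( \mathbb{E}_{\pi^{(h)}}[G_f] - \mathbb{E}_{\pi^{(h-1)}}[G_f] \big)$. The policies $\pi^{(h)}$ and $\pi^{(h-1)}$ differ only in the action distribution used at time $h$: both run $\dem$ through step $h-1$ (hence induce the same state $s_h \sim d_h^{\dem}$ and the same history $s_{1\dots h}$), both run $\piBC$ from step $h+1$ onward, and at step $h$ one plays $\dem_h(\cdot|s_h)$ while the other plays $\piBC_h(\cdot|s_h)$. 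Letting $Q_h(s_{1\dots h},a)$ denote the expected value of $\sum_{t=1}^H f_t(s_t,a_t)\mathcal{P}(s_{1\dots t})$ given history $s_{1\dots h}$, action $a$ at time $h$, and continuation by $\piBC$, the $h$-th term of the telescoping sum is $\mathbb{E}_{s_h \sim d_h^{\dem}}\big[ \mathbb{E}_{a\sim\dem_h(\cdot|s_h)} Q_h(s_{1\dots h},a) - \mathbb{E}_{a\sim\piBC_h(\cdot|s_h)} Q_h(s_{1\dots h},a) \big]$.

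Third --- and this is the load-bearing observation --- in the inner difference the terms of $G_f$ indexed by $t < h$ are fixed by the history $s_{1\dots h-1}$ and do not depend on $a$, so they cancel; every surviving term ($t \ge h$) has $\mathcal{P}(s_{1\dots t}) = \mathcal{P}(s_{1\dots h-1}) \cdot \mathcal{M}(s_h,h) \cdot \prod_{t'=h+1}^{t}\mathcal{M}(s_{t'},t')$, which contains the factor $\mathcal{M}(s_h,h)$. Pulling it out, the inner difference equals $\mathcal{P}(s_{1\dots h-1})\,\mathcal{M}(s_h,h)\,\big[ \mathbb{E}_{a\sim\dem_h} R_h(s_{1\dots h},a) - \mathbb{E}_{a\sim\piBC_h} R_h(s_{1\dots h},a) \big]$, where $R_h$ is the residual tail value --- a sum of at most $H$ terms, each of the form $f_t(s_t,a_t)\prod_{t'=h+1}^{t}\mathcal{M}(s_{t'},t')$ with $\|f_t\|_\infty \le 1$ and $\mathcal{M}\in[0,1]$, hence $\|R_h\|_\infty \le H$. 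By the variational characterization of total variation distance (and $\mathcal{P}(s_{1\dots h-1}) \le 1$), the inner difference is at most $\mathcal{M}(s_h,h)\cdot H \cdot \tv\big(\dem_h(\cdot|s_h),\piBC_h(\cdot|s_h)\big)$, up to the TV normalization constant. Summing over $h$ and writing $\mathbb{E}_{\dem}[\,\cdot\,]$ for $\mathbb{E}_{s_h\sim d_h^{\dem}}[\,\cdot\,]$ yields \eqref{eq:17}.

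The only genuinely nontrivial step is the third one: realizing that because the prefix weight is a \emph{running product} over all states visited so far, the weight at step $h$ multiplies the entire tail $\sum_{t \ge h}(\cdots)$ of the reward, so the error from swapping $\dem_h$ for $\piBC_h$ is automatically damped by $\mathcal{M}(s_h,h)$ --- precisely why prefix weights (rather than single-step weights) are the right device. The remaining ingredients --- the hybrid-policy telescoping, the $\|R_h\|_\infty \le H$ bound from $H$-fold boundedness of the discriminators, and careful tracking of the $1/H$ normalizations and the TV constant so as to land on the stated right-hand side --- are routine.
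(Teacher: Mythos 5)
Your proof is correct and follows essentially the same route as the paper's: a hybrid-policy telescoping between $\dem$ and $\piBC$ in which the per-step substitution error is gated by $\mathcal{M}(s_h,h)$ (because the prefix weight is a running product) and the tail value is bounded by $H$ via boundedness of the discriminators, yielding the $H\sum_h \mathbb{E}_{\dem}[\mathcal{M}(s_h,h)\,\tv(\cdot)]$ bound. The only difference is bookkeeping --- you telescope the full sum $G_f$ once and absorb the factor of $H$ into the tail length, whereas the paper telescopes each $t$-indexed term separately and picks up the $H$ by summing over $t$ --- which is just an exchange of the order of summation.
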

\begin{proof}
The proof of this result closely follows the supervised learning reduction of \BC (cf. \cite{pmlr-v9-ross10a}). Note that,
\begin{align} \label{eq:194}
    \mathbb{E} \left[ \widehat{E}^{(1)} (f) \middle| D_1 \right] - V^{(1)}_f (\dem) &= \sum_{t=1}^H \mathbb{E}_{\piBC} \left[ f_t (s_t,a_t) \mathcal{P} (s_{1 \cdots t-1}) \right] - \mathbb{E}_{\dem} \left[ f_t (s_t,a_t) \mathcal{P} (s_{1 \cdots t-1}) \right].
\end{align}
Define $\pi^{(h)}$ as the policy which plays $\dem$ until (and including) time $h$ and $\piBC$ after time $h$. Then, by cascading,
\begin{align}
    &\mathbb{E}_{\piBC} \left[ f_t (s_t,a_t) \mathcal{P} (s_{1 \cdots t-1}) \right] - \mathbb{E}_{\dem} \left[ f_t (s_t,a_t) \mathcal{P} (s_{1 \cdots t-1}) \right] \nonumber\\
    &= \sum_{h=0}^{t-1} \mathbb{E}_{\pi^{(h)}} \left[ f_t (s_t,a_t) \mathcal{P} (s_{1 \cdots t-1}) \right] - \mathbb{E}_{\pi^{(h+1)}} \left[ f_t (s_t,a_t) \mathcal{P} (s_{1 \cdots t-1}) \right] \label{eq:18}
\end{align}
Define, the uncertainty weighted state visitation measure $d^{\mathcal{M}}$ and the uncertainty weighted look-forward reward $\rho^{\mathcal{M}}$ as follows,
\begin{align}
    &d^{\mathcal{M}}_{h+1} (s') \triangleq \mathbb{E}_{\dem} \left[ \mathbbm{1} (s_{h+1} = s') \prod_{t'=1}^h \mathcal{M} (s_{t'},t') \right] \\
    &\rho^{\mathcal{M}}_{h+1} (s',a') \triangleq \mathbb{E}_{\piBC} \left[ f_t (s_t,a_t) \prod_{t'=h+2}^t \mathcal{M} (s_{t'},t') \middle| s_{h+1} = s', a_{h+1} = a' \right]
\end{align}
By decomposing expectations along trajectories, using the fact that $\mathcal{P} (s_{1 \cdots t-1}) = \prod_{t'=1}^H \mathcal{M} (s_{t'} , t')$ some simplification results in the following equation,
\begin{align}
    &\left| \mathbb{E}_{\pi^{(h)}} \left[ f_t (s_t,a_t) \mathcal{P} (s_{1 \cdots t-1}) \right] - \mathbb{E}_{\pi^{(h+1)}} \left[ f_t (s_t,a_t) \mathcal{P} (s_{1 \cdots t-1}) \right] \right| \\
    &= \left| \sum_{s' \in \mathcal{S}} \sum_{a' \in \mathcal{A}} d^{\mathcal{M}}_{h+1} (s') \mathcal{M} (s',h+1) \left( \dem_{h+1} (a'|s') - \piBC_{h+1} (a'|s') \right) \rho^{\mathcal{M}}_{h+1} (s',a') \right| \\
    &\overset{(i)}{\le} \sum_{s' \in \mathcal{S}} d^{\mathcal{M}}_{h+1} (s') \mathcal{M} (s',h+1) \textsf{TV} \left( \dem_{h+1} (\cdot|s') , \piBC_{h+1} (\cdot|s') \right) \\
    &= \mathbb{E}_{\dem} \left[ \mathcal{M} (s_{h+1},h+1) \textsf{TV} \left( \dem_{h+1} (\cdot|s_{h+1}) , \piBC_{h+1} (\cdot|s_{h+1}) \right) \right].
\end{align}
where $(i)$ uses the fact that the membership oracle is a function $\in [0,1]$ and $f$ is bounded and lies in the interval $[0,1]$ (which implies that $\rho^{\mathcal{M}}$ also lies in $[0,1]$ pointwise). Plugging into \cref{eq:18} and subsequently into \cref{eq:194} completes the proof.
\end{proof}

Next we bound the 3rd term, $\text{(III)}$. This follows by an application of Bernstein's inequality.

\begin{lemma} \label{lemma:B2}
With probability $\ge 1-\delta$,
\begin{align}
    \sup_{f \in \mathcal{F}}  \left| J_f^{(2)} (\dem) - \widehat{E}^{(2)} (f) \right| \le H \sqrt{ \frac{\log (\Fmax H/\delta) \sum_{t=1}^{H-1} \mathbb{E}_{\dem} \left[ 1 - \mathcal{M} (s_t, t) \right]}{\Nexp}} + \frac{H\log (\Fmax H/\delta)}{\Nexp}
\end{align}
\end{lemma}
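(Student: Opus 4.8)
The plan is to observe that $J_f^{(2)}(\dem) - \widehat{E}^{(2)}(f)$ is, by construction, additive across time: writing $X_t^f(\tau) \triangleq f_t(s_t,a_t)\bigl(1 - \mathcal{P}(s_{1\cdots t-1})\bigr)$ for a trajectory $\tau$, we have $J_f^{(2)}(\dem) = \sum_{t=1}^H \mathbb{E}_{\dem}[X_t^f]$ and $\widehat{E}^{(2)}(f) = \sum_{t=1}^H \mathbb{E}_{D_2}[X_t^f]$, so $\sup_{f\in\mathcal{F}}\bigl|J_f^{(2)}(\dem)-\widehat{E}^{(2)}(f)\bigr| \le \sum_{t=1}^H \sup_{f_t\in\mathcal{F}_t}\bigl|\mathbb{E}_{\dem}[X_t^f]-\mathbb{E}_{D_2}[X_t^f]\bigr|$. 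Since $D_1$ and $D_2$ are disjoint, I would condition on $D_1$ throughout: this freezes $\piBC$, hence the membership oracle $\mathcal{M}$ and the prefix weights $\mathcal{P}$, so each $X_t^f$ is a fixed $[-1,1]$-valued statistic of a trajectory, while $D_2$ is still $|D_2| = \Theta(\Nexp)$ i.i.d.\ rollouts of $\dem$. It then suffices to control each $\sup_{f_t\in\mathcal{F}_t}\bigl|\mathbb{E}_{\dem}[X_t^f]-\mathbb{E}_{D_2}[X_t^f]\bigr|$ by Bernstein's inequality together with a union bound over the (at most $\Fmax$) discriminators in $\mathcal{F}_t$, after which the bound holds conditionally on every realization of $D_1$, hence unconditionally.

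The one step that actually requires care is the variance estimate, since using the crude bound $\mathrm{Var}(X_t^f)\le 1$ would cost a $\sqrt{V}$ factor and produce a bound that does not shrink when $\mathcal{M}\approx 1$ on expert trajectories. Instead, since $|f_t|\le 1$ and $1-\mathcal{P}(s_{1\cdots t-1})\in[0,1]$,
\[
\mathrm{Var}_{\dem}\bigl(X_t^f\bigr) \le \mathbb{E}_{\dem}\bigl[(X_t^f)^2\bigr] \le \mathbb{E}_{\dem}\bigl[(1-\mathcal{P}(s_{1\cdots t-1}))^2\bigr] \le \mathbb{E}_{\dem}\bigl[1-\mathcal{P}(s_{1\cdots t-1})\bigr],
\]
and applying the elementary inequality $1-\prod_{t'=1}^{t-1}(1-x_{t'})\le\sum_{t'=1}^{t-1}x_{t'}$ for $x_{t'}\in[0,1]$ with $x_{t'}=1-\mathcal{M}(s_{t'},t')$ gives
\[
\mathbb{E}_{\dem}\bigl[1-\mathcal{P}(s_{1\cdots t-1})\bigr] \le \sum_{t'=1}^{t-1}\mathbb{E}_{\dem}\bigl[1-\mathcal{M}(s_{t'},t')\bigr] \le V, \qquad V \triangleq \sum_{t'=1}^{H-1}\mathbb{E}_{\dem}\bigl[1-\mathcal{M}(s_{t'},t')\bigr].
\]
So every $X_t^f$ has range at most $1$ and variance at most $V$, and this product-to-sum step is exactly what converts the prefix weight into the additive coverage quantity that appears in $\mathcal{L}_2$.

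With these two facts, Bernstein's inequality gives, for each fixed $(t,f_t)$, with probability $\ge 1-\delta'$ over $D_2$,
\[
\bigl|\mathbb{E}_{\dem}[X_t^f]-\mathbb{E}_{D_2}[X_t^f]\bigr| \lesssim \sqrt{\frac{V\log(1/\delta')}{|D_2|}} + \frac{\log(1/\delta')}{|D_2|}.
\]
Taking a union bound over all $t\in[H]$ and all $f_t\in\mathcal{F}_t$ (at most $H\Fmax$ pairs) with $\delta'=\delta/(H\Fmax)$, so that $\log(1/\delta')=\log(\Fmax H/\delta)$, and summing over the $H$ time steps — each contributing the identical bound — yields
\[
\sup_{f\in\mathcal{F}}\bigl|J_f^{(2)}(\dem)-\widehat{E}^{(2)}(f)\bigr| \lesssim H\sqrt{\frac{V\log(\Fmax H/\delta)}{\Nexp}} + \frac{H\log(\Fmax H/\delta)}{\Nexp},
\]
using $|D_2|=\Theta(\Nexp)$; this is the claimed bound, with the factor $H$ coming purely from summing the $H$ per-timestep deviations. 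The only genuine obstacle is the variance-control step above (and checking that conditioning on $D_1$ leaves $D_2$ a genuine i.i.d.\ expert sample independent of $\mathcal{M}$); the Bernstein application and union bound are routine.
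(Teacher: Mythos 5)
Your proposal is correct and follows essentially the same route as the paper's proof: the same per-timestep decomposition, Bernstein's inequality with the variance controlled via $\mathrm{Var}(X_t^f) \le \mathbb{E}_{\dem}[1-\mathcal{P}(s_{1\cdots t-1})]$, the same product-to-sum inequality $1-\prod_i x_i \le \sum_i (1-x_i)$ to reach the coverage quantity, and the same union bound over the $H\Fmax$ pairs. Your explicit conditioning on $D_1$ (and the slightly more careful handling of possibly negative $f_t$ via $|f_t|\le 1$) only makes explicit what the paper leaves implicit.
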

\begin{proof}
First observe that,
\begin{align} \label{eq:19}
    J_f^{(2)} (\dem) - \widehat{V}^{(2)}_f &= \sum_{t=1}^H \mathbb{E}_{\textsf{tr} \sim \mathrm{Unif} (D_2)} \left[ f_t (s_t,a_t) \left(1 - \mathcal{P} (s_{1\cdots t-1}) \right) \right] - \mathbb{E}_{\dem} \left[ f_t (s_t,a_t) \left(1 - \mathcal{P} (s_{1\cdots t-1}) \right) \right]
\end{align}
For each $t$, note that $f_t (s_t,a_t) \left(1 - \mathcal{P} (s_{1\cdots t-1}) \right)$ is bounded in the range $[0,1]$. Therefore, invoking Bernstein's inequality, with probability $\ge 1-\delta$,
\begin{align}
    &\left| \mathbb{E}_{\text{Unif} (D_2)} \left[ f_t (s_t,a_t) \left(1 - \mathcal{P} (s_{1\cdots t-1}) \right) \right] - \mathbb{E}_{\dem} \left[ f_t (s_t,a_t) \left(1 - \mathcal{P} (s_{1\cdots t-1}) \right) \right] \right|\\
    &\lesssim \sqrt{ \frac{\text{Var}_{\dem} ( f_t (s_t,a_t) \left(1 - \mathcal{P} (s_{1\cdots t-1}) \right) ) \log (1/\delta)}{\Nexp}} + \frac{\log (1/\delta)}{\Nexp} \\
    &\le \sqrt{ \frac{\mathbb{E}_{\dem} [ ( f_t (s_t,a_t) \left(1 - \mathcal{P} (s_{1\cdots t-1}) \right) )^2 ] \log (1/\delta)}{\Nexp}} + \frac{\log (1/\delta)}{\Nexp} \\
    &\overset{(i)}{\le} \sqrt{ \frac{\mathbb{E}_{\dem} [ f_t (s_t,a_t) \left(1 - \mathcal{P} (s_{1\cdots t-1}) \right) ] \log (1/\delta)}{\Nexp}} + \frac{\log (1/\delta)}{\Nexp} \\
    &\overset{(ii)}{\le} \sqrt{ \frac{\mathbb{E}_{\dem} \left[ 1 - \mathcal{P} (s_{1\cdots t-1}) \right] \log (1/\delta)}{\Nexp}} + \frac{\log (1/\delta)}{\Nexp} \label{eq:M-bound}
\end{align}
where $(i)$ uses the fact that $f_t (s_t,a_t) \left(1 - \mathcal{P} (s_{1\cdots t-1}) \right)$ is bounded in the range $[0,1]$, and $(ii)$ uses the fact that $0 \le f_t (s_t,a_t) \le 1$. Assuming $0 \le x_i \le 1$ for all $i \in [n]$, we have the inequality,
\begin{equation} \label{eq:simpl}
    1 - \prod_{i=1}^n x_i \le \sum_{i=1}^n 1 - x_i
\end{equation}
Applying this to \cref{eq:M-bound} for $1 - \mathcal{P} (s_{1\cdots t-1}) = 1 - \prod_{t'=1}^{t-1} \mathcal{M} (s_{t'}, t')$, we have,
\begin{align}
    &\left| \mathbb{E}_{\text{Unif} (D_2)} \left[ f_t (s_t,a_t) \left(1 - \mathcal{P} (s_{1\cdots t-1}) \right) \right] - \mathbb{E}_{\dem} \left[ f_t (s_t,a_t) \left(1 - \mathcal{P} (s_{1\cdots t-1}) \right) \right] \right| \\
    &\le \sqrt{ \frac{\sum_{t' = 1}^{t-1} \mathbb{E}_{\dem} \left[ 1 - \mathcal{M} (s_{t'}, t') \right] \log (1/\delta)}{\Nexp}} + \frac{\log (1/\delta)}{\Nexp} \label{eq:pre-unionbound}
\end{align}

Therefore, by union bounding, with probability $\ge 1 - \delta/H$, simultaneously for every $f_t \in \mathcal{F}_t$,
\begin{align}
    &\left| \mathbb{E}_{\text{Unif} (D_2)} \left[ f_t (s_t,a_t) \left(1 - \mathcal{P} (s_{1\cdots t-1}) \right) \right] - \mathbb{E}_{\dem} \left[ f_t (s_t,a_t) \left(1 - \mathcal{P} (s_{1\cdots t-1}) \right) \right] \right| \\
    &\lesssim \sqrt{ \frac{\log (|\mathcal{F}_t| H/\delta) \sum_{t' = 1}^{t-1} \mathbb{E}_{\dem} \left[ 1 - \mathcal{M} (s_{t'}, t') \right]}{\Nexp}} + \frac{\log (|\mathcal{F}_t| H/\delta)}{\Nexp}. \label{eq:in-effect}
\end{align}
This implies that the maximum over $f_t$ of the LHS is upper bounded by the RHS. Union bounding over $t = 1,\cdots,H$ and plugging into \cref{eq:19}, we have that with probability $\ge 1 - \delta$,
\begin{align}
    &\sup_{f \in \mathcal{F}} \left| J_f^{(2)} (\dem) - \widehat{V}^{(2)}_f \right| \\
    &\le \sum_{t=1}^H \left| \mathbb{E}_{\text{Unif} (D_2)} \left[ f_t (s_t,a_t) \left(1 - \mathcal{P} (s_{1\cdots t-1}) \right) \right] - \mathbb{E}_{\dem} \left[ f_t (s_t,a_t) \left(1 - \mathcal{P} (s_{1\cdots t-1}) \right) \right] \right| \label{eq:V2prefinal}\\
    &\lesssim H \sqrt{ \frac{\log (\Fmax H/\delta) \sum_{t=1}^{H-1} \mathbb{E}_{\dem} \left[ 1 - \mathcal{M} (s_t, t) \right]}{\Nexp}} + \frac{H\log (\Fmax H/\delta)}{\Nexp}.
\end{align}
\end{proof}

\begin{lemma} \label{lemma:B3}
With probability $\ge 1 - \delta$,
\begin{align}
    \sup_{f \in \mathcal{F}} \left| \mathbb{E} \left[ \widehat{V}_f^{(1)} \middle| D_1 \right] - \widehat{V}_f^{(1)} \right| \lesssim H \sqrt{ \frac{\log (\Fmax H/\delta)}{\Nrep}} + \frac{H\log (\Fmax H/\delta)}{\Nrep}.
\end{align}
\end{lemma}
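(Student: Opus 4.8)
The plan is to recognize that, conditioned on the split $D_1$ (which fixes $\piBC$ and the membership oracle $\mathcal{M}$, hence the prefix weights $\mathcal{P}$), the estimator $\widehat{E}^{(1)}(f)$ is simply an empirical average over the $\Nrep$ trajectories of $\Drep$, each of which is an independent rollout of $\piBC$ in \texttt{SIM}. Its conditional mean is exactly $\mathbb{E}[\widehat{E}^{(1)}(f)\mid D_1] = \mathbb{E}_{\piBC}\big[\sum_{t=1}^H f_t(s_t,a_t)\mathcal{P}(s_{1\cdots t})\big]$, so the lemma is a uniform (over $f \in \mathcal{F}$) concentration-of-the-empirical-mean statement over $\Nrep$ i.i.d. draws, all taken with $D_1$ held fixed.

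First I would reduce the uniform deviation over the product class $\mathcal{F} = \bigoplus_{t=1}^H \mathcal{F}_t$ to a per-timestep deviation. Writing the $j$-th rollout's contribution as $X^{(j)}(f) = \sum_{t=1}^H f_t(s_t^{(j)},a_t^{(j)})\,\mathcal{P}(s_{1\cdots t}^{(j)}) = \sum_{t=1}^H X_t^{(j)}(f_t)$ with $X_t^{(j)}(f_t) = f_t(s_t^{(j)},a_t^{(j)})\mathcal{P}(s_{1\cdots t}^{(j)}) \in [-1,1]$ (using $\|f_t\|_\infty \le 1$ and $\mathcal{P}\in[0,1]$), a triangle inequality gives
\begin{align}
\sup_{f \in \mathcal{F}} \Big| \widehat{E}^{(1)}(f) - \mathbb{E}\big[\widehat{E}^{(1)}(f)\,\big|\,D_1\big] \Big| \le \sum_{t=1}^H \sup_{f_t \in \mathcal{F}_t} \Big| \tfrac{1}{\Nrep}\!\sum_{j=1}^{\Nrep} X_t^{(j)}(f_t) - \mathbb{E}_{\piBC}[X_t(f_t)] \Big|. \nonumber
\end{align}
This is the key step that avoids an exponential-in-$H$ cardinality blowup: we never union bound over the $\prod_t |\mathcal{F}_t|$ tuples, only over the $\sum_t |\mathcal{F}_t|$ single-timestep functions.

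Then, for each fixed $t$ and each fixed $f_t \in \mathcal{F}_t$, the inner quantity is the fluctuation of an average of $\Nrep$ i.i.d. variables bounded in $[-1,1]$ with variance at most $\mathbb{E}_{\piBC}[(f_t\mathcal{P})^2] \le 1$; Bernstein's inequality (a plain Hoeffding bound would also suffice, since the crude variance bound is all that is needed) gives, with probability $\ge 1 - \delta/(|\mathcal{F}_t| H)$, a deviation $\lesssim \sqrt{\log(|\mathcal{F}_t| H/\delta)/\Nrep} + \log(|\mathcal{F}_t| H/\delta)/\Nrep$. Union bounding over the $|\mathcal{F}_t|$ choices of $f_t$ and over $t = 1,\dots,H$, and bounding $|\mathcal{F}_t| \le \Fmax$, shows that with probability $\ge 1-\delta$ the estimate $\lesssim \sqrt{\log(\Fmax H/\delta)/\Nrep} + \log(\Fmax H/\delta)/\Nrep$ holds simultaneously for all $t$ and all $f_t$; summing over the $H$ timesteps yields the claimed $H\sqrt{\log(\Fmax H/\delta)/\Nrep} + H\log(\Fmax H/\delta)/\Nrep$.

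The argument is essentially routine once the setup is fixed; the only points needing care are bookkeeping ones. First, the conditioning: everything is conditioned on $D_1$, which is what makes the $\Nrep$ rollouts genuine i.i.d. draws with the stated conditional mean (the policy $\piBC$ and the weights $\mathcal{P}$ are deterministic given $D_1$), and since the resulting high-probability bound is uniform over all realizations of $D_1$, it transfers to the marginal statement. Second, the union bound over $\{\mathcal{F}_t\}_{t}$ together with $t$ must be organized via the additive-over-timesteps decomposition above so that it produces $\log(\Fmax H/\delta)$ rather than $\prod_t|\mathcal{F}_t|$ inside the logarithm; this, rather than any analytic difficulty, is the (mild) main obstacle. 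As in Lemma~\ref{lemma:B2}, one could keep the sharper variance term $\mathbb{E}_{\piBC}[1-\mathcal{P}]$, but the stated bound does not require it. For finite $\Nrep$ this lemma, together with Lemmas~\ref{lemma:B1} and~\ref{lemma:B2}, plugged into \cref{eq:final}, yields \Cref{thm:main-general}, and letting $\Nrep \to \infty$ recovers \Cref{thm:main}.
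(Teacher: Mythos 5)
Your proposal is correct and follows essentially the same route as the paper's proof, which simply states that Lemma~\ref{lemma:B3} is proved by mirroring Lemma~\ref{lemma:B2}: decompose $\widehat{V}_f^{(1)}$ into a sum of $H$ per-timestep terms of the form $f_t(s_t,a_t)\mathcal{P}(s_{1\cdots t-1})$, apply Bernstein's inequality to each over the $\Nrep$ i.i.d.\ rollouts conditional on $D_1$, and union bound over $f_t \in \mathcal{F}_t$ and over $t$. Your write-up just makes explicit the conditioning on $D_1$ and the additive-over-timesteps organization of the union bound that the paper leaves implicit.
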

\begin{proof}
The proof follows essentially the same structure as \Cref{lemma:B2} by decomposing $\widehat{V}_f^{(1)}$ into a sum of $H$ terms of the form $f_t (s_t,a_t) \mathcal{P} (s_{1 \cdots t-1})$, applying Bernstein's inequality to bound the deviation of each term from its mean and finally union bounding over the rewards $f_t \in \mathcal{F}_t$ to get the uniform bound over all discriminators $f \in \mathcal{F}$.
\end{proof}

Putting together \Cref{lemma:B1,lemma:B2,lemma:B3} with \cref{eq:final} completes the proof of \Cref{thm:main}.

\subsection{Bound in the tabular setting (\Cref{thm:RE-tabular-ub})} \label{sec:RE-tabular-ub}

In this section, we provide an upper bound on the imitation gap of \RE in the tabular setting when the expert is a deterministic policy. This recovers the bound on the imitation gap for \RE proved in \cite{rajaraman2020toward}.

\begin{theorem} \label{thm:RE-tabular-ub}
Consider an appropriately initialized version of \RE, and let the size of the replay dataset $\Nrep \to \infty$. For any tabular IL instance with $H \ge 10$, with probability $\ge 1 - 3\delta$,
\begin{equation}
    J(\dem) - J(\piRE) \lesssim \min \left\{ \frac{|\mathcal{S}| H^{3/2}}{\Nexp} , H \sqrt{\frac{|\mathcal{S}|}{\Nexp}} \right\} \log \left( \frac{|\mathcal{S}| H}{\delta} \right).
\end{equation}
\end{theorem}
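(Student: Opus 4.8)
The plan is to prove \Cref{thm:RE-tabular-ub} by specializing the analysis of \Cref{thm:main} (and its finite-$\Nrep$ refinement \Cref{thm:main-general}, invoked with $\Nrep\to\infty$) to the tabular setting, but re-running the relevant estimate of \Cref{sec:RE-ub} while tracking $\ell_1$ distances between state(-action) measures rather than black-boxing \Cref{thm:main}: the $L_\infty$-cover of the all-$1$-bounded class $\mathcal{F}_t=\{f_t:\|f_t\|_\infty\le1\}$ (Remark~\ref{rem:1}) would contribute a spurious $\sqrt{|\mathcal{A}|}$ to the bound. Concretely I would run \RE with the split $|D_1|=|D_2|=\Theta(\Nexp)$, $\Nrep\to\infty$, and the \emph{hard} membership oracle $\mathcal{M}(s,t)=\mathbbm{1}[\,s\text{ is visited at step }t\text{ in }D_1\,]$; ``appropriately initialized'' refers to this choice, with which \RE essentially recovers the \mmd estimator of \cite{rajaraman2020toward}, so the target bound is exactly theirs.

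Since $\dem$ is deterministic, on every state visited at step $t$ in $D_1$ the behavior-cloned classifier just replays the observed action, so $\piBC_t(\cdot\mid s)=\dem_t(\cdot\mid s)$ whenever $\mathcal{M}(s,t)=1$; hence in the notation of \Cref{thm:main} one has $\mathcal{L}_1=0$ deterministically. Following the chain \eqref{eq:MD}--\eqref{eq:final}, write $J(\dem)-J(\piRE)\lesssim \mathrm{(I)}+\mathrm{(II)}+\mathrm{(III)}$, where (II) (replay sampling error) vanishes as $\Nrep\to\infty$. Term (I) vanishes \emph{exactly}: because $\mathcal{M}$ is $\{0,1\}$-valued, $\mathcal{P}(s_{1\cdots t})$ is the indicator that $s_1,\dots,s_t$ are all visited in $D_1$, and along any such prefix the deterministic $\piBC$ agrees with $\dem$ at every state, so under the known dynamics the law of $(s_1,a_1,\dots,s_t,a_t)$ restricted to that event is identical under $\piBC$ and $\dem$; thus $\mathbb{E}_{\piBC}[g(s_t,a_t)\mathcal{P}(s_{1\cdots t})]=\mathbb{E}_{\dem}[g(s_t,a_t)\mathcal{P}(s_{1\cdots t})]$ for every bounded $g$ (the tabular specialization of \Cref{lemma:B1}; this is the one place the transition model is used, and it is what lets \RE beat empirical moment matching).

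Only term (III) survives, and in the tabular setting it equals $\sum_{t=1}^H \tv\!\big(d^{\dem,1-\mathcal{P}}_t,\ \widehat d^{D_2,1-\mathcal{P}}_t\big)$, the $\ell_1$ deviation between the $(1-\mathcal{P})$-weighted population expert measure at step $t$ and its empirical counterpart from the \emph{independent} sample $D_2$. Its total mass is $m_t:=\mathbb{E}_{\dem}[\,1-\mathcal{P}(s_{1\cdots t})\,]$, and because $\dem$ is deterministic the measure is supported on at most $|\mathcal{S}|$ points $(s,\dem_t(s))$; each coordinate is an average of $[0,1]$-valued variables whose variance is at most its mean, so a Bernstein bound, a union bound over those coordinates and over $t$, and Cauchy--Schwarz give, with probability $\ge1-\delta$ and uniformly in $t$, $\tv(\cdot,\cdot)\lesssim \sqrt{|\mathcal{S}|\,m_t\,\log(|\mathcal{S}|H/\delta)/\Nexp}+|\mathcal{S}|\log(|\mathcal{S}|H/\delta)/\Nexp$ (the last term lower order). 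Then I would bound $m_t$ two ways: trivially $m_t\le1$, and, using $1-\prod_{t'}x_{t'}\le\sum_{t'}(1-x_{t'})$ (as in \eqref{eq:simpl}), $m_t\le\sum_{t'=1}^t U_{t'}$ where $U_{t'}:=\mathbb{E}_{\dem}[1-\mathcal{M}(s_{t'},t')]$ is the time-$t'$ missing mass of $d^{\dem}_{t'}$ under $\Theta(\Nexp)$ i.i.d.\ draws; since $\mathbb{E}[U_{t'}]\lesssim|\mathcal{S}|/\Nexp$, a high-probability, uniform-in-$t'$ version $\big(U_{t'}\lesssim(|\mathcal{S}|/\Nexp)\log(|\mathcal{S}|H/\delta)\big)$ yields $\sum_{t'\le t}U_{t'}\lesssim(t|\mathcal{S}|/\Nexp)\log(|\mathcal{S}|H/\delta)$.

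Assembling, $J(\dem)-J(\piRE)\lesssim \log(|\mathcal{S}|H/\delta)\sum_{t=1}^H\sqrt{(|\mathcal{S}|/\Nexp)\min\{1,\ t|\mathcal{S}|/\Nexp\}}$ up to a lower-order $|\mathcal{S}|H\log(|\mathcal{S}|H/\delta)/\Nexp$: bounding the $\min$ by $1$ gives $H\sqrt{|\mathcal{S}|/\Nexp}$, bounding it by $t|\mathcal{S}|/\Nexp$ and using $\sum_{t\le H}\sqrt t\asymp H^{3/2}$ gives $|\mathcal{S}|H^{3/2}/\Nexp$, and taking the smaller of the two gives the claimed $\widetilde{O}(\min\{|\mathcal{S}|H^{3/2}/\Nexp,\ H\sqrt{|\mathcal{S}|/\Nexp}\})$, the probability-$3\delta$ failure set being the missing-mass step, the Bernstein step, and the (asymptotically null) replay step; regimes where these exceed $H$ are covered by the trivial bound $J(\dem)-J(\piRE)\le H$ (and the condition $H\ge10$ only keeps constants and the $\min$ clean). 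The main obstacle is the high-probability, uniform-in-$t$ control of the missing masses $U_{t'}$ with only a polylogarithmic loss: a crude bounded-differences inequality gives only $U_{t'}\lesssim|\mathcal{S}|/\Nexp+\sqrt{\log(1/\delta)/\Nexp}$, whose second term dominates precisely when $\Nexp\gtrsim|\mathcal{S}|H$, i.e.\ exactly the regime where the $H^{3/2}/\Nexp$ rate is active, so one genuinely needs the self-bounding / empirical-Bernstein concentration of the missing-mass statistic; the exact bias-free cancellation of term (I) also has to be argued with care.
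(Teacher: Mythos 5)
Your proposal follows essentially the same route as the paper's proof in Appendix~\ref{sec:RE-tabular-ub}: the same hard membership oracle \eqref{eq:tabular}, the same observation that term (I) vanishes because $\piBC$ replays the deterministic expert wherever $\mathcal{M}=1$ (and (II) vanishes as $\Nrep\to\infty$), the same re-derivation of the term-(III) concentration via a per-state Bernstein bound plus Cauchy--Schwarz to avoid the naive $\Fmax$ union bound, and the same reduction to the missing-mass concentration of \cite{rajaraman2020toward} (Lemma~A.3 there plays the role of your uniform-in-$t'$ control of $U_{t'}$). Your retention of the $t$-dependence in $m_t\le\sum_{t'\le t}U_{t'}$ followed by $\sum_t\sqrt{t}\asymp H^{3/2}$ is a cosmetic variant of the paper's cruder step of bounding each $m_t$ by the full sum $\sum_{t'=1}^{H-1}U_{t'}$ and multiplying by $H$; both yield the same $|\mathcal{S}|H^{3/2}/\Nexp$ rate.
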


Below we describe the implementation of \RE corresponding to \Cref{thm:RE-tabular-ub} in more detail.

The membership oracle we use in this setting for \RE is defined below,
\begin{equation} \label{eq:tabular}
    \mathcal{M} (s,t) = \begin{cases}
    1 \quad &\text{if } s \text{ is visited in } D_1 \text{ at time } t\\
    0 &\text{otherwise.}
    \end{cases}
\end{equation}
The function class $\mathcal{F}$ which we use is identical to that for empirical moment matching, which is described in \Cref{rem:1}.

Note that in the tabular setting, \BC simply mimics the deterministic expert's actions at states visited in the dataset $D_1$ and plays an arbitrary deterministic action on the remaining states. As a consequence of this definition, if $\mathcal{M} (s,t) = 1 \iff \piBC_t (\cdot|s) = \dem_t (\cdot|s)$ and $\mathcal{M} (s,t) = 0$ otherwise. We instantiate the family of discriminators as in \Cref{rem:1}, as $\mathcal{F} = \bigoplus_{t=1}^H \{ f_t : \| f_t \|_\infty \le 1 \}$ and the set of policies $\Pi$ optimized over is chosen as the set of all deterministic policies. While the guarantee of \Cref{thm:main} depends on $\Fmax = \max_{t \in [H]} |\mathcal{F}_t|$ which is unbounded (or $\exp (|\mathcal{S}| |\mathcal{A}|)$ by using a discretization of the reward space), note that we can improve the guarantee to effectively have $\Fmax \approx \exp (|\mathcal{S}|)$ noting the structure of the set of discriminators. Looking into the proof of \Cref{thm:main} we bring out this dependence below. We note that there are many ways of bringing out this dependence, including a careful net argument directly on top of the guarantee of \Cref{thm:main}. We simply present one such argument below. 

The critical step where the finiteness of the set of discriminators $\mathcal{F}$ is used, is in union bounding the gap between the population and the empirical estimate of $f_t (s_t,a_t) \left( 1 - \mathcal{P} (s_{1 \cdots t-1}) \right)$ in \cref{eq:pre-unionbound}.
\begin{align}
    &\left| \mathbb{E}_{\text{Unif} (D_2)} \left[ f_t (s_t,a_t) \left(1 - \mathcal{P} (s_{1\cdots t-1}) \right) \right] - \mathbb{E}_{\dem} \left[ f_t (s_t,a_t) \left(1 - \mathcal{P} (s_{1\cdots t-1}) \right) \right] \right| \label{eq:pre-ub}
\end{align}
In the next step of the proof of \Cref{thm:main}, we union bound over all $f_t \in \mathcal{F}_t$. However, note that for $\mathcal{F}_t = \{ f_t : \| f_t \|_\infty \le 1 \}$, we have that,
\begin{align}
    &\sup_{f_t : \| f_t \|_\infty \le 1} \left| \mathbb{E}_{\text{Unif} (D_2)} \left[ f_t (s_t,a_t) \left(1 - \mathcal{P} (s_{1\cdots t-1}) \right) \right] - \mathbb{E}_{\dem} \left[ f_t (s_t,a_t) \left(1 - \mathcal{P} (s_{1\cdots t-1}) \right) \right] \right| \\
    &\overset{(i)}{=} \sum_{s \in \mathcal{S}} \sum_{a \in \mathcal{A}} \left| \mathbb{E}_{\text{Unif} (D_2)} \left[ \mathbb{I} (s_t=s,a_t =a) \left(1 - \mathcal{P} (s_{1\cdots t-1}) \right) \right] - \mathbb{E}_{\dem} \left[ \mathbb{I} (s_t=s,a_t=a ) \left(1 - \mathcal{P} (s_{1\cdots t-1}) \right) \right] \right| \\
    &\overset{(ii)}{=} \sum_{s \in \mathcal{S}} \left| \mathbb{E}_{\text{Unif} (D_2)} \left[ \mathbb{I} (s_t=s) \left(1 - \mathcal{P} (s_{1\cdots t-1}) \right) \right] - \mathbb{E}_{\dem} \left[ \mathbb{I} (s_t=s ) \left(1 - \mathcal{P} (s_{1\cdots t-1}) \right) \right] \right| \\
    &\overset{(iii)}{\le} \sum_{s \in \mathcal{S}} \left| \mathbb{E}_{\text{Unif} (D_2)} \left[ \mathbb{I} (s_t=s) \left(1 - \mathcal{P} (s_{1\cdots t-1}) \right) \right] - \mathbb{E}_{\dem} \left[ \mathbb{I} (s_t=s ) \left(1 - \mathcal{P} (s_{1\cdots t-1}) \right) \right] \right| \label{eq:010123299}
\end{align}
where $(i)$ follows similar to the equivalence between the variational representation of TV distance ($\tv (P,Q) = \frac{1}{2}\sup_{f : \| f \|_\infty \le 1} \mathbb{E}_P [f] - \mathbb{E}_Q [f]$) and the relationship to the $L_1$ distance, $\tv (P,Q) = \frac{1}{2} L_1 (P, Q)$. On the other hand, $(ii)$ follows by noting that the expert is a deterministic policy (and $D_2$ is generated by rolling out $\dem$). $(iii)$ follows by an application of Holder's inequality. By subgaussian concentration, for each $s \in \mathcal{S}$, with probability $\ge 1 - \frac{\delta}{|\mathcal{S}|H}$,
\begin{align}
    &\left| \mathbb{E}_{\text{Unif} (D_2)} \left[ \mathbb{I} (s_t=s) \left(1 - \mathcal{P} (s_{1\cdots t-1}) \right) \right] - \mathbb{E}_{\dem} \left[ \mathbb{I} (s_t=s ) \left(1 - \mathcal{P} (s_{1\cdots t-1}) \right) \right] \right| \\
    &\lesssim \sqrt{\frac{\text{Var}_{\dem} \left( \mathbb{I} (s_t = s) \left(1 - \mathcal{P} (s_{1 \cdots t-1}) \right) \right) \log \left(\frac{|\mathcal{S}|H}{\delta} \right)}{|D_2|}} + \frac{\log \left(\frac{|\mathcal{S}|H}{\delta} \right)}{|D_2|} \\
    &\overset{(i)}{\le} \sqrt{\frac{\mathbb{E}_{\dem} \left[ \mathbb{I} (s_t = s) \left(1 - \mathcal{P} (s_{1 \cdots t-1}) \right) \right] \log \left(\frac{|\mathcal{S}|H}{\delta} \right)}{|D_2|}} + \frac{\log \left(\frac{|\mathcal{S}|H}{\delta} \right)}{|D_2|}
\end{align}
where $(i)$ uses the fact that $0 \le \mathbb{I} (s_t = s) \left(1 - \mathcal{P} (s_{1 \cdots t-1}) \right) \le 1$. Combining with \cref{eq:010123299}, union bounding and applying Cauchy Schwarz inequality, with probability $\ge 1 - \frac{\delta}{H}$,
\begin{align}
    &\sup_{f_t : \| f_t \|_\infty \le 1} \left| \mathbb{E}_{\text{Unif} (D_2)} \left[ f_t (s_t,a_t) \left(1 - \mathcal{P} (s_{1\cdots t-1}) \right) \right] - \mathbb{E}_{\dem} \left[ f_t (s_t,a_t) \left(1 - \mathcal{P} (s_{1\cdots t-1}) \right) \right] \right| \\
    &\lesssim \sqrt{|\mathcal{S}|} \sqrt{\sum_{s \in \mathcal{S}} \frac{\mathbb{E}_{\dem} \left[ \mathbb{I} (s_t = s) \left(1 - \mathcal{P} (s_{1 \cdots t-1}) \right) \right] \log \left(\frac{|\mathcal{S}|H}{\delta} \right)}{|D_2|}} + \frac{|\mathcal{S}| \log \left(\frac{|\mathcal{S}|H}{\delta} \right)}{|D_2|} \\
    &= \sqrt{|\mathcal{S}|} \sqrt{\frac{\mathbb{E}_{\dem} \left[ 1 - \mathcal{P} (s_{1 \cdots t-1}) \right] \log \left(\frac{|\mathcal{S}|H}{\delta} \right)}{|D_2|}} + \frac{|\mathcal{S}| \log \left(\frac{|\mathcal{S}|H}{\delta} \right)}{|D_2|} \\
    &\overset{(i)}{\le} \min \left\{ \sqrt{\frac{|\mathcal{S}| \log \left( \frac{|\mathcal{S}| H}{\delta} \right)}{|D_2|}} , \sqrt{ |\mathcal{S}|\frac{\sum_{t=1}^{H-1}\mathbb{E}_{\dem} \left[ 1 - \mathcal{M} (s_t,t) \right] \log \left(\frac{|\mathcal{S}|H}{\delta} \right)}{|D_2|}} \right\} + \frac{|\mathcal{S}| \log \left(\frac{|\mathcal{S}|H}{\delta} \right)}{|D_2|} \label{eq:in-effect2}
\end{align}
where $(i)$ follows by the same simplification as in \cref{eq:simpl}. Comparing with \cref{eq:in-effect}, this roughly corresponds to setting $\Fmax \approx \exp (|\mathcal{S}|)$. All in all, summing \cref{eq:in-effect2} over $t \in [H]$ and plugging into \cref{eq:V2prefinal}, with probability $\ge 1 - \delta$,
\begin{align}
    &\sup_{f \in \mathcal{F}} \left| J_f^{(2)} (\dem) - \widehat{V}^{(2)}_f \right| \\
    &\lesssim H \left\{ \sqrt{\frac{|\mathcal{S}| \log \left( \frac{|\mathcal{S}| H}{\delta} \right)}{|D_2|}} ,  \sqrt{|\mathcal{S}|\frac{\sum_{t=1}^{H-1}\mathbb{E}_{\dem} \left[ 1 - \mathcal{M} (s_t,t) \right] \log \left(\frac{|\mathcal{S}|H}{\delta} \right)}{|D_2|}} \right\} + \frac{|\mathcal{S}| \log \left(\frac{|\mathcal{S}|H}{\delta} \right)}{|D_2|}
\end{align}
Finally, we plug this into \cref{eq:final}, which is restated below,
\begin{align}
    J (\dem) - J (\piRE) &\le 2 \sup_{f \in \mathcal{F}} \left| J_f (\dem) - \widehat{E} (f) \right| \\
    &\le 2 \underbrace{ \sup_{f \in \mathcal{F}} \left| J_f^{(1)} (\dem) - \mathbb{E} \left[ \widehat{E}^{(1)} (f) \middle| D_1 \right] \right|}_{\text{(I)}} + 2\underbrace{\sup_{f \in \mathcal{F}} \left| \mathbb{E} \left[ \widehat{E}^{(1)} (f) \middle| D_1 \right] - \widehat{E}^{(1)} (f) \right|}_{\text{(II)}} \\
    &\hspace{16em}+ 2 \underbrace{ \sup_{f \in \mathcal{F}}  \left| J_f^{(2)} (\dem) - \widehat{E}^{(2)} (f) \right|}_{\text{(III)}}.
\end{align}
For the chosen membership oracle in \cref{eq:tabular}, the term (I) is $0$, since by \Cref{lemma:B1} it is upper bounded by $H \sum_{h=1}^H \mathbb{E}_{\dem} \left[ \mathcal{M} (s_h,h) \textsf{TV} \left( \dem_h (\cdot|s_h) , \piBC_h (\cdot|s_h) \right) \right]$. This is equal to $0$ since $\mathcal{M} (s,t) = 0$ wherever $\dem_t (\cdot|s) \ne \piBC_t (\cdot|s)$. On the other hand, $\Nrep \to \infty$ ensures that the term (III) goes to $0$ by the strong law of large numbers. Therefore, with probability $\ge 1 - 2\delta$,
\begin{align}
    &J(\dem) - J(\piRE) \\
    &\le 2\sup_{f \in \mathcal{F}} \left| \mathbb{E} \left[ \widehat{E}^{(1)} (f) \middle| D_1 \right] - \widehat{E}^{(1)} (f) \right|\\
    &\lesssim H \left\{ \sqrt{\frac{|\mathcal{S}| \log \left( \frac{|\mathcal{S}| H}{\delta} \right)}{|D_2|}} , \sqrt{|\mathcal{S}|\frac{\sum_{t=1}^{H-1}\mathbb{E}_{\dem} \left[ 1 - \mathcal{M} (s_t,t) \right] \log \left(\frac{|\mathcal{S}|H}{\delta} \right)}{|D_2|}} \right\} + \frac{|\mathcal{S}| H \log \left(\frac{|\mathcal{S}|H}{\delta} \right)}{|D_2|} \label{eq:final2}
\end{align}

Finally, we bound $\mathbb{E}_{\dem} [1 - \mathcal{M} (s_t,t)]$ for the membership oracle defined in \cref{eq:tabular}. By definition, this quantity is the same as $\mathrm{Pr}_{\dem} \left( s_t \text{ not visited in } D_1 \text{ at time t } \right)$. This is the probability that given $\Nexp$ samples from a distribution (the state visited at time $t$ in an expert rollout), the probability that a new sample from the same distribution is not in the support of the observed samples. This is known as the missing mass \cite{GT-MM-conc}. In Lemma A.3~\cite{rajaraman2020toward} it is shown that with probability $\ge 1 - \delta$,
\begin{align}
\sum_{t=1}^{H-1} \mathrm{Pr}_{\dem} \left( s_t \text{ not visited in } D_1 \text{ at time t } \right) \lesssim \frac{|\mathcal{S}| H}{|D_1|} + \frac{\sqrt{|\mathcal{S}|} H \log \left( \frac{|\mathcal{S}| H}{\delta}\right)}{|D_1|}
\end{align}
Finally, combining with \cref{eq:final2} and using the fact that that $|D_1|, |D_2| = \Theta (\Nexp)$, with probability $\ge 1 - 3 \delta$,
\begin{align}
    J(\dem) - J(\piRE) &\lesssim \min \left\{ H \sqrt{\frac{|\mathcal{S}| \log \left( \frac{|\mathcal{S}| H}{\delta} \right)}{\Nexp}},  \frac{|\mathcal{S}| H^{3/2} }{\Nexp} \log \left( \frac{|\mathcal{S}| H }{\delta} \right) \right\}.
\end{align}
This completes the proof of \Cref{thm:RE-tabular-ub}.

\subsection{Bound with parametric function approximation under Lipschitzness} \label{sec:RE-lipschitz-ub}

In this section, we provide an upper bound on the imitation gap of \RE in the presence of parametric function approximation under a Lipschitzness assumption on the function classes, and assuming access to a parameter estimation oracle for offline classification.

\begin{definition*}[IL with function-approximation]
In this setting, for each $t \in [H]$, there is a parameter class $\Theta_t \subseteq \mathbb{B}_2^d$, the unit $L_2$ ball in $d$ dimensions, and an associated function class $\{ f_{\theta_t} : \theta_t \in \Theta_t \}$. For each $t \in [H]$ there exists an unknown $\theta^E_t \in \Theta_t$ such that $\forall s \in \mathcal{S}$,
\begin{align}
    \pi^E_t (s) = \argmax_{a \in \mathcal{A}} f_{\theta^E_t} (s,a).
\end{align}
\end{definition*}

\begin{definition}[Policy induced by a classifier] \label{def:pilc-f}
Consider a set of parameters $\theta = \{ \theta_1,\cdots,\theta_H\}$ where $\theta_t \in \Theta_t$ for each $t$. A policy $\pi^\theta$ is said to be induced by the set of classifiers defined by $\theta$ if for all $s \in \mathcal{S}$ and $t \in [H]$,
\begin{equation}
    \pi^\theta_t (s) = \argmax_{a \in \mathcal{A}} f_{\theta_t}  (s,a).
\end{equation}
By this definition, $\dem = \pi^{\theta^E}$ where $\theta^E = \{ \theta^E_1,\cdots, \theta^E_H \}$.
\end{definition}

\begin{definition}[Lipschitz parameterization]
A function class $\mathcal{G} = \{ g_{\theta} : \theta \in \Theta \}$ where $g_{\theta} (\cdot) : \mathcal{X} \to \mathbb{R}$ is said to satisfy $L$-Lipschitz parameterization if, $\| g_{\theta} (\cdot) - g_{\theta'} (\cdot) \|_\infty \le L \| \theta - \theta' \|_2$ for all $\theta,\theta' \in \Theta$. In other words, for each $x \in \mathcal{X}$, $g_{\theta} (x)$ is an $L$-Lipschitz function in $\theta$, in the $L_2$ norm.
\end{definition}

\begin{assumption}[\Cref{A:lip:original} restated] \label{A:lip}
For each $t$, the class $\{ f_{\theta_t} : \theta_t \in \Theta_t \}$ is $L$-Lipschitz in its parameterization, $\theta_t \in \Theta_t$.
\end{assumption}

To deal with parametric function approximation, we assume that the learner has access to a proper offline classification oracle, which given a dataset of classification examples, guarantees to approximately return the underlying ground truth parameter. Namely,

\begin{assumption}[\Cref{A:classorc} restated] \label{A:class}
We assume that the learner has access to a multi-class classification oracle, which given $n$ examples of the form, $(s^i, a^i)$ where $s^i \overset{\text{i.i.d.}}{\sim} \mathcal{D}$ and $a^i = \argmax_{a \in \mathcal{A}} f_{\theta^*} (s^i,a)$, returns a $\hat{\theta} \in \Theta$ such that, with probability $\ge 1 - \delta$, $\| \hat{\theta} - \theta^* \|_2 \le \mathcal{E}_{\Theta,n,\delta}$.
\end{assumption}
This assumption implies that the parameter class $\Theta_t$ (and the associated function class $\{ f_{\theta_t} : \theta_t \in \Theta_t \}$) admits finite sample complexity guarantees for learning the parameter $\theta_t^*$ given classification examples from the underlying ground truth function $f_{\theta_t^*}$. As we discuss in more detail later, we will assume that this classification oracle is used by \RE to train the \BC policy in Line 3 of \Cref{alg:re}. 

Finally, we introduce the main assumption on the IL instances we study. We assume that the classification problems solved by \BC at each $t \in [H]$ satisfy a margin condition.

\begin{assumption}[\Cref{A:wm} restated] \label{A:wm:repeat}
For $\theta \in \Theta_t$, define $a^\theta_s = \argmax_{a \in \mathcal{A}} f_{\theta} (s,a)$ as the classifier output on the state $s$. The weak margin condition assumes that for each $t$, there is no classifier $\theta \in \Theta_t$ such that for a large mass of states, $f_{\theta} (s_t,a^\theta_{s_t}) - \max_{a \ne a^\theta_{s_t} } f_{\theta} (s_t,a)$, i.e. the ``margin'' from the nearest classification boundary, is small. Formally, the weak-margin condition with parameter $\mu$ states that, for any $\theta \in \Theta_t$ and $\eta \le 1/\mu$,
\begin{align} \label{eq:wmc-rep}
    \mathrm{Pr}_{\dem} \left( f_{\theta} (s_t, a^\theta_{s_t}) - \max_{a \ne a^\theta_{s_t} } f_{\theta} (s_t,a) \ge \eta \right) \ge e^{- \mu \eta}.
\end{align}
The weak margin condition only assumes that there is at least an exponentially small (in $\eta$) mass of states with margin at least $\eta$. Smaller $\mu$ indicates a larger mass away from any decision boundary. It suffices to assume that \cref{eq:wmc-rep} is only true for $\theta$ as the classifier in \Cref{A:class} for our guarantees (\Cref{thm:gfa:rep}) to hold.
\end{assumption}

\begin{remark}
Note that the weak margin condition is the multi-class extension of the Tsybakov margin condition of \cite{mammen-tsybakov,audibert-tsybakov} defined for the binary case. In particular, in \cref{eq:wmc-rep}, we may replace the RHS by $1 - \mu \eta$, or $1 - (\mu \eta)^\alpha$ for $\alpha > 0$ to get different analogs of the margin condition and the main guarantee, \Cref{thm:gfa}, as we discuss in \Cref{sec:RE-lipschitz-ub}.
\end{remark}

\begin{theorem} \label{thm:gfa:rep}
For IL with parametric function approximation, under Assumptions~\ref{A:lip} to \ref{A:wm:repeat}, appropriately instantiating \RE ensures that with probability $\ge 1 - 4\delta$,
\begin{align}
    J(\dem) - J(\piRE) \lesssim H^{3/2} \sqrt{ \frac{\mu  L \log \left( F_{\max} H/\delta \right)}{\Nexp} \frac{\sum_{t=1}^H \eNtH}{H}} + \frac{\log \left( F_{\max} H /\delta \right)}{\Nexp}.
\end{align}
Note that we assume the same conditions on $\mathcal{F}$ as required in \Cref{thm:main}.
\end{theorem}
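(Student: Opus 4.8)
The plan is to derive \Cref{thm:gfa:rep} as a corollary of the meta-theorem \Cref{thm:main}, specialized to the membership oracle of \cref{eq:memdef} and the \BC classifier produced by the oracle of \Cref{A:class}. Concretely, I will show that for this instantiation the term $\mathcal{L}_1$ in \Cref{thm:main} vanishes and the term $\mathcal{L}_2$ is controlled by $H^{3/2}\sqrt{(\mu L \log(\Fmax H/\delta)/\Nexp)\cdot(\sum_{t=1}^H \eNtH)/H}$, after which the stated bound follows by adding the lower-order $\log(\Fmax H/\delta)/\Nexp$ term carried over verbatim. Since we are in the $\Nrep\to\infty$ regime of \Cref{thm:main}, there is no replay-variance term to worry about.

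First I would establish $\mathcal{L}_1 = 0$. By \Cref{A:class}, for each $t$ the oracle returns $\thetaBC_t$ with $\|\thetaBC_t-\theta^E_t\|_2 \le \eNtH$ with probability $\ge 1-\delta/H$; a union bound makes this hold for all $t$ simultaneously with probability $\ge 1-\delta$. On this event, the $L$-Lipschitz parameterization (\Cref{A:lip}) gives $|f_{\thetaBC_t}(s,a) - f_{\theta^E_t}(s,a)| \le L\eNtH$ for every state $s$ and action $a$. Now suppose $\mathcal{M}(s,t)=1$, witnessed by an action $a$ with $f_{\thetaBC_t}(s,a) - f_{\thetaBC_t}(s,a') \ge 2L\eNtH$ for every competitor $a'\ne a$. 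Subtracting two applications of the Lipschitz bound (one lowering the score of $a$, one raising that of $a'$, each by at most $L\eNtH$) yields $f_{\theta^E_t}(s,a) - f_{\theta^E_t}(s,a') \ge 2L\eNtH - 2L\eNtH = 0$ for all $a'\ne a$. Hence $a = \argmax_{a'} f_{\thetaBC_t}(s,a') = \argmax_{a'} f_{\theta^E_t}(s,a')$, so $\piBC_t(\cdot|s) = \dem_t(\cdot|s)$ and the TV term at $s$ is zero. Summing over the expert trajectory gives $\mathcal{L}_1=0$.

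Next I would bound $\sum_{t=1}^H \mathbb{E}_{\dem}\!\left[1-\mathcal{M}(s_t,t)\right]$, which governs $\mathcal{L}_2$. Unwinding \cref{eq:memdef}, the event $\mathcal{M}(s_t,t)=0$ is exactly the event that the margin of $\thetaBC_t$ at $s_t$, i.e. $f_{\thetaBC_t}(s_t,\hat a) - \max_{a\ne \hat a} f_{\thetaBC_t}(s_t,a)$ with $\hat a = \argmax_a f_{\thetaBC_t}(s_t,a)$, is smaller than $2L\eNtH$. Conditionally on $D_1$ the classifier $\thetaBC_t$ is a fixed element of $\Theta_t$ while the law of $s_t$ under $\dem$ is the unchanged expert state-visitation distribution, so the weak margin condition \cref{eq:wmc-rep} applies with $\theta=\thetaBC_t$ (this is precisely the weakening noted in the Remark following \Cref{A:wm:repeat}). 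Taking $\eta = 2L\eNtH$, which satisfies $\eta \le 1/\mu$ once $\Nexp$ is large enough that $\eNtH$ is small, gives $\mathbb{E}_{\dem}[1-\mathcal{M}(s_t,t)] = \Pr_{\dem}(\text{margin} < 2L\eNtH) \le 1 - e^{-2\mu L\eNtH} \le 2\mu L\eNtH$. Plugging $\mathcal{L}_1 = 0$ and this per-$t$ bound into the expression for $\mathcal{L}_2$ in \Cref{thm:main}, and tracking failure probabilities ($3\delta$ from \Cref{thm:main} over $D_2$ and $\Drep$, plus $\delta$ from the classification-oracle union bound over $D_1$), yields the claim with probability $\ge 1-4\delta$.

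The step I expect to be the main obstacle is the ``margin transfer'' argument in the second paragraph: making rigorous that a state classified with margin at least $2L\eNtH$ by $\thetaBC_t$ is classified identically by $\theta^E_t$ — in particular pinning down the factor of $2$ (the Lipschitz perturbation can simultaneously depress the winner's score and inflate a competitor's score, each by $L\eNtH$) — together with the conditioning bookkeeping needed to legitimately invoke the weak margin condition, a statement about the fixed distribution $\dem$, at the data-dependent classifier $\thetaBC_t$. Everything else is routine substitution into \Cref{thm:main} and elementary inequalities such as $1-e^{-x}\le x$.
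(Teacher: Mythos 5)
Your proposal is correct and follows essentially the same route as the paper's proof: establish $\mathcal{L}_1=0$ via the Lipschitz-plus-margin argument showing that any state with $\mathcal{M}(s,t)=1$ is classified identically by $\thetaBC_t$ and $\theta^E_t$, then bound $\mathbb{E}_{\dem}[1-\mathcal{M}(s_t,t)]\lesssim \mu L\,\eNtH$ via the weak margin condition at $\eta = 2L\eNtH$ and substitute into \Cref{thm:main}, with the same $3\delta+\delta$ failure-probability accounting. The two points you flag as potential obstacles (the factor of $2$ in the margin transfer, and invoking the uniform-over-$\Theta_t$ margin condition at the data-dependent $\thetaBC_t$) are handled exactly as you describe in the paper's argument.
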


\begin{remark}
The classification oracle in \Cref{A:class} asks for a stronger condition than just finding a classifier with small generalization error, which need not be close to the ground truth $\theta^*$ in the parameter space. Learning classifiers with small generalization error is studied in \cite{DBLP:journals/corr/DanielySBS13} who show that the \textit{Natarajan dimension}, up to log-factors in the number of classes (i.e. number of actions) is the right statistical complexity measure which characterizes the generalization error of the best learner. In the realizable case, the optimal generalization error guarantee scales as $\widetilde{O} (1/n)$ where $n$ is the number of classification examples. Under certain assumptions on the input distribution $\mathcal{D}$ and the function family (e.g. for linear families), we later show that the generalization error guarantee can be extended to approximately learning the parameter as well (up to problem dependent constants). Generally, under two conditions,
\begin{enumerate}
    \item Generalization error guarantees which scale as $\widetilde{O} (1/\Nexp)$ can be extended to parameter learning,
    \item The dependence of the generalization error on the failure probability $\delta$ scales as $\text{polylog} (1/\delta)$,
\end{enumerate}
the optimality gap for \RE which we prove in \Cref{thm:gfa:rep} scales as $\widetilde{O} \left( H^{3/2} / \Nexp \right)$. The constants in $\widetilde{O} (\cdot)$ here depend on the Natarajan dimension and the covering number of the function classes $\mathcal{F}_t$ among problem dependent constants.
\end{remark}

In proving this result, we first discuss the implementation of \RE.

\paragraph{Implementation of \RE (\Cref{alg:re})} We discuss the instantiation of \RE in the Lipschitz setting below. The underlying function class $\mathcal{F}$ is chosen arbitrarily (note that the guarantee we prove depends on this function class, and the only constraints on $\mathcal{F}$ are those in \Cref{thm:main} - the ground truth reward must belong in $\mathcal{F} = \otimes_{t=1}^H \mathcal{F}_t$, the function class is symmetric, i.e., $f_t \in \mathcal{F}_t \iff -f_t \in \mathcal{F}_t$ for each $t$ and for all $f_t \in \mathcal{F}_t$, $\| f_t \|_\infty \le 1$) This requires specifying the choice of the membership oracle $\mathcal{M}$ and describing the instantiation of \BC.

\paragraph{Implementation of \BC:} Recall that in \Cref{alg:re}, the learner trains \BC on the dataset $D_1$. In particular, under the offline classification oracle condition, \Cref{A:class}, the learner trains $H$ classifiers, one for each $t$, trained on the state-action pairs (i.e. state is the input, and the action at this state is the corresponding class) observed in the expert dataset at time $t$ using the offline classifier in \Cref{A:class}. We assume that each classifier is trained with the failure probability chosen as $\delta/H$. Denoting this set of $H$ classifiers as $\thetaBC = \left\{ \thetaBC_1, \cdots, \thetaBC_H \right\}$, this corresponds to the to the policy $\piBC = \pi^{\thetaBC}$ induced by the classifier $\thetaBC$ (\Cref{def:pilc-f}).

In particular, by union bounding, the classifiers $\thetaBC$ satisfy with probability $\ge 1-\delta$ simultaneously for each time $t \in [H]$,
\begin{equation} \label{eq:class-f}
    \| \theta^E_t - \hat{\theta}_t \|_2 \le \eNtH.
\end{equation}

\paragraph{Membership oracle:}
Fix a time-step $t \in [H]$. The membership oracle $\mathcal{M}$ is defined in \cref{eq:memdef} as,
\begin{align}
    \mathcal{M} (s,t) = \begin{cases}
    +1 \qquad & \text{if } \exists a \in \mathcal{A} \text{ such that, } \forall a' \in \mathcal{A},\ f_{\thetaBC_t} (s,a) - f_{\thetaBC_t} (s,a') \ge 2 L \eNtH \\
    0 &\text{otherwise.}
    \end{cases}
\end{align}

We first show that on the states such that the membership oracle is $1$, the expert policy perfectly matches the learner's policy.

\begin{lemma}
At every state $s$ such that $\mathcal{M} (s,t) = +1$, $\dem_t (s) = \piBC_t (s)$.
\end{lemma}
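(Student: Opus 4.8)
The plan is to unwind the definition of the membership oracle and combine it with the parameter-estimation guarantee for \BC and the Lipschitz parameterization assumption. Fix a state $s$ and time $t$ with $\mathcal{M}(s,t) = +1$. By \cref{eq:memdef}, there exists an action $a \in \mathcal{A}$ such that $f_{\thetaBC_t}(s,a) - f_{\thetaBC_t}(s,a') \ge 2L\,\eNtH$ for all $a' \in \mathcal{A}$. In particular this action $a$ is the unique argmax of $f_{\thetaBC_t}(s,\cdot)$, so $\piBC_t(s) = a$. It remains to show $\dem_t(s) = a$ as well, i.e. that $a$ is also the argmax of $f_{\theta^E_t}(s,\cdot)$.

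First I would invoke the high-probability event \cref{eq:class-f}, namely $\|\theta^E_t - \thetaBC_t\|_2 \le \eNtH$, which holds simultaneously for all $t$ with probability $\ge 1-\delta$ by the union bound over the $H$ calls to the oracle in \Cref{A:class}. Then, by the $L$-Lipschitz parameterization assumption (\Cref{A:lip}), for every action $a'' \in \mathcal{A}$ we have $|f_{\theta^E_t}(s,a'') - f_{\thetaBC_t}(s,a'')| \le L\|\theta^E_t - \thetaBC_t\|_2 \le L\,\eNtH$. Applying this bound to both $a$ and an arbitrary $a' \ne a$ and using the triangle inequality together with the margin guarantee on $\thetaBC_t$:
\begin{align}
f_{\theta^E_t}(s,a) - f_{\theta^E_t}(s,a') &\ge \big(f_{\thetaBC_t}(s,a) - L\,\eNtH\big) - \big(f_{\thetaBC_t}(s,a') + L\,\eNtH\big) \nonumber \\
&= f_{\thetaBC_t}(s,a) - f_{\thetaBC_t}(s,a') - 2L\,\eNtH \ge 0. \nonumber
\end{align}
Hence $a$ is a maximizer of $f_{\theta^E_t}(s,\cdot)$; combined with $\dem_t(s) = \argmax_{a'' \in \mathcal{A}} f_{\theta^E_t}(s,a'')$ and the fact that $a$ is the strict argmax for $\thetaBC_t$ (so under a consistent tie-breaking rule it is the chosen label), we conclude $\dem_t(s) = a = \piBC_t(s)$.

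I do not expect any serious obstacle here — this is essentially a one-line margin/perturbation argument. The only mild subtlety is bookkeeping about strict versus non-strict inequalities and tie-breaking: the oracle's margin condition gives a strict-enough gap ($2L\eNtH$) precisely so that after the two-sided $L\eNtH$ perturbation the expert still (weakly) prefers $a$, and one should note the argmax is then well-defined and equals $a$. The union bound to obtain \cref{eq:class-f} across all $H$ time steps (each with failure probability $\delta/H$) should be stated so that the lemma holds on the same event used throughout the proof of \Cref{thm:gfa:rep}.
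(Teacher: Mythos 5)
Your proof is correct and follows essentially the same route as the paper's: invoke the high-probability parameter-estimation event $\|\theta^E_t - \thetaBC_t\|_2 \le \eNtH$, apply the $L$-Lipschitz parameterization to perturb both $f_{\thetaBC_t}(s,a)$ and $f_{\thetaBC_t}(s,a')$ by at most $L\,\eNtH$, and conclude from the $2L\,\eNtH$ margin in the definition of $\mathcal{M}$ that the \BC action remains a maximizer for $\theta^E_t$. Your added remarks on tie-breaking and on carrying the union-bounded event through to \Cref{thm:gfa:rep} are points the paper leaves implicit, but they do not change the argument.
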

\begin{proof}
Note that $\theta^E_t$ satisfies $\| \theta_t^E - \thetaBC_t \|_2 \le \eNtH$ with probability $1-\delta$. Consider the action $a$ played by the learner, for any $a' \in \mathcal{A}$,
\begin{align}
    f_{\theta^E_t} (s,a) - f_{\theta^E_t} (s,a') &\ge f_{\thetaBC_t} (s,a) - \eNtH L - f_{\thetaBC_t} (s,a') - \eNtH L \\
    &\ge 0
\end{align}
where the first inequality follows by Lipschitzness of $f_\cdot (s,a)$ and the last inequality follows by definition of the set of states where $\mathcal{M} (s,t) = +1$: $\forall a' \in \mathcal{A},\ f_{\thetaBC_t} (s,a) - f_{\thetaBC_t} (s,a') \ge 2 \eNtH L$.

Since for this action $a$, $f_{\theta_t^E} (s,a) - f_{\theta_t^E} (s,a') \ge 0$ for all other actions $a' \in \mathcal{A}$, $a$ must be the action played by the expert policy. This completes the proof.
\end{proof}

Note that $\piBC$ always matches $\pi^E$ wherever the membership oracle $\mathcal{M}$ is non-zero. We run \Cref{alg:re}. Therefore, from \Cref{thm:main}, with probability $\ge 1 - 4\delta$, the imitation gap of the learner is bounded by,
\begin{align} \label{eq:discbound}
    J_r (\dem) - J_r (\piRE) \lesssim H^{3/2} \sqrt{ \frac{\log \left( F_{\max} H /\delta \right)}{\Nexp} \frac{\sum_{t=1}^H \mathbb{E}_{\dem} \left[ 1 - \mathcal{M} (s_t, t) \right]}{H}} + \frac{\log \left(F_{\max} H/\delta \right)}{\Nexp}.
\end{align}

To complete the proof, we must bound $\mathbb{E}_{\dem} \left[ 1 - \mathcal{M} (s_t, t) \right]$, which is the measure of states $s$ such that $\forall a \in \mathcal{A}, \exists a' \in \mathcal{A} : f_{\hat{\theta}_t} (s,a) - f_{\hat{\theta}_t} (s,a') \le 2 L \eNtH$, i.e. the mass of states which are very close to a decision boundary. The probability of this set of states is upper bounded by the weak margin condition. Indeed, for each $t \in [H]$, defining $a^*_s = \argmax_{a \in \mathcal{A}} f_{\thetaBC_t} (s,a)$,
\begin{align}
    \mathrm{Pr}_{\dem} \left( f_{\thetaBC_t} (s_t,a^*_{s_t}) - \max_{a \ne a^*_{s_t} } f_{\thetaBC_t} (s_t,a) \ge 2 L \eNtH \right) &\ge e^{- \mu L \eNtH} \\
    &\ge 1 - \mu L \eNtH. \label{eq:margin-instance}
\end{align}
Therefore,
\begin{align}
    \mathbb{E}_{\dem} \left[ 1 - \mathcal{M} (s_t, t) \right] \lesssim \mu L \eNtH.
\end{align}
Putting it together with \cref{eq:discbound}, and simplifying, with probability $\ge 1 - 4\delta$,
\begin{align}
    &J (\dem) - J (\piRE) \lesssim H^{3/2} \sqrt{ \frac{\mu  L \log \left( F_{\max} H/\delta \right)}{\Nexp} \frac{\sum_{t=1}^H \eNtH}{H}} + \frac{\log \left( F_{\max} H /\delta \right)}{\Nexp}.
\end{align}

Note that in \Cref{eq:margin-instance}, we only use the fact that the probability mass of states which are $\eta$-close to any decision boundary is not too high. Similar to \cite{audibert-tsybakov}, we may consider relaxations of the weak margin condition, as below.

\begin{assumption}[$\alpha$-weak margin condition] \label{A:alphawm}
Consider any $t \in [H]$ and $\theta \in \Theta_t$. For each $s \in \mathcal{S}$, define $a^*_s = \argmax_{a \in \mathcal{A}} f_{\theta} (s,a)$ as the classifier output under $f_\theta$. The $\alpha$ weak margin condition with parameter $\mu$ assumes that, for any $\eta \le 1/\mu$,
\begin{align} \label{eq:alphawmc}
    \forall \theta \in \Theta_t,\ \ \mathrm{Pr}_{\dem} \left( f_{\theta} (s_t,a^*_{s_t}) - \max_{a \ne a^*_{s_t} } f_{\theta} (s_t,a) \ge \eta \right) \ge 1-(\mu \eta)^\alpha.
\end{align}
When $\alpha = 1$, this condition is effectively equivalent to the weak margin condition in \Cref{A:wm:repeat}.
\end{assumption}

Following the proof of \Cref{thm:gfa:rep}, we may obtain the following result under the $\alpha$ weak margin condition for $\alpha \ne 1$.

\begin{theorem} \label{thm:alpha-gfa:rep}
For IL with parametric function approximation, under Assumptions~\ref{A:lip}, \ref{A:class} and \ref{A:alphawm}, appropriately instatiating \RE ensures that with probability $\ge 1 - 4\delta$,
\begin{align} \label{eq:alpha-gfa:rep}
    J(\dem) - J(\piRE) \lesssim H^{3/2} \sqrt{ \frac{(\mu  L)^\alpha \log \left( F_{\max} H/\delta \right)}{\Nexp} \frac{\sum_{t=1}^H (\eNtH)^\alpha}{H}} + \frac{\log \left( F_{\max} H /\delta \right)}{\Nexp}.\noindent
\end{align}
Once again, we assume the same conditions on $\mathcal{F}$ as required in \Cref{thm:main}.
\end{theorem}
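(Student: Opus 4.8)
The plan is to follow the proof of \Cref{thm:gfa:rep} almost verbatim, replacing a single inequality. First I would instantiate \RE exactly as in that proof: train \BC on $D_1$ by invoking the offline classification oracle of \Cref{A:class} at each $t \in [H]$ with failure probability $\delta/H$, so that a union bound gives $\|\theta^E_t - \thetaBC_t\|_2 \le \eNtH$ simultaneously for all $t$ with probability $\ge 1 - \delta$, and use the membership oracle $\mathcal{M}$ of \cref{eq:memdef}, which sets $\mathcal{M}(s,t) = +1$ exactly when $\thetaBC_t$ classifies $s$ with margin at least $2L\eNtH$. The same Lipschitz argument (the lemma preceding \cref{eq:discbound}) shows that on every such state $\dem_t(s) = \piBC_t(s)$, so the term $\mathcal{L}_1$ in \Cref{thm:main} vanishes; conditioning on the oracle-success event and applying \Cref{thm:main},
\begin{equation}
    J(\dem) - J(\piRE) \lesssim H^{3/2}\sqrt{\frac{\log(\Fmax H/\delta)}{\Nexp}\cdot\frac{\sum_{t=1}^H \mathbb{E}_{\dem}[1 - \mathcal{M}(s_t,t)]}{H}} + \frac{\log(\Fmax H/\delta)}{\Nexp}
\end{equation}
with probability at least $1 - 4\delta$ (the $3\delta$ from \Cref{thm:main} plus the $\delta$ from the oracle union bound).

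The only new step is to re-estimate $\mathbb{E}_{\dem}[1 - \mathcal{M}(s_t,t)]$ under \Cref{A:alphawm} rather than \Cref{A:wm:repeat}. Writing $a^*_{s_t} = \argmax_{a \in \mathcal{A}} f_{\thetaBC_t}(s_t,a)$, the event $\{\mathcal{M}(s_t,t) = 0\}$ is precisely the event that the margin $f_{\thetaBC_t}(s_t,a^*_{s_t}) - \max_{a \ne a^*_{s_t}} f_{\thetaBC_t}(s_t,a)$ lies below $2L\eNtH$. Applying \cref{eq:alphawmc} with $\theta = \thetaBC_t$ and $\eta = 2L\eNtH$ — which is permitted once $\Nexp$ is large enough that $2L\eNtH \le 1/\mu$ — gives $\mathbb{E}_{\dem}[1 - \mathcal{M}(s_t,t)] \le (2\mu L\eNtH)^\alpha \lesssim (\mu L\eNtH)^\alpha$. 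Substituting into the displayed bound and absorbing constants into the square root yields \cref{eq:alpha-gfa:rep}; taking $\alpha = 1$ recovers \cref{eq:margin-instance} and hence \Cref{thm:gfa:rep}.

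I do not expect a genuine obstacle: the heavy lifting lives in \Cref{thm:main} and in the construction of $\mathcal{M}$, both reused unchanged. The one point requiring care is the domain restriction $\eta \le 1/\mu$ in the $\alpha$-weak margin condition: one must check that the nontrivial regime of the theorem (where the right-hand side of \cref{eq:alpha-gfa:rep} is below $H$) is consistent with $2L\eNtH \le 1/\mu$, and otherwise fall back on the trivial estimate $J(\dem) - J(\piRE) \le H$. A minor bookkeeping point is that \cref{eq:alphawmc} is invoked only for the single classifier $\theta = \thetaBC_t$ at each $t$, which is exactly what the high-probability success of the classification oracle provides, as the remark following \Cref{A:wm:repeat} records.
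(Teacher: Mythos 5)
Your proposal matches the paper's proof exactly: the paper likewise obtains Theorem~\ref{thm:alpha-gfa:rep} by rerunning the proof of Theorem~\ref{thm:gfa:rep} verbatim and replacing the single step bounding $\mathbb{E}_{\dem}[1-\mathcal{M}(s_t,t)]$, using the $\alpha$-weak margin condition at $\eta = 2L\eNtH$ to get $(\mu L \eNtH)^\alpha$ in place of $\mu L \eNtH$. Your attention to the domain restriction $\eta \le 1/\mu$ and to invoking the margin condition only at $\theta = \thetaBC_t$ is slightly more careful than the paper's one-line justification, but the argument is the same.
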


\subsection{Extension to unbounded discriminator families}

Note that when the family of discriminators $\mathcal{F}$ does not have finite cardinality, it in fact suffices to just bound the imitation gap against a finite covering of $\mathcal{F}$. We spell out the details explicitly below.

In particular, we can replace $\Fmax$ by $\max_{t \in [H]} \mathcal{N} (\mathcal{F}_t, 1/\Nexp, \| \cdot \|_\infty)$, where $\mathcal{N} (\mathcal{G}, \xi, \| \cdot \|)$ denotes the covering number of $\mathcal{G}$ in the norm $\| \cdot \|$ as defined below.

\begin{definition}[Covering number] \label{def:covnum} For a function class $\mathcal{G}$, tolerance $\xi$ and norm $\| \cdot \|$, the covering number $\mathcal{N} (\mathcal{G}, \xi, \| \cdot \|)$ is defined as the cardinality of the smallest set of functions $\mathcal{G}^\xi$ such that for each $g \in \mathcal{G}$, there exists a $g' \in \mathcal{G}^\xi$,
\begin{align}
    \| g - g' \| \le \xi.
\end{align}
\end{definition}

\begin{corollary} \label{corr:linearcov}
When $\mathcal{G}$ is chosen as the set of $1$-bounded linear functions, $\mathcal{G} = \{ \{ \langle x, \theta \rangle : x \in \mathbb{B}_2^d \} : \theta \in \mathbb{B}_2^{d} \}$, where $\mathbb{B}_2^d$ denotes the $L_2$ unit ball in $\mathbb{R}^d$, $\mathcal{N} (\mathcal{G}, \xi, \| \cdot \|_\infty) \le \left( \frac{2\sqrt{d}}{\xi} + 1\right)^d$.
\end{corollary}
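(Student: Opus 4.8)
The plan is to recognize that the parameterization $\theta \mapsto \langle\cdot,\theta\rangle$ is an \emph{isometry} from $(\mathbb{B}_2^d,\|\cdot\|_2)$ onto $(\mathcal{G},\|\cdot\|_\infty)$, which reduces the claim to a textbook $\varepsilon$-net bound for the Euclidean ball.

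\textbf{Step 1 (reduction to the ball).} For $\theta,\theta'\in\mathbb{B}_2^d$ write $g_\theta(x)=\langle x,\theta\rangle$. Then
\[
\|g_\theta-g_{\theta'}\|_\infty \;=\; \sup_{x\in\mathbb{B}_2^d}\bigl|\langle x,\theta-\theta'\rangle\bigr| \;=\; \|\theta-\theta'\|_2,
\]
using Cauchy--Schwarz for ``$\le$'' and the choice $x=(\theta-\theta')/\|\theta-\theta'\|_2$ for ``$\ge$'' (both sides vanish when $\theta=\theta'$). Consequently, if $T\subseteq\mathbb{B}_2^d$ is any finite set such that every $\theta\in\mathbb{B}_2^d$ lies within $\|\cdot\|_2$-distance $\xi$ of some element of $T$, then $\{g_{\theta'}:\theta'\in T\}$ is a valid $\xi$-cover of $\mathcal{G}$ in $\|\cdot\|_\infty$ (per \Cref{def:covnum} the covering functions need not even lie in $\mathcal{G}$, so this is more than enough). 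Hence $\mathcal{N}(\mathcal{G},\xi,\|\cdot\|_\infty)\le \mathcal{N}(\mathbb{B}_2^d,\xi,\|\cdot\|_2)$.

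\textbf{Step 2 (net of the Euclidean ball).} I would invoke the standard volumetric bound: let $T$ be a maximal $\xi$-separated subset of $\mathbb{B}_2^d$; by maximality $T$ is a $\xi$-cover, and the open balls of radius $\xi/2$ centered at the points of $T$ are pairwise disjoint and contained in $(1+\xi/2)\mathbb{B}_2^d$, so a volume comparison gives $|T|\,(\xi/2)^d\le (1+\xi/2)^d$, i.e. $\mathcal{N}(\mathbb{B}_2^d,\xi,\|\cdot\|_2)\le (1+2/\xi)^d$. Since $d\ge 1$ we have $1+2/\xi\le 1+2\sqrt{d}/\xi$, so combining with Step~1,
\[
\mathcal{N}(\mathcal{G},\xi,\|\cdot\|_\infty)\;\le\;\Bigl(1+\tfrac{2\sqrt d}{\xi}\Bigr)^{d}\;=\;\Bigl(\tfrac{2\sqrt d}{\xi}+1\Bigr)^{d}.
\]
(Alternatively, one could mimic what is presumably the paper's argument and build an explicit axis-aligned grid of spacing $\xi/\sqrt d$ on $[-1,1]^d\supseteq\mathbb{B}_2^d$, counting at most $2\sqrt d/\xi+1$ grid values per coordinate and bounding the coordinatewise rounding error by $\xi/(2\sqrt d)$; this is slightly fussier about boundary grid points but gives the same conclusion.)

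The argument is routine; the only places needing care are obtaining the \emph{exact} identity $\|g_\theta-g_{\theta'}\|_\infty=\|\theta-\theta'\|_2$ in Step~1 (both directions, including tightness of Cauchy--Schwarz), and, if one uses the explicit grid instead of the volumetric bound, making sure the grid cardinality per axis is kept at $2\sqrt d/\xi+1$ rather than something larger near the boundary of the cube.
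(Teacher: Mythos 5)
Your proof is correct and follows essentially the same route as the paper's: reduce to covering $\mathbb{B}_2^d$ in $\|\cdot\|_2$ via the (at most $1$-Lipschitz) parameterization $\theta\mapsto\langle\cdot,\theta\rangle$, then invoke a covering bound for the Euclidean ball. The only difference is cosmetic: you actually derive the ball bound volumetrically (obtaining the sharper $(1+2/\xi)^d$ and relaxing to $(2\sqrt d/\xi+1)^d$), whereas the paper simply cites the latter, and your verification of the reverse Cauchy--Schwarz direction in Step~1 is unnecessary since only the ``$\le$'' direction is needed for the cover to transfer.
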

\begin{proof}
For any $g,g' \in \mathcal{G}$, where $g$ and $g'$ correspond to parameters $\theta,\theta' \in \mathbb{B}_2^d$,
\begin{align}
    \| g - g \|_\infty &\le \max_{x \in \mathcal{X}} \langle x , \theta - \theta' \} \\
    &\le \| x \|_2 \| \theta - \theta'\|_2 \\
    &\le \| \theta - \theta'\|_2.
\end{align}
Since the $L_2$ covering number of $\mathbb{B}_2^d$ is bounded by $\left( \frac{2\sqrt{d}}{\xi} + 1\right)^d$, the result immediately follows by defining the covering of $\mathcal{G}$ as $\{ \langle \theta, \cdot \rangle : \theta \in \mathcal{K} \}$ where $\mathcal{K}$ is the optimal covering of $\mathbb{B}_2^d$ in $L_2$ norm.
\end{proof}

\begin{definition}[Discretization of discriminator space] \label{def:discretization}
Define $\mathcal{F}_t^\xi$ as the optimal covering of $\mathcal{F}_t$ under the $L_\infty$ norm in the sense of \Cref{def:covnum}. The discretized family of discriminators we consider is, $\mathcal{F}^\xi = \otimes_{t=1}^H \mathcal{F}^\xi_t$.
\end{definition}

\begin{lemma} \label{lemma:pert}
Suppose for all functions $f' \in \mathcal{F}_t^{\xi_1/H}$, simultaneously $J_{f'} (\pi^E) - J_{f'} (\piRE) \le \xi_2$. Then, for all discriminators $f \in \mathcal{F}$, $J_f (\pi^E) - J_f (\piRE) \le 2\xi_1 + \xi_2$.
\end{lemma}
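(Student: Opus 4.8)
The plan is to run the standard covering (net) argument: control the imitation gap under an arbitrary discriminator $f\in\mathcal{F}$ by comparing it to the gap under the nearest element $f'$ of the discretized family, for which the gap is controlled by hypothesis. First I would fix an arbitrary $f=(f_1,\dots,f_H)\in\mathcal{F}=\otimes_{t=1}^H\mathcal{F}_t$. By \Cref{def:discretization} (together with \Cref{def:covnum}), for each $t\in[H]$ the set $\mathcal{F}_t^{\xi_1/H}$ is an $(\xi_1/H)$-cover of $\mathcal{F}_t$ in $\|\cdot\|_\infty$, so there exists $f'_t\in\mathcal{F}_t^{\xi_1/H}$ with $\|f_t-f'_t\|_\infty\le\xi_1/H$; collecting these gives $f'=(f'_1,\dots,f'_H)\in\mathcal{F}^{\xi_1/H}$, to which the hypothesis applies.

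Next I would use that $J_g(\pi)=\mathbb{E}_\pi[\tfrac1H\sum_{t=1}^H g_t(s_t,a_t)]$ is linear in $g$ and that each $(s_t,a_t)$ lies in the domain of $f_t$ and $f'_t$, so that for \emph{any} policy $\pi$,
\begin{align}
    \big|J_f(\pi)-J_{f'}(\pi)\big|
    &\le \mathbb{E}_\pi\!\left[\frac{1}{H}\sum_{t=1}^H\big|f_t(s_t,a_t)-f'_t(s_t,a_t)\big|\right]\\
    &\le \frac{1}{H}\sum_{t=1}^H\|f_t-f'_t\|_\infty \le \frac{\xi_1}{H}\le\xi_1.
\end{align}
Applying this with $\pi=\pi^E$ and with $\pi=\piRE$ yields $|J_f(\pi^E)-J_{f'}(\pi^E)|\le\xi_1$ and $|J_f(\piRE)-J_{f'}(\piRE)|\le\xi_1$. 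Combining these with the assumed bound $J_{f'}(\pi^E)-J_{f'}(\piRE)\le\xi_2$ via a three-term decomposition,
\begin{align}
    J_f(\pi^E)-J_f(\piRE)
    &= \big(J_f(\pi^E)-J_{f'}(\pi^E)\big)+\big(J_{f'}(\pi^E)-J_{f'}(\piRE)\big)+\big(J_{f'}(\piRE)-J_f(\piRE)\big)\\
    &\le \xi_1+\xi_2+\xi_1 = 2\xi_1+\xi_2,
\end{align}
and since $f\in\mathcal{F}$ was arbitrary this is exactly the claim.

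There is essentially no real obstacle here; the argument is routine. The only points that need (minor) care are that the cover must be taken coordinatewise at the level of the product class $\mathcal{F}=\otimes_{t=1}^H\mathcal{F}_t$, and that because $J_\cdot$ averages the per-step discriminators with weight $1/H$, an $(\xi_1/H)$-cover per coordinate already suffices — in fact it produces the slightly stronger constant $\xi_1/H$ in place of $\xi_1$, so the stated bound $2\xi_1+\xi_2$ holds with room to spare. (One should also note that ``$f'\in\mathcal{F}_t^{\xi_1/H}$'' in the hypothesis is to be read as $f'$ ranging over the product cover $\mathcal{F}^{\xi_1/H}=\otimes_{t=1}^H\mathcal{F}_t^{\xi_1/H}$.)
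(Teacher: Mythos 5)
Your proof is correct and follows essentially the same route as the paper's: pick the nearest element $f'$ of the $(\xi_1/H)$-cover, bound $|J_f(\pi)-J_{f'}(\pi)|$ for any policy $\pi$ via Lipschitzness of $J_{\cdot}(\pi)$ in the discriminator, and conclude with the three-term triangle inequality around $J_{f'}(\pi^E)-J_{f'}(\piRE)\le\xi_2$. Your observation that the $1/H$-normalization in $J_f$ actually yields the sharper perturbation bound $\xi_1/H$ (versus the paper's $H\times\xi_1/H=\xi_1$) is a fair minor refinement but does not change the argument or the stated conclusion.
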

\begin{proof}
Consider any discriminator $f \in \mathcal{F}$. By construction, there exists an $f' \in \mathcal{F}^{\xi_1/H}$ such that,
\begin{align}
    \| f - f' \|_\infty \le \xi_1/H.
\end{align}
Since for any policy $\pi$, the value $J_f (\pi)$ under a discriminator $f \in \mathcal{F}$ is an $H$-Lipschitz function of $f$, we can make a statement about how well $J_{f'} (\pi)$ approximates $J_f (\pi)$ for an appropriately chosen $f' \in \mathcal{F}^{\xi_1/H}$. In particular, the nearest (in $L_\infty$ norm) $f' \in \mathcal{F}^{\xi_1 /H}$ to $f \in \mathcal{F}$ satisfies that for any policy $\pi$,
\begin{align}
    | J_f (\pi) - J_{f'} (\pi)| \le H \times \frac{\xi_1}{H}.
\end{align}
As a consequence, for any discriminator $f \in \mathcal{F}$,
\begin{align}
    J_f (\pi^E) - J_f (\piRE) &\le |J_f (\pi^E) - J_{f'} (\pi^E)| + J_{f'} (\pi^E) - J_{f'} (\piRE) + |J_{f'} (\piRE) - J_f (\piRE)| \\
    &\le \xi_1 + \xi_2 + \xi_1 = 2\xi_1 + \xi_2.
\end{align}
\end{proof}

In particular, this means that if we minimize $J_{f'} (\dem) - J_{f'} (\piRE) \le \xi_2$ for all $f' \in \mathcal{F}^{1/\Nexp H}$, then we can ensure that for all $f \in \mathcal{F}$,
\begin{align}
    J_f (\dem) - J_f (\piRE) \le \frac{2}{\Nexp} + \xi_2.
\end{align}

This implies the following theorem,
\begin{theorem} \label{thm:main:covering}
Consider the policy $\piRE$ returned by \Cref{alg:re} where $\mathcal{F}$ is instead chosen as $\mathcal{F}^{\frac{1}{H \Nexp}}$ (as defined in \Cref{def:discretization}). Assume that $\pi^E \in \Pi$, the ground truth reward function $r_t \in \mathcal{F}_t$ which is assumed to be bounded (For all $f_t \in \mathcal{F}_t$, $\| f_t \|_\infty \le 1$). Choose $|D_1|, |D_2| = \Theta (\Nexp)$ and suppose $\Nrep \to \infty$. With probability $\ge 1 - 3 \delta$,
\begin{align}
    J (\dem) - J (\piRE) \lesssim & \ \mathcal{L}_1 + \mathcal{L}_2 
    + \frac{\log \left( \Nmax H/\delta \right) + 1}{\Nexp}
\end{align}
where $\Nmax \triangleq \max_{t \in [H]} \mathcal{N} (\mathcal{F}_t, 1/H\Nexp, \| \cdot \|_\infty)$ corresponds to the maximal covering number of the function classes $\mathcal{F}_t$, and,
\begin{align}
    \mathcal{L}_1 \triangleq H^2 \ \mathbb{E}_{\dem} \! \left[ \frac{\sum_{t=1}^H \mathcal{M} (s_t,t) \tv \left( \dem_t (\cdot|s_t) , \piBC_t (\cdot|s_t) \right)}{H} \right],
\end{align}
\begin{align}
    \mathcal{L}_2 \triangleq H^{3/2} \sqrt{ \frac{\log \left( \Nmax H/\delta \right)}{\Nexp} \frac{\sum_{t=1}^H \mathbb{E}_{\dem} \left[ 1 - \mathcal{M} (s_t, t) \right]}{H}}. \nonumber
\end{align}
\end{theorem}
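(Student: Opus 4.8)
The plan is to reduce \Cref{thm:main:covering} to the finite‑discriminator guarantee \Cref{thm:main} by running \RE against a minimal $L_\infty$ cover of $\mathcal{F}$ and then transferring the bound back to all of $\mathcal{F}$ at the cost of a lower‑order term. Concretely, one instantiates \Cref{alg:re} with the discretized class $\mathcal{F}^{1/(H\Nexp)} = \otimes_{t=1}^H \mathcal{F}_t^{1/(H\Nexp)}$ of \Cref{def:discretization}, noting that by definition of the covering number $|\mathcal{F}_t^{1/(H\Nexp)}| \le \mathcal{N}(\mathcal{F}_t, 1/(H\Nexp), \|\cdot\|_\infty) \le \Nmax$ for every $t$.

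The first step is to observe that the proof of \Cref{thm:main} (equivalently \Cref{thm:main-general}), as carried out through \cref{eq:MD}--\cref{eq:final} together with \Cref{lemma:B1,lemma:B2,lemma:B3}, in fact controls the \emph{uniform} quantity $\sup_{f} |J_f(\dem) - J_f(\piRE)|$ over whichever finite discriminator class \RE is run on. Indeed, realizability $r \in \mathcal{F}$ is invoked only in the opening inequality of \cref{eq:MD} to pass from the imitation gap to the discriminator supremum; the remaining steps -- the minimizer/symmetry argument bounding $\sup_f |J_f(\dem) - J_f(\piRE)|$ by $2\sup_f |J_f(\dem) - \widehat{E}(f)|$, the decomposition \cref{eq:final} into terms $\text{(I)}$, $\text{(II)}$, $\text{(III)}$, and \Cref{lemma:B1,lemma:B2,lemma:B3} -- use only $\dem \in \Pi$, the minimizing property of $\piRE$ (\cref{eq:pire}), symmetry, and $1$‑boundedness of the discriminator class. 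Applying this reasoning with $\mathcal{F}^{1/(H\Nexp)}$ and letting $\Nrep \to \infty$ (so the $\mathcal{L}_3$ term of \Cref{thm:main-general} vanishes) yields, with probability $\ge 1 - 3\delta$, that $\sup_{f' \in \mathcal{F}^{1/(H\Nexp)}} |J_{f'}(\dem) - J_{f'}(\piRE)| \le \xi_2$ with $\xi_2 \lesssim \mathcal{L}_1 + \mathcal{L}_2 + \log(\Nmax H/\delta)/\Nexp$, where $\mathcal{L}_1$ is unchanged (it does not involve the discriminator class) and $\mathcal{L}_2$ is exactly the quantity in the statement, with $\Nmax$ in place of $\Fmax$ because every union bound and log‑factor in \Cref{lemma:B2,lemma:B3} is over the discriminator class actually used.

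The second step is to invoke \Cref{lemma:pert} with $\xi_1 = 1/\Nexp$ (so that $\xi_1/H = 1/(H\Nexp)$ matches the cover radius): the hypothesis of that lemma is met by the display above, and its conclusion gives $J_f(\dem) - J_f(\piRE) \le 2/\Nexp + \xi_2$ for \emph{all} $f \in \mathcal{F}$ -- the routine covering‑number reduction, with the $H$‑Lipschitz dependence of $\pi \mapsto J_f(\pi)$ on $f$ dictating the choice of cover radius $1/(H\Nexp)$. Finally, the realizability assumption $r_t \in \mathcal{F}_t$ for each $t$ yields $r = (r_1,\dots,r_H) \in \mathcal{F}$, so $J(\dem) - J(\piRE) = J_r(\dem) - J_r(\piRE) \le 2/\Nexp + \xi_2 \lesssim \mathcal{L}_1 + \mathcal{L}_2 + (\log(\Nmax H/\delta) + 1)/\Nexp$, and the claimed failure probability $\ge 1 - 3\delta$ is inherited verbatim from \Cref{thm:main}.

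I expect the only genuinely delicate point to be the bookkeeping in the first step: one must verify that no step in \cref{eq:MD}--\cref{eq:final} or in \Cref{lemma:B1,lemma:B2,lemma:B3} silently relies on $r \in \mathcal{F}$ beyond the single use noted above, and that every appearance of $\Fmax = \max_t |\mathcal{F}_t|$ there originates from a union bound over the discriminator class in use, so that substituting the cover simply substitutes $\Nmax$ for $\Fmax$ in each logarithm. Everything downstream is the off‑the‑shelf argument already packaged in \Cref{lemma:pert}, and the $2/\Nexp$ it contributes is absorbed into the stated additive $(\log(\Nmax H/\delta)+1)/\Nexp$ term.
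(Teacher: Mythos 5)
Your proposal is correct and follows essentially the same route as the paper: instantiate \RE{} on the cover $\mathcal{F}^{1/(H\Nexp)}$, apply the finite-class argument of \Cref{thm:main} (with $\Fmax$ replaced by $\Nmax$ in every union bound) to get a uniform bound over the cover, and then transfer to all of $\mathcal{F}$ via \Cref{lemma:pert}, absorbing the extra $2/\Nexp$. Your explicit observation that the chain \cref{eq:MD}--\cref{eq:final} already controls $\sup_{f}|J_f(\dem)-J_f(\piRE)|$ over the class in use (realizability being needed only to extract $J(\dem)-J(\piRE)$ at the end) is exactly the bookkeeping the paper leaves implicit.
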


\begin{remark} \label{rem:Fmax-Nmax}
Note that this line of reasoning can be extended to \Cref{thm:gfa} and \Cref{thm:alpha-gfa:rep} to show that the same guarantees as \cref{eq:gfa} and \cref{eq:alpha-gfa:rep} respectively hold, but with $\Fmax$ replaced by $\Nmax$.
\end{remark}

\subsection{Bounds on \RE in the linear expert setting} \label{sec:RE-linear-ub}

In this section, we provide an upper bound on the imitation gap of \RE in the presence of linear function approximation, which is studied in \cite{rajaraman2021on}. This is a special case of the case of IL under parametric function approximation with Lipschitzness. To avoid any ambiguity, we formally define IL with linear function approximation, which is the case when $(i)$ the expert follows an unknown linear classifier in a known set of features, and $(ii)$ the reward function admits a linear parameterization.

The goal is to show that there exists a simple choice of the membership oracle such that the imitation gap of the resulting algorithm grows as $H^{3/2}$ and decay in the size of the expert dataset as $1/\Nexp$ up to logarithmic factors, breaking the error compounding barrier and achieving the optimal dependency on these parameters. We first introduce the linear setting below.

\begin{assumption}[Linear-expert setting] 
\label{A1}
For each $(s, a, t)$ tuple, the learner is assumed to have a feature representation $\phi_t (s, a) \in \mathbb{R}^d$. For each time $t$, there exists an unknown vector $\theta^E_t \in \mathbb{R}^d$ such that $\forall s \in \mathcal{S}$,
\begin{align}
    \pi^E_t (s) = \argmax_{a \in \mathcal{A}} \langle \theta^E_t, \phi_t (s,a) \rangle.
\end{align}
i.e., the expert policy is deterministic and realized by a linear classifier. We assume that $\theta_t^E \in \mathbb{S}^{d-1}$ without loss of generality.
\end{assumption}

\begin{definition}[Policy induced by a linear classifier] \label{def:pilc}
Consider a set of vectors $\theta = \{ \theta_1,\cdots,\theta_H\}$ where each $\theta_t \in \mathbb{R}^d$. A policy $\pi^\theta$ is said to be induced by the set of linear classifiers defined by $\theta$ if for all $s \in \mathcal{S}$ and $t \in [H]$,
\begin{equation}
    \pi^\theta_t (s) = \argmax_{a \in \mathcal{A}} \langle \theta_t, \phi_t (s,a) \rangle.
\end{equation}
By this definition, $\dem = \pi^{\theta^E}$.
\end{definition}

\begin{definition}[Linear reward setting] \label{A:linreward} Define $\Rlint$ as the family of linear reward functions (defined at the single time-step $t$) which takes the form of an unknown linear function of a set of the features,
\begin{align}
    \Rlint = \left\{ \{ r_t (s, a) = \langle \omega, \phi_t (s, a) \rangle : s \in \mathcal{S}, a \in \mathcal{A} \} : \omega \in \mathbb{R}^d, \| \omega \|_2 \le 1 \right\}.
\end{align}
For the rewards to be $1$-bounded, we assume the features satisfy $\| \phi_t (s, a) \|_2 \le 1$. Define $\Rlin = \otimes_{t=1}^H \Rlint$. The linear reward setting assumes the true reward function of the MDP, $r \in \Rlin$.
\end{definition}

\begin{remark}
Note that our guarantees in \Cref{thm:linear} hold even if the set of features in the definition of $\Rlint$ in \Cref{A:linreward} differ from those used to define the expert classifier \Cref{A1}. Regardless, we assume that both sets of features are known to the learner.
\end{remark}

In the case of parametric function approximation with Lipschitzness, note that we assume both the weak margin condition (\Cref{A:wm:repeat}), as well as the existence of a linear classification oracle (\Cref{A:class}). Below, in the linear expert case, we show a sufficient condition which implies both of these conditions. In particular, define the positive hemisphere with pole at $\theta$, i.e. $\{ x : \mathbb{B}_2^d : \langle \theta, x \rangle \ge 0 \}$ as $\mathbb{H}^d_{\theta}$. We abbreviate $\mathbb{H}^d_{\theta^E_t}$ as $\mathbb{H}^d_t$.

\begin{assumption}[Bounded density assumption] \label{A:bd}
For each time $t \in [H]$, state $s \in \mathcal{S}$, action $a \in \mathcal{A}$ and $\theta \in \Theta_t$, define $\overline{\phi}_t (s,a) = \phi_t (s,a_s^{\theta}) - \phi_t (s,a)$ where $a_s^{\theta} = \argmax_{a' \in \mathcal{A}} \langle \theta, \phi_t (s, a') \rangle$. Consider the measure $\mathrm{Pr}_{\dem} \left( \exists a \ne a_{s_t}^\theta : \overline{\phi}_t (s_t,a) \in \cdot \right)$. Let $\overline{d}^{E}_t$ represent the Radon-Nikodym derivative of this measure against the uniform measure on $\mathbb{H}^{d-1}_t$. The bounded density assumption states that for each $t \in [H]$ there are constants $c_{\min} > 0$ and $c_{\max} < \infty$ such that for all $x \in \mathbb{H}^d_t$,
\begin{align}
    c_{\min} \le \overline{d}^{E}_t (x) \le c_{\max}.
\end{align}
\end{assumption}

We now state the main result we prove for IL in the linear setting.

\begin{theorem} \label{thm:linear} Under Assumptions~\ref{A1} and \ref{A:bd}, appropriately instantiating \RE ensures that with probability $\ge 1 - \delta$,
\begin{equation} J (\dem) - J (\piRE) \lesssim
    \sqrt{\frac{c_{\max}}{c_{\min}}} \frac{H^{3/2} d^{5/4} \log^{\frac{3}{2}} (\Nexp d H/\delta)}{\Nexp}. \nonumber
\end{equation}
\end{theorem}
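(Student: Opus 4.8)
The plan is to derive \Cref{thm:linear} as a corollary of the parametric‑function‑approximation bound \Cref{thm:gfa} (whose conclusion is \eqref{eq:gfa}; see also the restated \Cref{thm:gfa:rep}), extended to the infinite discriminator class $\Rlin$ via the covering‑number version of the argument (\Cref{rem:Fmax-Nmax}). Concretely, in the linear‑expert setting of \Cref{A1} one must discharge the three hypotheses of \Cref{thm:gfa:rep}: the Lipschitz‑parameterization condition \Cref{A:lip}, the offline parameter‑estimation oracle \Cref{A:class}, and the weak margin condition \Cref{A:wm:repeat}. The first is immediate with $L=1$: since $f_{\theta_t}(s,a)=\langle\theta_t,\phi_t(s,a)\rangle$ and $\|\phi_t(s,a)\|_2\le1$, Cauchy--Schwarz gives $\|f_{\theta_t}(\cdot)-f_{\theta_t'}(\cdot)\|_\infty\le\|\theta_t-\theta_t'\|_2$. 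For the discriminators, \Cref{corr:linearcov} yields $\log\Nmax\lesssim d\log(\Nexp dH)$, so \Cref{rem:Fmax-Nmax} lets us invoke \Cref{thm:gfa:rep} with $\Fmax$ replaced by $\Nmax$. The remaining two hypotheses are \emph{not} assumed directly in the linear setting but are consequences of the bounded‑density assumption \Cref{A:bd}; establishing them with sharp rates and then substituting those rates into \eqref{eq:gfa} is the heart of the proof.

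\textbf{Step 1 (weak margin from bounded density).} Fix $t$ and $\theta\in\Theta_t$; write $a^\theta_s$ for the induced action and $\overline\phi_t(s,a)=\phi_t(s,a^\theta_s)-\phi_t(s,a)$, so the margin of $s$ is $\min_{a\ne a^\theta_s}\langle\theta,\overline\phi_t(s,a)\rangle\ge0$. The event $\{\text{margin}<\eta\}$ is exactly the event that some $\overline\phi_t(s,a)$ lands in the equatorial slab $S_\eta=\{x:0\le\langle\theta,x\rangle<\eta\}$. By \Cref{A:bd} the pushforward law of $s\mapsto\{\overline\phi_t(s,a):a\ne a^\theta_s\}$ has density at most $c_{\max}$ against the uniform measure on the hemisphere $\mathbb{H}^{d-1}_t$, and a direct computation shows that the uniform measure of $S_\eta$ — a band of width $\eta$ about the equator of a $(d-1)$‑dimensional sphere — is $\Theta(\eta\sqrt d)$ for $\eta\lesssim1/\sqrt d$. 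Hence $\mathrm{Pr}_{\dem}(\text{margin}<\eta)\lesssim c_{\max}\sqrt d\,\eta$, i.e. $\mathrm{Pr}_{\dem}(\text{margin}\ge\eta)\ge1-c_{\max}\sqrt d\,\eta\ge e^{-\mu\eta}$ for $\mu\asymp c_{\max}\sqrt d$ and all $\eta\le1/\mu$. This is \Cref{A:wm:repeat} for every $\theta\in\Theta_t$, in particular for $\thetaBC_t$.

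\textbf{Step 2 (parameter estimation from bounded density — the main obstacle).} Given the $n=\Theta(\Nexp)$ i.i.d.\ classification examples $(s^i,a^i)$ at time $t$ with $a^i=\argmax_a\langle\theta^E_t,\phi_t(s^i,a)\rangle$, take as the oracle's output any element of the version space $V=\{\theta\in\mathbb{S}^{d-1}:\langle\theta,\overline\phi_t(s^i,a)\rangle\ge0\ \text{for all }i\text{ and all }a\ne a^i\}$ (e.g.\ an ERM, which is consistent in the realizable case); $V$ is spherically convex and contains $\theta^E_t$. The claim is that $\mathrm{diam}(V)\lesssim\epsilon_n:=\widetilde O\!\big(d/(c_{\min}n)\big)$ with probability $\ge1-\delta$, where $\widetilde O$ hides polylogarithmic factors in $n$ and $1/\delta$. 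For a deviation direction $u\perp\theta^E_t$ and angle $\gamma$, the candidate $\hat\theta=\cos\gamma\,\theta^E_t+\sin\gamma\,u$ is eliminated by sample $i$ whenever some $\overline\phi_t(s^i,a)$ falls in the lune $\{x:\langle\theta^E_t,x\rangle\ge0,\ \langle\hat\theta,x\rangle<0\}$; projecting onto $\mathrm{span}(\theta^E_t,\hat\theta)$ shows this lune has uniform hemisphere‑measure $\Theta(\gamma)$, so by the \emph{lower} density bound in \Cref{A:bd} a single sample eliminates $\hat\theta$ with probability $\gtrsim c_{\min}\gamma$, whence $\mathrm{Pr}(\hat\theta\in V)\le e^{-c_{\min}\gamma n}$. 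A Chernoff‑plus‑union bound over an $\epsilon$‑net of deviation directions on $\mathbb{S}^{d-1}$ (of log‑cardinality $O(d\log(1/\epsilon))$) shows that with probability $\ge1-\delta$ no net direction with $\gamma\asymp\epsilon$ survives, provided $c_{\min}\epsilon n\gtrsim d\log(1/\epsilon)+\log(1/\delta)$, which holds for $\epsilon\asymp\epsilon_n$; spherical convexity of $V$ upgrades this to $V\subseteq\{\theta:\|\theta-\theta^E_t\|_2\lesssim\epsilon_n\}$. Thus \Cref{A:class} holds with $\mathcal E_{\Theta_t,n,\delta}\lesssim\epsilon_n$. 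I expect the delicate points to be the two‑sided spherical volume estimates — in particular the $\Omega(\gamma)$ \emph{lower} bound on the lune measure, which is precisely where $c_{\min}$ enters and what makes consistent estimation of $\theta^E_t$ possible — and the convexity argument bridging the net statement to arbitrary members of $V$; the two‑sidedness of \Cref{A:bd} is essential, with $c_{\min}$ driving Step 2 and $c_{\max}$ driving Step 1.

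\textbf{Step 3 (assembling).} Substituting $L=1$, $\mu\lesssim c_{\max}\sqrt d$, $\eNtH\lesssim\widetilde O\!\big(d/(c_{\min}\Nexp)\big)$, and $\log(\Nmax H/\delta)\lesssim d\log(\Nexp dH/\delta)$ into \eqref{eq:gfa}, the leading term $H^{3/2}\sqrt{\frac{\mu L\log(\Nmax H/\delta)}{\Nexp}\cdot\frac{\sum_{t=1}^H\eNtH}{H}}$ collapses: the three $d$‑factors under the square root — $d^{1/2}$ from the margin, $d$ from estimation, $d$ from the discriminator covering — combine to $d^{5/4}$, yielding $\sqrt{c_{\max}/c_{\min}}\,H^{3/2}d^{5/4}\log^{3/2}(\Nexp dH/\delta)/\Nexp$ once the polylogarithmic factors in $\eNtH$ and $\log\Nmax$ are tracked, while the residual $\log(\Nmax H/\delta)/\Nexp\lesssim d\log(\Nexp dH/\delta)/\Nexp$ is dominated by it. Finally, rescaling $\delta$ by a universal constant and union‑bounding the $O(1)$ failure events of \Cref{thm:gfa:rep} together with the failure event of Step 2 gives the statement of \Cref{thm:linear}.
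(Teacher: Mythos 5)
Your overall architecture matches the paper's: reduce to the parametric meta-theorem (\Cref{thm:gfa:rep} with $\Fmax$ replaced by the covering number $\Nmax$ via \Cref{corr:linearcov} and \Cref{rem:Fmax-Nmax}), verify Lipschitzness with $L=1$, derive the weak margin condition with $\mu \asymp c_{\max}\sqrt{d}$ from the upper density bound exactly as the paper does (slab measure $\Theta(\eta\sqrt d)$ via the Gamma-function/Gautschi computation), and then assemble the $d^{5/4}\log^{3/2}$ rate, which you track correctly. Where you genuinely diverge is in discharging the classification oracle (\Cref{A:class}). The paper's \Cref{lemma:Etheta} is a two-line reduction: it cites the compression-based multiclass learner of \cite{multiclass} for the generalization-error bound $(d+\log(1/\delta))\log(n)/n$, and then uses \emph{only} the lower density bound to show that disagreement probability is at least $c_{\min}\,\alpha/\pi \ge c_{\min}\|\theta^E_t - \thetaBC_t\|_2/\pi$ (the lune between the two hemispheres has uniform measure $\alpha/\pi$), so a small generalization error forces a small parameter error. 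You instead give a from-scratch version-space analysis: eliminate each direction at angle $\gamma$ with per-sample probability $\gtrsim c_{\min}\gamma$, Chernoff, union-bound over an $\varepsilon$-net of directions, and invoke spherical convexity of the version space. Both yield $\eNtH = \widetilde O(d/(c_{\min}\Nexp))$ and the same final bound. Your route is self-contained and arguably more illuminating about why $c_{\min}$ is what makes consistent parameter recovery possible, but it carries the extra burden you yourself flag: elimination of a net point does not by itself eliminate nearby off-net points, so the bridge from the net statement to $\mathrm{diam}(V)\lesssim \epsilon_n$ needs a relaxed-constraint or monotonicity argument that you have sketched but not closed. The paper's route buys exactly the avoidance of that delicate step, at the cost of importing the compression bound as a black box; note also that the paper's conversion, like yours, needs the density bound only one-sidedly in each direction ($c_{\max}$ for the margin, $c_{\min}$ for estimation), so the two proofs use \Cref{A:bd} in the same way.
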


The proof of this result follows by showing that under \Cref{A:bd}, both the weak margin condition (\Cref{A:wm:repeat}) is satisfied, and the classification oracle (\Cref{A:class}) can be constructed. We begin by showing the former.

\begin{lemma}
Under \Cref{A:bd}, the $\alpha$-weak margin condition is satisfied with $\alpha=1$ and $\mu = 2 c_{\max} \sqrt{d}$. In particular, for all $\theta \in \mathbb{S}^{d-1}$,
\begin{align}
    \mathrm{Pr}_{\dem} \left( \langle \theta , \phi_t (s,a_{s_t}^\theta) \rangle - \max_{a \ne a_{s_t}^\theta} \langle \theta, \phi_t (s,a) \rangle \ge \eta \right) \ge 1 - \left( 2 c_{\max} \sqrt{d} \right) \eta.
\end{align}
where $a_{s_t}^\theta \triangleq \argmax_{a \in \mathcal{A}} \langle \theta , \phi_t (s,a) \rangle$.
\end{lemma}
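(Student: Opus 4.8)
The plan is to reduce the statement to an elementary volume estimate for a thin slab inside a half-ball. Fix $t \in [H]$ and $\theta \in \mathbb{S}^{d-1}$, write $a^\theta_s = \argmax_{a \in \mathcal{A}} \langle \theta, \phi_t(s,a)\rangle$ and, as in \Cref{A:bd}, $\overline{\phi}_t(s,a) = \phi_t(s,a^\theta_s) - \phi_t(s,a)$. First I would rewrite the margin at a state $s_t$ as
\begin{align}
    \langle \theta, \phi_t(s_t, a^\theta_{s_t})\rangle - \max_{a \ne a^\theta_{s_t}} \langle \theta, \phi_t(s_t,a)\rangle = \min_{a \ne a^\theta_{s_t}} \langle \theta, \overline{\phi}_t(s_t,a)\rangle ,
\end{align}
and note every term $\langle \theta, \overline{\phi}_t(s_t,a)\rangle$ is nonnegative because $a^\theta_{s_t}$ is the argmax. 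Hence the event that this margin is below $\eta$ is exactly the event $\exists a \ne a^\theta_{s_t}$ with $\overline{\phi}_t(s_t,a) \in S_\eta$, where $S_\eta = \{ x : 0 \le \langle \theta, x\rangle < \eta \}$ is the slab of width $\eta$ on the positive side of the hyperplane $\theta^\perp$.

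Next I would invoke the bounded density assumption. Each $\overline{\phi}_t(s_t,a)$ is a difference of feature vectors of norm at most $1$ and satisfies $\langle \theta, \overline{\phi}_t(s_t,a)\rangle \ge 0$, so the measure $\mathrm{Pr}_{\dem}(\exists a \ne a^\theta_{s_t} : \overline{\phi}_t(s_t,a) \in \cdot)$ is supported on the half-ball $\mathbb{H}^d_\theta$; by \Cref{A:bd} its density against the uniform measure on $\mathbb{H}^d_\theta$ is at most $c_{\max}$. Therefore
\begin{align}
    \mathrm{Pr}_{\dem}\!\left( \langle \theta, \phi_t(s_t, a^\theta_{s_t})\rangle - \max_{a \ne a^\theta_{s_t}} \langle \theta, \phi_t(s_t,a)\rangle < \eta \right) \le c_{\max} \cdot \frac{\mathrm{Vol}(S_\eta \cap \mathbb{B}_2^d)}{\mathrm{Vol}(\mathbb{H}^d_\theta)} ,
\end{align}
and it remains to bound the volume ratio on the right.

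For the geometry, slicing $\mathbb{B}_2^d$ by hyperplanes orthogonal to $\theta$ gives $\mathrm{Vol}(S_\eta \cap \mathbb{B}_2^d) = \int_0^\eta V_{d-1}(1-u^2)^{(d-1)/2}\,du \le \eta\, V_{d-1}$, where $V_k = \pi^{k/2}/\Gamma(k/2+1)$ is the volume of the unit $L_2$ ball in $\mathbb{R}^k$, while $\mathrm{Vol}(\mathbb{H}^d_\theta) = V_d/2$. Thus the ratio is at most $2\eta\, V_{d-1}/V_d$, and the standard Gamma-ratio bound $\Gamma(x+1)/\Gamma(x+\tfrac12) \le \sqrt{x+\tfrac12}$ (Wendel's inequality), applied at $x = (d-1)/2$, yields $V_{d-1}/V_d = \pi^{-1/2}\,\Gamma(d/2+1)/\Gamma((d+1)/2) \le \sqrt{d}$. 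Combining, $\mathrm{Pr}_{\dem}(\text{margin} < \eta) \le 2 c_{\max}\sqrt{d}\,\eta$, i.e. $\mathrm{Pr}_{\dem}(\text{margin} \ge \eta) \ge 1 - (2 c_{\max}\sqrt{d})\,\eta$, which for $\eta > 1/(2c_{\max}\sqrt d)$ is vacuous; this is precisely the $\alpha$-weak margin condition (\Cref{A:alphawm}) with $\alpha = 1$ and $\mu = 2 c_{\max}\sqrt{d}$.

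The one nonroutine point — the main obstacle — is applying \Cref{A:bd} correctly: one must check that the set function $B \mapsto \mathrm{Pr}_{\dem}(\exists a \ne a^\theta_{s_t} : \overline{\phi}_t(s_t,a) \in B)$ is dominated on the slab $S_\eta$ by the $c_{\max}$-density measure of the assumption (automatic when $|\mathcal{A}| = 2$, and this is how the assumption is to be read in general), and that the reference region is the half-ball with pole $\theta$ rather than $\theta^E_t$ — harmless, since by rotational symmetry the normalized uniform mass of a central width-$\eta$ slab in a half-ball is independent of the pole, and the same $\sqrt d$ estimate holds verbatim if the reference is instead taken to be the spherical hemisphere (spherical-cap volume). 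Everything else is the elementary fact $V_{d-1}/V_d \asymp \sqrt d$.
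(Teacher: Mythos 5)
Your proof is correct and follows essentially the same route as the paper's: reduce the small-margin event to membership of $\overline{\phi}_t(s_t,a)$ in a width-$\eta$ slab of the half-ball, apply the $c_{\max}$ density bound from \Cref{A:bd}, and estimate the slab's relative volume via $V_{d-1}/V_d \lesssim \sqrt{d}$ using a Gamma-ratio (Gautschi/Wendel) inequality. The only difference is that you explicitly flag the mild mismatch between the pole of the reference hemisphere in \Cref{A:bd} and the arbitrary $\theta$ in the claim, which the paper glosses over; your rotational-symmetry remark resolves it the same way the paper implicitly does.
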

\begin{proof}
Observe that,
\begin{align}
    &\mathrm{Pr}_{\dem} \left( \exists a \ne a^\theta_{s_t} : \langle \theta , \phi_t (s_t,a^\theta_{s_t}) \rangle - \langle \theta, \phi_t (s_t,a) \rangle \le \eta \right) \\
    &\quad = \mathrm{Pr}_{\dem} \left( \exists a \ne a^\theta_{s_t} : \phi_t (s_t,a^\theta_{s_t}) - \phi_t (s_t,a) \in \{ x \in \mathbb{H}_\theta^d : \langle x , \theta \rangle \le \eta \} \right) \\
    &\quad \overset{(i)}{=} \mathrm{Pr}_{\dem} \left( \exists a \ne a^\theta_{s_t} : \overline{\phi}_t (s_t,a) \in \{ x \in \mathbb{H}_\theta^d : \langle x , \theta \rangle \le \eta \} \right) \\
    &\quad \overset{(ii)}{\le} c_{\max} \mathrm{Pr} ( \langle U,\theta \rangle \le \eta) \label{eq:0199100}
\end{align}
where in $(i)$, $\overline{\phi}_t$ is as defined in \Cref{A:bd} and in $(ii)$, $U$ is uniformly distributed on the unit hemisphere, $\mathbb{H}^d_\theta$. Note that $(ii)$ follows from the bounded density condition, \Cref{A:bd}. Note that the RHS essentially corresponds to the volume (probability measure) of a disc of height $\eta$ cut out of a sphere from the center. Up to normalization factors, this can be upper bounded by the surface area of the base of the disc, multiplied by the height of the disc. Namely,
\begin{align}
    \frac{\eta \times \frac{\pi^{\frac{d-1}{2}}}{\Gamma \left( \frac{d-1}{2}+1 \right)}}{\frac{1}{2} \frac{\pi^{\frac{d}{2}}}{\Gamma \left( \frac{d}{2}+1 \right)}}
\end{align}
Using Gautschi's inequality, for any $x \ge 0$ and $\ell \in (0,1)$ $x^{1-\ell} \le \frac{\Gamma (x+1)}{\Gamma (x+\ell)} \le (1+x)^{1-\ell}$. With $\ell = \frac{1}{2}$, $\frac{\Gamma(\frac{d}{2}+1)}{\Gamma(\frac{d+1}{2})} \le \sqrt{d}$. Combining with \cref{eq:0199100} results in,
\begin{align}
    \mathrm{Pr}_{\dem} \left( \exists a \ne a^\theta_{s_t} : \langle \theta , \phi_t (s_t,a^\theta_{s_t}) \rangle - \langle \theta, \phi_t (s_t,a) \rangle \le \eta \right) \le 2 c_{\max} \sqrt{d} \eta
\end{align}
Therefore the probability of the complement event is lower bounded by $1 - 2c_{\max} \sqrt{d} \eta$, completing the proof.
\end{proof}

The final thing to show is that the bounded density assumption can also be used to construct a classification oracle in the sense of \Cref{A:class}.

\begin{figure}
  \centering
  \includegraphics[width=0.49\textwidth]{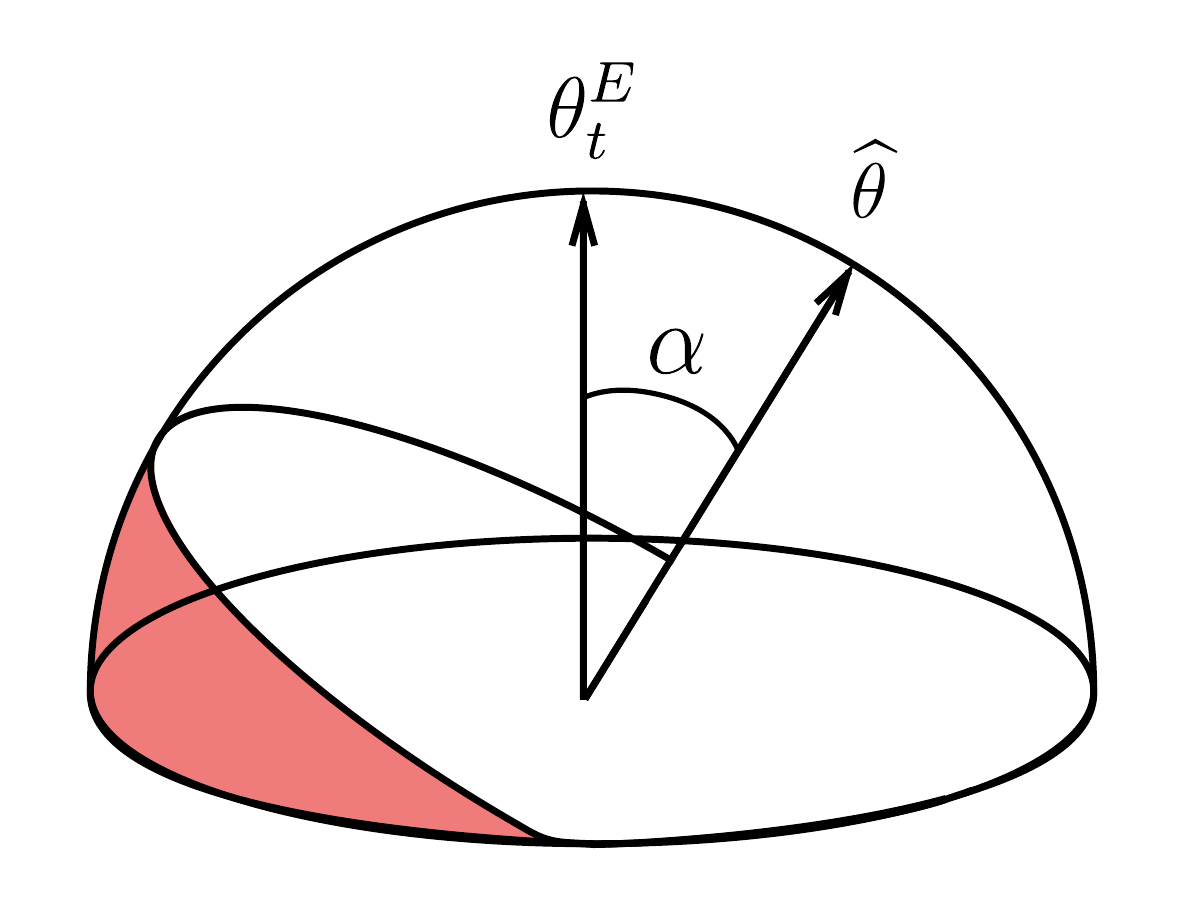}
  \caption{If at any time $t \in [H]$ and state $s$, $\phi_t (s,\dem_t (s)) - \phi_t (s,a)$ lies in the red shaded region for some action $a$, then, the action played by $\piBC$ and $\dem$ at this state are different.}
    \label{fig:disagreement}
\end{figure}

In particular, as discussed in the main paper, we show that algorithms for minimizing the generalization error, can be used to construct a classification oracle. The compression based algorithm of \cite{multiclass} provides a guarantee on the generalization error. From Theorem 5 of~\cite{multiclass}, in the realizable setting, for linear classification, the resulting classifier $\hat{\theta}$ has expected $0$-$1$ loss upper bounded by $(d+\log(1/\delta)) \log(n)/n$, given $n$ classification examples. Namely, in the notation of \Cref{A:class}, the resulting classifier $\hat{\theta}$ satisfies with probability $\ge 1 - \delta$,
\begin{align}
    \mathrm{Pr}_{s \sim \mathcal{D}} \left( \argmax_{a \in \mathcal{A}} f_{\theta^*} (s^i,a) \ne \argmax_{a \in \mathcal{A}} f_{\hat{\theta}} (s^i,a) \right) \le \frac{(d + \log(1/\delta) \log (n)}{n}. \label{eq:generr}
\end{align}
Next we show that under \Cref{A:bd}, this equation can be used to bound the error in the parameter space, $\| \theta^* - \hat{\theta} \|_2$. Namely, in \Cref{A:class}, we may choose $\mathcal{E}_{\mathbb{B}_2^d,n,\delta}$ as $\asymp  \frac{(d + \log(1/\delta) \log (n)}{n}$, up to constants depending on $c_{\min}$.

\begin{lemma} \label{lemma:Etheta}
Consider the compression based learner $\thetaBC_t = \hat{\theta}_t$ of \cite{multiclass} for multi-class linear classification. Then, under \Cref{A:bd}, with probability $\ge 1-\delta$,
\begin{align}
    \| \thetaBC_t - \theta^*_t \|_2 \le \frac{2 \pi}{c_{\min}} \frac{(d + \log(1/\delta) \log (\Nexp)}{\Nexp}
\end{align}
\end{lemma}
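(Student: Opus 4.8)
The plan is to turn the $0$-$1$ generalization guarantee \eqref{eq:generr} into a bound on the angle between $\thetaBC_t$ and $\theta^E_t$, and then into a Euclidean bound. Write $\theta^* = \theta^E_t \in \mathbb{S}^{d-1}$ and $\hat\theta = \thetaBC_t$; since only the direction of the parameter affects the induced linear classifier, we may assume $\hat\theta$ is a unit vector. Let $\gamma = \angle(\theta^*,\hat\theta)\in[0,\pi]$. Because $\|\theta^* - \hat\theta\|_2 = 2\sin(\gamma/2)\le\gamma$, it suffices to show $\gamma \lesssim \frac{1}{c_{\min}}\cdot\frac{(d+\log(1/\delta))\log(\Nexp)}{\Nexp}$.

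The key step is a geometric identification of the set of misclassified states. Fix $s$, let $a^* = \pi^E_t(s) = \argmax_a\langle\theta^*,\phi_t(s,a)\rangle$ (unique up to a null set under \Cref{A:bd}), and recall $\overline{\phi}_t(s,a) = \phi_t(s,a^*) - \phi_t(s,a)$, so $\langle\theta^*,\overline{\phi}_t(s,a)\rangle \ge 0$ for every $a$. Then $\hat\theta$ misclassifies $s$ iff its argmax differs from $a^*$, iff $\langle\hat\theta,\overline{\phi}_t(s,a)\rangle < 0$ for some $a\ne a^*$, i.e. iff $\overline{\phi}_t(s,a)$ lands in the spherical wedge (lune)
\[
    R \;=\; \{\, x\in\mathbb{H}^d_t : \langle\hat\theta, x\rangle < 0\,\},
\]
the region between the hyperplanes normal to $\theta^*$ and to $\hat\theta$ depicted in \figref{fig:disagreement}. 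A short computation on $\mathbb{S}^{d-1}$ (the intersection of two hemispheres whose poles make angle $\pi-\gamma$ is a lune of dihedral angle $\gamma$, of mass $\gamma/(2\pi)$ of the sphere) shows that the uniform measure of $R$ is exactly a $\gamma/\pi$ fraction of the hemisphere $\mathbb{H}^d_t$; one can sanity-check $\gamma=\pi/2$ (a quarter-sphere) and $\gamma=\pi$ (the full hemisphere).

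Combining this identification with the \emph{lower} bound $\overline{d}^{E}_t \ge c_{\min}$ of \Cref{A:bd}, the generalization error of $\hat\theta$ under the expert state distribution at time $t$ obeys
\[
    \mathrm{Pr}_{\dem}\!\left(\textstyle\argmax_a f_{\theta^*}(s_t,a)\ne\argmax_a f_{\hat\theta}(s_t,a)\right) = \mathrm{Pr}_{\dem}\!\left(\exists a\ne a^*: \overline{\phi}_t(s_t,a)\in R\right)\;\ge\; c_{\min}\cdot\frac{\gamma}{\pi}.
\]
On the other hand, \RE trains $\thetaBC_t$ with the compression-based learner of \cite{multiclass} on the $\Theta(\Nexp)$ realizable examples of $D_1$ at time $t$ (realizability being immediate from \Cref{A1}), so by \eqref{eq:generr} the left-hand side is at most $\lesssim\frac{(d+\log(1/\delta))\log(\Nexp)}{\Nexp}$ with probability $\ge 1-\delta$. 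Rearranging yields $\gamma \le \frac{2\pi}{c_{\min}}\cdot\frac{(d+\log(1/\delta))\log(\Nexp)}{\Nexp}$ after absorbing the constant, and the claim follows from $\|\thetaBC_t - \theta^E_t\|_2\le\gamma$. The main obstacle is the geometric bookkeeping in the middle paragraph: verifying that the misclassified set is exactly $\{s:\overline{\phi}_t(s,a)\in R\text{ for some }a\}$ (handling boundary ties, null under \Cref{A:bd}) and pinning down the normalized volume of the lune $R$ tightly enough to land the stated constant. The remaining pieces — normalizing $\hat\theta$, the inequality $2\sin(\gamma/2)\le\gamma$, and plugging in $n=|D_1|=\Theta(\Nexp)$ — are routine.
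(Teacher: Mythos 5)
Your proposal is correct and follows essentially the same route as the paper: both identify the misclassified states with the lune $\mathcal{C}=\{x\in\mathbb{H}^d_t:\langle \thetaBC_t,x\rangle\le 0\}$, compute its uniform measure as an $\alpha/\pi$ fraction of the hemisphere, lower-bound the generalization error by $c_{\min}\alpha/\pi$ via \Cref{A:bd}, and combine with the $0$-$1$ guarantee of \cite{multiclass} and the chord-versus-angle inequality $\|\theta^*-\thetaBC_t\|_2\le\alpha$. Your extra care with normalizing $\hat\theta$ and with boundary ties is a minor tightening of the same argument, not a different approach.
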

\begin{proof}
Fix $t \in [H]$. The generalization error of $\thetaBC_t = \hat{\theta}_t$ can be written as,
\begin{align}
    &\mathrm{Pr}_{\dem} \left( \argmax_{a \in \mathcal{A}} \langle \theta^*_t, \phi_t (s_t,a) \rangle \ne \argmax_{a \in \mathcal{A}} \langle \thetaBC_t, \phi_t (s_t,a) \rangle \right) \nonumber \\
    &\quad = \mathrm{Pr}_{\dem} \left( \exists a \ne \dem_t (s_t) : \phi_t (s_t, \dem_t (s_t)) - \phi_t (s_t,a) \in \mathcal{C} \right), \label{eq:kkll}
\end{align}
where $\mathcal{C}$ is illustrated in \cref{fig:disagreement} and is formally defined as,
\begin{align}
    \mathcal{C} \triangleq \{ x \in \mathbb{H}^d_t : \langle x, \thetaBC_t \rangle \le 0 \}.
\end{align}
On the states which ``belong'' to $\mathcal{C}$ (i.e. at those states $s$ where $\exists a \ne \dem_t (s_t) : \phi_t (s, \dem_t (s_t)) - \phi_t (s,a) \in \mathcal{C}$), there exists an action $a$ such that $\thetaBC_t$ is better correlated with this action than $a^*_s$. In other words, $\thetaBC_t$ and $\theta^*$ play different actions at this state. Note that $\mathcal{C}$ is essentially the set difference of two hemispheres with different poles. By the bounded density condition, \Cref{A:bd}, and \cref{eq:kkll},
\begin{align}
    \mathrm{Pr}_{\dem} \left( \dem_t (s_t) \ne \argmax_{a \in \mathcal{A}} \langle \thetaBC_t, \phi (s,a) \rangle \right) \ge c_{\min} \mathrm{Pr} \left( U \in \mathcal{C} \right), \label{eq:orpp}
\end{align}
where $U$ is uniformly distributed over $\mathbb{H}^d_\theta$. Referring to \cref{fig:disagreement}, we have that,
\begin{align}
    \mathrm{Pr} \left( U \in \mathcal{C} \right) = \frac{\alpha}{\pi}
\end{align}
where $\alpha$ is the angle between $\thetaBC_t$ and $\theta_t^E$. In particular, from \cref{eq:orpp},
\begin{align}
    \mathrm{Pr}_{\dem} \left( a^*_s \ne \argmax_{a \in \mathcal{A}} \langle \thetaBC_t, \phi (s,a) \rangle \right) \ge c_{\min} \frac{\alpha}{\pi} \ge c_{\min} \frac{\| \theta^* - \thetaBC_t \|_2}{\pi},
\end{align}
where in the last inequality, we use the fact that $\| \theta^* \|_2 = \| \thetaBC_t \|_2 = 1$ without loss of generality. By the generalization error bound on $\thetaBC_t = \hat{\theta}_t$ in \cref{eq:generr}, with probability $\ge 1-\delta$,
\begin{align}
    \| \theta^* - \thetaBC_t \|_2 \le \frac{\pi}{c_{\min}}\frac{(d + \log(1/\delta) \log (\Nexp)}{\Nexp}
\end{align}
\end{proof}

\Cref{lemma:Etheta} shows that under the bounded density condition \Cref{A:bd}, the compression based learner $\hat{\theta}$ of \cite{multiclass} essentially induces a classification oracle for linear classification with $\mathcal{E}_{\mathbb{B}_2^d,n,\delta} = \frac{\pi}{c_{\min}}\frac{(d + \log(1/\delta) \log (n)}{n}$. Finally, from \Cref{corr:linearcov}, we have a bound on the covering number of linear families. Putting together all of these results with \Cref{thm:gfa} (noting that we can replace $\Fmax$ by $\Nmax$ from \Cref{rem:Fmax-Nmax}) results in \Cref{thm:linear}.

\section{Additional experimental results}
\label{app:more_res}

We include some additional experimental results in this section of the paper.

In \cref{fig:pdists}, we plot the distributions of the prefix weights generated by each membership oracle on simulated \BC rollouts on WalkerBulletEnv. Note that $\mathcal{M}_{\texttt{VAR}}$ is significantly overconfident in prefix weights compared to $\mathcal{M}_{\texttt{EXP}}$, as indicated by the heavier right-tail. On the other hand, $\mathcal{M}_{\texttt{RND}}$ and $\mathcal{M}_{\texttt{MAX}}$ are less overconfident and better overlap with the idealized prefix weights induced by $\mathcal{M}_{\texttt{EXP}}$. This aligns with the correlation plot between the various membership oracles in \figref{fig:res}. Moreover, in terms of policy performance, this further justifies the superior behavior of $\mathcal{M}_{\texttt{MAX}}$ compared to $\mathcal{M}_{\texttt{VAR}}$.

\begin{figure}[!h]
    \centering
    \includegraphics[width=0.3\textwidth]{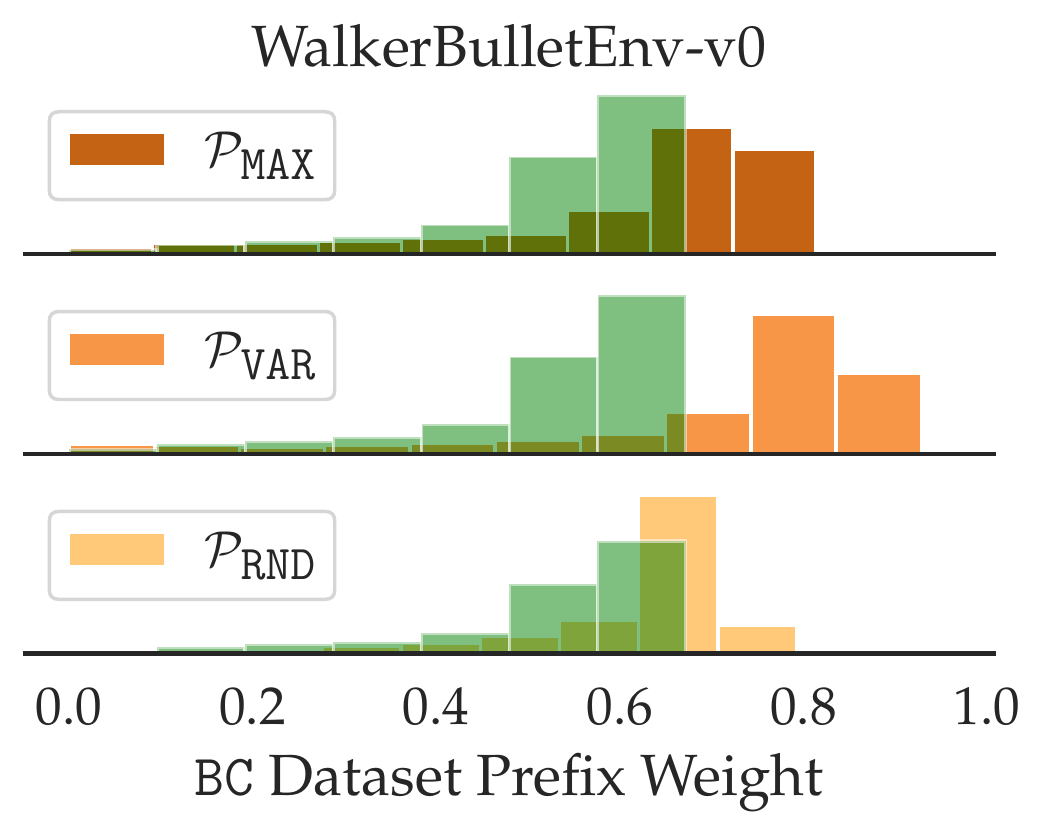}
    \caption{Histogram of prefix weights generated by rolling out trajectories from \BC. The green superimposed histogram represents prefix weights generated by $\mathcal{M}_{\texttt{EXP}}$}
    \label{fig:pdists}
\end{figure}

In \figref{fig:ablation}, we consider how each of the changes we described in the main section of the paper lead to improved performance of our \RE baseline. The first, using a Wasserstein distance, leads to lower expected return but is required for solving the full moment-matching problem -- see \citet{swamy2021moments} for more details. Switching from PPO to the more sample-efficient SAC \citep{haarnoja2018soft} leads to fast learning. Adding in gradient penalties for discriminator stability \citep{swamy2021moments, gulrajani2017improved} also improves final performance and learning speed. The last change we employ, using Optimistic Mirror Descent \citep{daskalakis2017training} in both the discriminator and RL algorithm also (slightly) improves performance. To our knowledge, we are the first to utilize this technique in the imitation learning literature and reccomend it as best practice for future moment-matching algorithms. We refer interested readers to the work of \citet{syrgkanis2015fast} for theoretical details of why OMD enables superior performance.

\begin{figure}[h]
    \centering
    \includegraphics[width=0.6\textwidth]{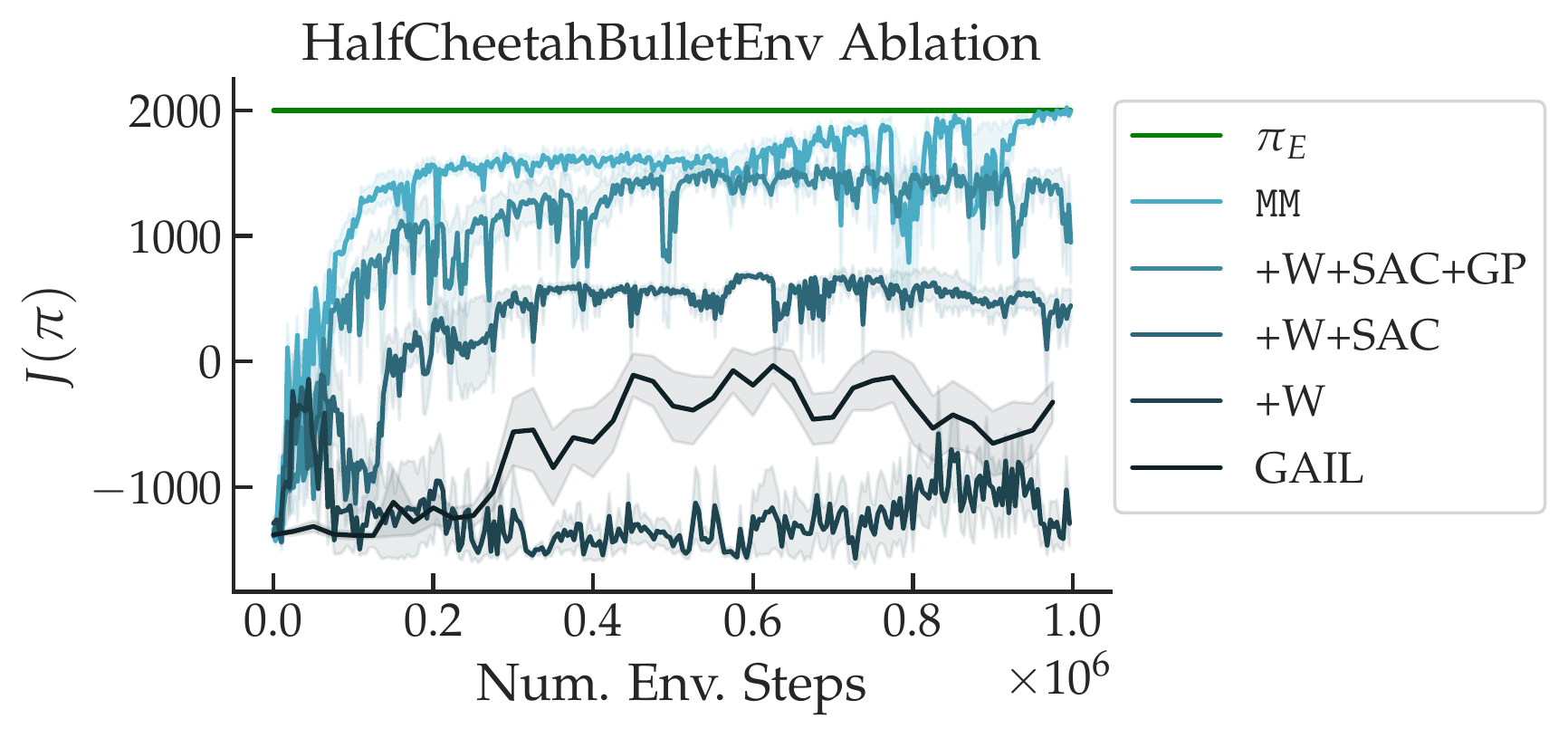}
    \caption{We ablate the four key changes we made to off-the-shelf GAIL to improve performance / theoretical guarantees. We see that each improved performance, with \MM significantly out-performing options with fewer changes. Our improvements upon \MM with the Replay Estimation technique are therefore improving upon an already strong baseline.}
    \label{fig:ablation}
\end{figure}

\newpage

\section{Experimental Setup}

We begin with the hyperparameters for our Standard Bullet and Noisy Bullet experiments.

\subsection{Expert}
We use the Stable Baselines 3 \citep{stable-baselines3} implementation of PPO \citep{DBLP:journals/corr/SchulmanWDRK17} or SAC \citep{haarnoja2018soft} to train experts for each environment. For the most part, we use already tuned hyperparameters from \citep{rl-zoo3} in the implementation. The modifications we used are are shown in table \ref{table:stbsln3params}.

\begin{table}[h]
\begin{center}
\begin{small}
\begin{sc}
\setlength{\tabcolsep}{2pt}
\begin{tabular}{lccccccccccr}
\toprule
 Parameter & Value \\
\midrule
 buffer size & 300000 \\
 batch size & 256 \\
 $\gamma$ & 0.98 \\
 $\tau$ & 0.02 \\
 Training Freq. & 64 \\
 Gradient Steps & 64 \\
 Learning Rate & Lin. Sched. 7.3e-4 \\
 policy architecture & 256 x 2 \\
 state-dependent exploration & true \\
 training timesteps & 1e6 \\
\bottomrule
\end{tabular}
\end{sc}
\end{small}
\end{center}
\caption{\label{table:stbsln3params} Expert hyperparameters for Walker Bullet Task and Hopper Bullet Task}
\end{table}

\subsubsection{Noisy Experts}
In addition to the default Bullet Tasks, we test performance of algorithms on noisy environments. Namely, we generate noisy expert data by re-training expert policies with Gaussian noise added to the actions of the expert during the exploration phase while training. We then re-generate expert data by sampling from the expert policies trained on noisy data to analyze the performance of our method under stochasticity. Table \ref{table:noisyexpert} lists the standard deviation of the (i.i.d.) noise we applied to the actions in the different environments.

\begin{table}[h]
\begin{center}
\begin{small}
\begin{sc}
\setlength{\tabcolsep}{2pt}
\begin{tabular}{lcccr}
\toprule
 env. & Noise Distribution. \\
\midrule
 hopper & $\mathcal{N}(0, 0.1)$ \\
 walker & $\mathcal{N}(0, 0.5)$\\
\bottomrule
\end{tabular}
\end{sc}
\end{small}
\end{center}
\caption{\label{table:noisyexpert} Noise we applied to all policies in each environment.}
\end{table}

\subsection{Baseline}
We average over 5 runs and use a common architecture of 256 x 2 with ReLU activations for both our method and the \texttt{MM} baseline we compare against. For each datapoint, the cumulative reward is averaged over 10 trajectories. For all tasks, we train on $\{6, 12, 18\}$ expert trajectories with a maximum of 400k iterations of the optimization procedure. Table \ref{table:learnerparams} shows the hyperparameters we used for \texttt{MM}. Empirically, smaller learning rates, large batch sizes, and gradient penalties were critical for the stable convergence of our method.

\begin{table}[ht]
\begin{center}
\begin{small}
\begin{sc}
\setlength{\tabcolsep}{2pt}
\begin{tabular}{lcccccr}
\toprule
 Parameter & Value \\
\midrule
 Batch Size & 2048* \\
 Learning Rate & Linear Schedule of 8e-3* \\
 $f$ Update Freq. & 5000 \\
 $f$ gradient target & 0.4 \\
 $f$ gradient penalty weight & 10 \\
\bottomrule
\end{tabular}
\end{sc}
\end{small}
\end{center}
\caption{\label{table:learnerparams} Learner hyperparameters for \MM. * indicates the parameter was different for the Hopper Initial State shift experiments (4096 for batch size and Linear Schedule of 8e-4, respectively.).}
\end{table}

We note that \texttt{MM} requires careful tuning of \textsc{$f$ Update Freq.} for strong performance. We searched over step sizes of $\{ 1250, 2500, 5000 \}$ and selected the one which achieved the most stable updates. In practice, we recommend evaluating a trained policy on a validation set to set this parameter.

We also used similar parameters for training SAC, also from the Stable Baselines 3 \citep{stable-baselines3} implementation, as we did for training the expert policy. Table \ref{table:sacparams} shows the choice of hyperparmeters we used for training SAC. We directly added in the optimistic mirror descent optimizers \citep{daskalakis2017training} for both the critic and actor objectives of SAC.

\begin{table}[ht]
\begin{center}
\begin{small}
\begin{sc}
\setlength{\tabcolsep}{2pt}
\begin{tabular}{lcccccr}
\toprule
 Parameter & Value \\
\midrule
 $\gamma$ & 0.98 \\
 $\tau$ & 0.02 \\
 Training Freq. & 64 \\
 Gradient Steps & 64 \\
 Learning Rate & Linear Schedule of 7.3e-4 \\
 policy architecture & 256 x 2 \\
\bottomrule
\end{tabular}
\end{sc}
\end{small}
\end{center}
 \caption{\label{table:sacparams} Leaning hyperparameters for the SAC component of \MM}
\end{table}

Table \ref{table:bcparams} shows the learning hyperparameters for any \BC policies used for generating simulated data for the membership oracles. Table \ref{table:trainingsteps} shows the number of training steps per task we used for both the baseline and our method.

\begin{table}[ht]
\begin{center}
\begin{small}
\begin{sc}
\setlength{\tabcolsep}{2pt}
\begin{tabular}{lcccr}
\toprule
 Parameter & Value \\
\midrule
 entropy weight & 0 \\
 l2 weight & 0 \\
 training timesteps & 1e5 \\
\bottomrule
\end{tabular}
\end{sc}
\end{small}
\end{center}
\caption{\label{table:bcparams} Learner hyperparameters for Behavioral Cloning}
\end{table}

\begin{table}[ht]
\begin{center}
\begin{small}
\begin{sc}
\setlength{\tabcolsep}{2pt}
\begin{tabular}{lcccr}
\toprule
 env. & training steps \\
\midrule
 walker (no noise) & 400000 \\
 walker (with noise) & 400000 \\
 hopper (no noise) & 400000 \\
 hopper (with noise) & 400000 \\
\bottomrule
\end{tabular}
\end{sc}
\end{small}
\end{center}
\caption{\label{table:trainingsteps} Number of training steps for the different tasks}
\end{table}

\subsection{Our Algorithm}
In this section, we use \textbf{bold text} to highlight sensitive hyperparameters. We use the same network architecture choices as the  \texttt{MM} baseline. For all environments, we generated 100 trajectories of simulated behavior cloning data to use with our method.


For all tasks, we rolled out \textbf{100 trajectories} from a \BC trained network to use with our membership oracle. Table \ref{table:datasplit} shows how we partitioned our dataset between the \BC training set and the expert membership oracle dataset. We also use the full dataset for moment matching, not just $D_2$, as we found this lead to slightly better performance.

\begin{table}[!h]
\begin{center}
\begin{small}
\begin{sc}
\setlength{\tabcolsep}{2pt}
\begin{tabular}{lcccr}
\toprule
 Expert Size & $D_1$ & $D_2$ \\
\midrule
 6 trajs & 4 & 2 \\
 12 trajs & 10 & 2 \\
 18 trajs & 16 & 2 \\
\bottomrule
\end{tabular}
\end{sc}
\end{small}
\end{center}
\caption{\label{table:datasplit} Partition of trajectories into $D_1$ and $D_2$ based on the number of expert trajectories provided. For the Noisy Walker experiments, we used $5, 10, 14$ trajectories for $D_1$ instead of the above.}
\end{table}

\subsection{Membership Oracle Parameters}
For both $\mathcal{M}_{\texttt{VAR}}$ and $\mathcal{M}_{\texttt{MAX}}$, we use {5 \BC networks} in the ensemble. 
We followed the exact same parameters in Table \ref{table:bcparams} to train each \BC imitator.
\begin{table}[!h]
\begin{center}
\begin{small}
\begin{sc}
\setlength{\tabcolsep}{2pt}
\begin{tabular}{lccccr}
\toprule
 env & parameter & $\mathcal{M}_{EXP}$ & $\mathcal{M}_{\texttt{RND}}$ & $\mathcal{M}_{\texttt{VAR}}$ & $\mathcal{M}_{\texttt{MAX}}$ \\
\midrule
 walker & $\beta$ & 0.1 & 0.1 & 0.01 & 0.1\\
 walker & $\mu$ & 0.33 & 0.22 & 0.015 & 0.35\\
 hopper & $\beta$ & 0.8 & 0.25 & 0.08 & 0.1\\
 hopper & $\mu$ & 0.68 & 0.4 & 0.05 & 0.25\\
\bottomrule
\end{tabular}
\end{sc}
\end{small}
\end{center}
\caption{\label{table:betamuparams} Membership oracle hyperparameters across different environments}
\end{table}
Table \ref{table:betamuparams} shows the choice of $\mu$ and $\beta$ values we used for each membership oracle.


\subsection{Initial State Shift Experiments}
For these experiments, we used demonstrations generated by an expert trained on the standard Bullet tasks but subject the learner (both at train and test time) to a initial velocity perturbation of a zero-mean Gaussian with variance ($\sigma=1e-7$). We refer interested readers to our code for our precise method of injecting noise as we believe it might be of interest for future experiments. We note that in all the demonstrations, we see the expert start from rest. Despite this relatively small shift, we see \BC performance drop significantly, as is characteristic of real-world problems where it significantly under-performs on-policy IL methods. All results are averaged over five seeds.

For all environments, we train \BC for 1e5 steps (as well as for the query policies for \RE).

For \RE, we train $5$ policies and use the max-distance approximate membership oracle. We use the above parameters for \MM for our base moment-matcher.



\begin{table}[!h]
\begin{center}
\begin{small}
\begin{sc}
\setlength{\tabcolsep}{2pt}
\begin{tabular}{lccccr}
\toprule
 env & parameter & $\mathcal{M}_{\texttt{MAX}}$ \\
\midrule
 walker & $\beta$ & 0.01\\
 walker & $\mu$ & 0.0001\\
 hopper & $\beta$ & 0.01\\
 hopper & $\mu$ & 0.0001\\
\bottomrule
\end{tabular}
\end{sc}
\end{small}
\end{center}
\caption{\label{table:betamuparams2} Membership oracle hyperparameters across different initial state shift environments.}
\end{table}

\end{document}